\PassOptionsToPackage{hyphens}{url}
\documentclass[a4paper,fleqn]{cas-sc}

\usepackage[authoryear,longnamesfirst]{natbib}
\usepackage{tabularx}
\usepackage{placeins}
\usepackage{needspace}

\usepackage{amsthm}
\usepackage{mathtools}
\usepackage{algpseudocode}
\usepackage{microtype}
\graphicspath{{seam_assets/}}

\definecolor{seamCodeInk}{HTML}{1F2933}
\definecolor{seamCodeKeyword}{HTML}{1F4E79}
\definecolor{seamCodeBuiltin}{HTML}{0B6E69}
\definecolor{seamCodeString}{HTML}{A14F03}
\definecolor{seamCodeNumber}{HTML}{697783}

\makeatletter
\newcounter{algorithm}
\renewcommand{\thealgorithm}{\arabic{algorithm}}
\newcommand{\fps@algorithm}{t!}
\newcommand{\ftype@algorithm}{4}
\newcommand{\ext@algorithm}{loa}
\newcommand{\fnum@algorithm}{Algorithm~\thealgorithm}
\newcommand{\seam@algorithmrule}[1]{%
  \par\nointerlineskip\hrule height #1 depth 0pt\relax}
\long\def\seam@makealgorithmcaption#1#2{%
  \noindent{\bfseries #1:}\space #2\par
  \kern 2pt\seam@algorithmrule{0.4pt}\kern 2pt}
\newenvironment{algorithm}[1][t!]
  {\@float{algorithm}[#1]%
   \let\@makecaption\seam@makealgorithmcaption
   \seam@algorithmrule{0.8pt}\kern 2pt}
  {\par\kern 2pt\seam@algorithmrule{0.4pt}\end@float}
\makeatother

\newcommand{\R}{\mathbb{R}}
\DeclareMathOperator{\im}{im}
\DeclareMathOperator{\rank}{rank}
\DeclareMathOperator{\Unif}{Unif}
\newcommand{\dz}{d^{0}}
\newcommand{\Fui}{F(U_i)}
\newcommand{\Fuij}{F(U_{ij})}
\newcommand{\norm}[1]{\left\|#1\right\|}
\newcommand{\pinv}{{^{+}}}
\newcommand{\Id}{\mathrm{I}}
\newcommand{\cover}{\mathcal{U}}
\newcommand{\nerve}{\mathcal{N}_{\cover}}
\newcommand{\bigO}{\mathcal{O}}
\newcommand{\ie}{\textit{i.e.}}
\newcommand{\onehalf}{\tfrac{1}{2}}
\renewcommand{\eqref}[1]{\textup{Eq.~\ref{#1}}}
\newcommand{\Domain}{\mathcal{X}}        
\newcommand{\Nblind}{\mathcal{B}}        
\newcommand{\Iobs}{\mathcal{I}}          
\DeclareMathOperator{\rowsp}{row}          
\usepackage{tikz}
\usetikzlibrary{positioning,arrows.meta,fit,backgrounds}

\theoremstyle{plain}
\newtheorem{theorem}{Theorem}[section]
\newtheorem{lemma}[theorem]{Lemma}
\newtheorem{corollary}[theorem]{Corollary}
\newtheorem{proposition}[theorem]{Proposition}
\newtheorem{seamtheorem}{Theorem}

\newtheorem*{assumptionAzero}{Assumption A0}
\newtheorem*{theorem*}{Theorem}
\newtheorem*{proposition*}{Proposition}
\newtheorem*{corollary*}{Corollary}

\theoremstyle{definition}
\newtheorem{definition}[theorem]{Definition}

\theoremstyle{remark}
\newtheorem{remark}[theorem]{Remark}

\ExplSyntaxOn
\cs_set:Npn \__first_footerline:
{
  \group_begin:
  \small
  \sffamily
  \ifnum\theblind>0\relax
  \else
    \__short_authors: :~
  \fi
  { \rmfamily \itshape Preprint }
  \group_end:
}
\ExplSyntaxOff

\begin{document}
\let\WriteBookmarks\relax
\let\printorcid\relax

\shorttitle{Global consistency beyond local accuracy}
\shortauthors{G.L.R. N'guessan and B.J. Kim}

\title[mode=title]{SEAM: Global consistency beyond local accuracy in scientific machine learning}
\hypersetup{ pdftitle={SEAM: Global consistency beyond local accuracy in scientific machine learning}, pdfauthor={Gnankan Landry Regis N'guessan and Bum Jun Kim}, pdfkeywords={scientific machine learning, cellular sheaves, explanation-admissibility, obstruction-guided diagnosis, identifiability, model monitoring}}

\author[1,2,3]{Gnankan Landry Regis N'guessan}
\ead{rnguessan@aimsric.org}
\credit{Conceptualization, Methodology, Software, Formal analysis, Investigation, Data curation, Validation, Visualization, Writing -- Original draft, Writing -- Review and editing}

\author[4]{Bum Jun Kim}
\cormark[1]
\ead{bumjun.kim@weblab.t.u-tokyo.ac.jp}
\credit{Formal analysis, Visualization, Supervision, Validation, Writing -- Review and editing}

\affiliation[1]{op={},organization={Axiom Research Group}}
\affiliation[2]{organization={Department of Applied Mathematics and Computational Science, The Nelson Mandela African Institution of Science and Technology}, addressline={404 Nganana, Kikwe, Arumeru, P.O. Box 447}, city={Arusha}, postcode={23311}, country={Tanzania}}
\affiliation[3]{organization={African Institute for Mathematical Sciences, Research and Innovation Centre}, addressline={KN 3 Road, Gasharu Cell, Kicukiro Sector, Kicukiro District}, city={Kigali}, country={Rwanda}}
\affiliation[4]{organization={Graduate School of Engineering, The University of Tokyo}, addressline={7-3-1 Hongo, Bunkyo-ku}, city={Tokyo}, postcode={113-8656}, country={Japan}}

\cortext[cor1]{Corresponding author}

\begin{abstract}
	Scientific machine learning commonly validates models at the level of a subdomain, a benchmark split, or an explanation for one prediction. Yet such local checks cannot establish whether the resulting explanations can be assembled into one globally admissible explanation. We introduce Scientific Explanation-Admissibility Machines (SEAM), a generator-agnostic framework that makes this local-to-global consistency question computable across regions, sensors, regimes, and model components. The finite explanation-sheaf instantiation SEAM-$\Omega$ represents each region by a structured explanation with state, closure, and observation channels together with optional contract metadata; compares neighboring explanations on their overlaps; and converts disagreement into a channel-resolved obstruction. This obstruction locates inconsistency and tests competing declared accounts by restricting each repair to the revisions that one account permits. Exact feasibility refutes or retains an account; when exact repair is unavailable, residual-aware regularized records provide a separately labeled empirical attribution. The framework also separates inconsistency from non-identifiability and monitors learned generators under distribution shift. We establish theorems for minimum-cost intervention and conservation-contract detectability, together with companion results for identifiability and closure recoverability. Across nineteen experiments involving synthetic partial differential equation systems; out-of-distribution Fourier neural operator (FNO) monitoring; four open datasets spanning traffic, hydrology, air quality, and electric power; and synthetic financial and industrial systems, SEAM detects incompatible explanations even when local predictions are accurate, and attributes failures to specific channels and overlaps. SEAM adds a global explanation-consistency audit to existing solvers and learning models, testing whether their local explanations form a coherent scientific account.
\end{abstract}

\begin{keywords}
	scientific machine learning \sep cellular sheaves \sep explanation-admissibility \sep obstruction-guided diagnosis \sep identifiability \sep model monitoring
\end{keywords}

\maketitle

\section{Introduction}
\label{sec:intro}

Scientific models, whether physics-based or data-driven, are commonly judged by what happens inside one region of a problem. For example, a computational fluid dynamics solver is validated on a subdomain where reference data are available. A neural operator is evaluated on a benchmark split. A regional regression model is tested on a held-out fold of the local time series. Each such local assessment is made without testing whether the same system's explanations remain consistent across neighboring regions, different sensors, adjacent regimes, or solver interfaces.

These local checks therefore do not establish whether explanations from different regions form a coherent account of the full system. Those explanations must agree across regimes, scales, sensors, boundary interfaces, and modeling assumptions. A family of individually plausible explanations can be scientifically inadmissible: Every piece looks correct, while the whole cannot be glued into a coherent account. The same pattern appears in several routine scientific workflows. Two regional partial differential equation (PDE) solvers can independently pass local residual tests yet disagree at their shared boundary. Two seasonal demand models can fit their respective seasons yet contradict each other at the transition. An industrial monitoring system can have one zone whose drifted sensors lead its regional model to infer a source term that a neighboring zone's cleaner sensors contradict. A learned neural operator can score well on in-distribution test sets yet produce explanations that no finite-difference solver could ever produce. The central operational problem is therefore to determine when local scientific explanations can be assembled into one globally admissible explanation and, when assembly fails, how to diagnose the incompatibility and which repairs are feasible.

Answering the second half of that question is a hypothesis-testing problem rather than a scoring problem. Consider three neighboring zones of a monitored system whose regional models disagree on their shared overlaps. A practitioner facing that disagreement is rarely short of candidate explanations; the difficulty is that several explanations are available at once, and each one is consistent with every measurement taken so far. One account holds that the sensors feeding one zone have drifted, so the predicted states are right and the readings are wrong. A competing account holds that the sensors are sound, and the assumed physics of that zone omits a source term, so the readings are right and the closure is wrong. A third account holds that all three zones are individually sound and were merely calibrated on data that never constrained their common interfaces. Each account is a scientific hypothesis about the cause of the disagreement, and no regional residual or held-out score distinguishes these hypotheses, because each of those quantities is computed inside a single zone, whereas the accounts differ in their assertions about the interface between zones.

The Scientific Explanation-Admissibility Machines (SEAM) framework separates such accounts by making each account computable. A hypothesis about the cause of a disagreement is at the same time a statement about which parts of the explanations one is prepared to revise, and the framework formalizes the hypothesis through that statement: The drifted-sensor account permits revision of the sensor readings alone, the missing-physics account permits revision of the assumed closure alone, and the interface-calibration account permits revision of the predicted states. Confining a repair to the revisions that one hypothesis permits makes the hypothesis testable. Either some permitted revision removes the entire disagreement, in which case the hypothesis survives and carries a price, namely the cost of the cheapest revision that works; or no permitted revision removes that disagreement, in which case the overlap evidence refutes the hypothesis, and the irreducible remainder measures how badly that hypothesis fails. Costs quantify the repair burden within each surviving account; those costs are not used to rank an unrestricted repair against a more specific account. The test is auditable because the permitted revisions, cost metric, and remainder are all recorded. Figure~\ref{fig:overview} traces one such audit end to end, from the regional records through the overlap comparison and the channel-wise diagnosis to the single surviving account.

\begin{figure}[pos=t!]
	\centering
	\includegraphics[width=\linewidth]{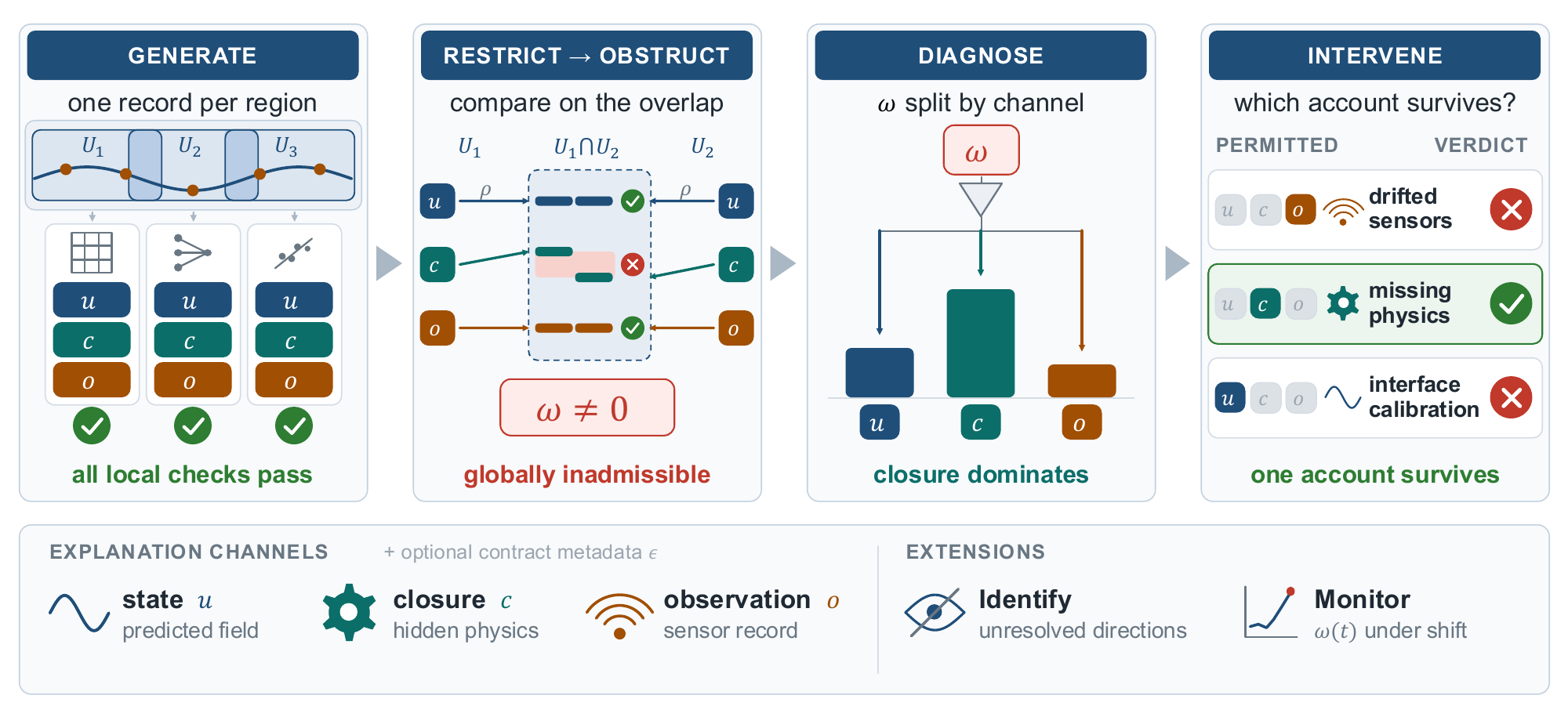}
	\caption{Overview of a SEAM audit, read from left to right. Each region $U_{i}$ of the cover carries one structured explanation record, emitted by its own backend and partitioned into the state, closure, and observation channels $u$, $c$, and $o$, together with optional contract metadata $\epsilon$; every record passes its own local check. Restriction compares two records on the overlap, and the disagreement that survives is the obstruction $\omega \neq 0$, so the family is globally inadmissible, although no local check failed. Diagnosis splits $\omega$ by channel and localizes the defect, here to the closure channel. Intervention states the drifted-sensor, missing-physics, and interface-calibration accounts as budgets that name the revisions each one permits, and only the account whose permitted revisions remove the entire obstruction survives. The Identify and Monitor extensions record unresolved admissible directions and track $\omega(t)$ under shift. The figure is schematic.}
	\label{fig:overview}
\end{figure}

A catalog of specific hypotheses can also fail as a whole, when every proper budget is refuted and only unrestricted revision of everything at once restores agreement. SEAM then records the attribution as unresolved instead of treating the unrestricted fallback as a causal account. The identifiability analysis separately records explanation directions that satisfy every overlap constraint and yet leave no trace in the current observations.

Current scientific machine learning toolchains address important parts of this problem. Physics-informed neural networks (PINNs) embed PDE residuals in training losses \citep{raissi2019physics}, whereas universal differential equations place trainable components inside differential-equation models \citep{rackauckas2020universal}. Neural operators learn maps between input and solution function spaces \citep{kovachki2023neural,lu2021learning}. Domain-decomposed PINN variants are important exceptions to the single-domain pattern: Conservative PINNs (cPINNs) and extended PINNs (XPINNs) impose solution or flux compatibility at subdomain interfaces, while finite-basis PINNs (FBPINNs) combine networks supported on overlapping subdomains \citep{jagtap2020conservative,jagtap2020extended, moseley2023finite}. The cPINN and XPINN formulations also support distributed execution on central processing units (CPUs) and graphics processing units (GPUs), as implemented by \citet{shukla2021parallel}. Domain-decomposed PINNs construct coupled PDE solutions, and knowledge-integration surveys organize mechanisms for incorporating scientific knowledge into learning \citep{willard2022integrating}. SEAM complements these approaches by auditing the compatibility of the structured regional records that their generators emit and by testing and comparing admissibility-restoring hypotheses. Relative to interface-residual and sheaf-consistency methods, SEAM integrates named scientific channels with residual-aware budgeted hypothesis evaluation, identifiability, and streaming monitoring.

\subsection{Explanation-admissibility and SEAM}

We treat explanation-admissibility as a local-to-global problem in the sense of sheaf theory \citep{curry2014sheaves,ghrist2014elementary,robinson2014topological}. A family of structured local explanations is admissible when neighboring restrictions agree on every overlap. SEAM-$\Omega$ stacks the regional records into $s\in C^0(F)$, assembles the restriction maps into a coboundary matrix $D$, and reports the resulting defect $\omega=Ds$ by channel and overlap. Section~\ref{sec:framework} gives the formal construction, and Appendix~\ref{app:notation} collects the symbols used throughout.

The SEAM paradigm has five stages: Generate, Restrict, Obstruct, Diagnose, and Intervene. An identifiability extension records what the current observations cannot determine, while a monitoring extension tracks $\omega$ under streaming inputs.

Throughout this paper, a backend is an implementation of a local generator $G_i$, such as an analytic rule, a finite-volume solver, a regression model, or a learned operator. Once the backend output has been mapped to the common channel schema, the downstream SEAM audit is unchanged.

\paragraph{SEAM-$\Omega$.}
SEAM-$\Omega$ instantiates this paradigm with a finite explanation sheaf $F$. The stalks of $F$ carry three primary diagnostic channels for the state $u$, closure $c$, and observation $o$, with an optional contract-metadata block $\epsilon$. The restriction maps are hand-crafted linear maps to overlap stalks, the coboundary is the matrix $D$ assembled from those restrictions, and the admissibility defect $\omega=Ds$ is computed in closed form. The central object of SEAM-$\Omega$ is the 1-cochain $\omega$ on the overlaps, whose decomposition and budgeted preimages define the diagnostic beyond any individual predicted state or learned residual.

The channel decomposition localizes disagreement in predicted states, hidden physics, or sensor records; optional contract metadata can be audited when an application supplies a nonzero restriction for that metadata block. For a declared intervention projector $P$, exact-hard repair is feasible when the defect lies in the range of the restricted coboundary, $\omega\in\im(DP)$; a feasible minimum-cost repair prices a surviving hypothesis, whereas an infeasible budget is reported with its irreducible residual. Regularized-soft records provide a separate empirical attribution used when the single-channel hard problems cannot fit the full defect. The blind admissible subspace $\Nblind=\ker D\cap\ker O$ records admissible directions invisible to the current observations, and $\omega(t)$ supplies the streaming monitoring signal.

\subsection{Contributions and paper organization}

The paper makes four contributions.

\paragraph{Contribution 1: explanation-admissibility as a new scientific object.}
Explanation-admissibility is the local-to-global gluing of structured local explanations, a scientific object distinct from local accuracy and residual satisfaction.

\paragraph{Contribution 2: the SEAM paradigm.}
The five-stage pipeline sketched above, together with identifiability and monitoring extensions, defines a reusable design pattern for scientific systems that report whether explanations can be globally assembled.

\paragraph{Contribution 3: the finite explanation-sheaf instantiation SEAM-$\Omega$.}
We give a concrete finite-dimensional linear instantiation of the paradigm using vector-space stalks and block-diagonal linear restrictions on the 1-skeleton of the cover's nerve. This instantiation yields two central theorems on minimum-cost budgeted intervention and conservation-contract detectability, with an identifiability proposition and a closure-recovery corollary.

\paragraph{Contribution 4: computational evidence.}
Nineteen experiments across synthetic PDE systems and six application domains evaluate the framework. Reporting for stochastic and deterministic experiments follows the aggregate run protocol documented in Appendix~\ref{app:seed_protocol}. The experiments evaluate local--global disagreement, closure recovery, exact-hard and regularized-soft intervention records, identifiability, backend interoperability, cross-domain consistency, and learned-generator monitoring. Section~\ref{sec:experiments} reports the complete numerical outcomes.

\paragraph{Paper organization.}

Section~\ref{sec:paradigm} develops the SEAM paradigm at a level abstracted from any one instantiation. Section~\ref{sec:framework} specializes to SEAM-$\Omega$ by introducing finite explanation sheaves over the 1-skeleton of a cover's nerve, with structured stalks and the channel-decomposed coboundary. Section~\ref{sec:diag} develops obstruction-guided diagnosis, the channel decomposition of $\omega$, and the budgeted intervention problem. Section~\ref{sec:ident} formalizes identifiability. Section~\ref{sec:monitor} extends the diagnostic to streaming monitoring of learned generators. Section~\ref{sec:theory} states the finite-dimensional result suite with explicit assumptions; detailed proofs are consolidated in Appendix~\ref{app:proofs}. Section~\ref{sec:algos} gives the algorithms and the computational workflow of SEAM-$\Omega$. Section~\ref{sec:experiments} reports nineteen experiments organized into eight groups aligned with the scientific claims that each group evaluates. Section~\ref{sec:related} compares SEAM with existing methods. Section~\ref{sec:discussion} discusses the design choices, interpretation guide, and relationship between the paradigm and the SEAM-$\Omega$ instantiation. The appendices consolidate notation, proofs, numerical details, the conservation-theoretic SEAM (CT-SEAM) backend, experimental protocols, data provenance, and robustness checks.

\section{The SEAM paradigm}
\label{sec:paradigm}

This section presents SEAM as a reusable local-to-global design principle.

\subsection{From local explanations to global admissibility}
\label{sec:local_to_global}

In SEAM, a scientific explanation is a structured regional object that records why a model produces its output. In a PDE solver, the explanation includes the predicted state $u$, the assumed source or closure $c$, the observations $o$ used for calibration, and optional contract metadata $\epsilon$, such as conservation-repair magnitudes. The metadata block may be zero-dimensional and is not an intervention budget. In a tabular regression model, the explanation includes the predicted output, residual statistics, sufficient statistics of the input distribution, and auxiliary information needed to interpret the prediction scientifically.

The defining property that distinguishes a scientific explanation from a prediction is that explanations from neighboring regions must be jointly assertable. If the explanation on region $U_{i}$ asserts that the local state is $u_{i}$ and the closure is $c_{i}$, and the explanation on region $U_{j}$ makes the analogous assertion, then, on the overlap $U_{i}\cap U_{j}$, those two assertions must agree as statements about the same physical quantity. Locally, each assertion may be plausible, internally consistent, and well supported by data. Jointly, the two assertions may be incompatible, because the evidence that certifies each assertion is confined to the region on which that evidence was produced. A regional residual, a held-out score, or a goodness-of-fit statistic on $U_{i}$ constrains the explanation $e_{i}$ only through quantities visible inside $U_{i}$, typically admitting an entire family of explanations that pass equally well without ever referring to $e_{j}$. Each generator is thus free to select a different member of its own locally plausible family, so agreement on $U_{i}\cap U_{j}$ is a further, joint requirement that no amount of local validation tests. Local plausibility is a property of the individual explanations, whereas admissibility is a property of the family. Figure~\ref{fig:local_global} contrasts a family of highly accurate regional models that disagree on their overlaps with a rougher family that agrees on every overlap.

This phenomenon is generic. Four mechanisms produce such joint incompatibility. We refer to these mechanisms as M1--M4 throughout:

\paragraph{Mechanism M1: independent regional construction.}
Regional models specified without reference to one another agree with their own local evidence and need not agree with each other on an overlap. Two PDE solvers with independently chosen meshes, time steps, or numerical fluxes are both convergent in their interiors, yet their boundary fluxes do not exactly cancel on the overlap, producing an asymptotically vanishing but operationally measurable disagreement. When the regional specifications themselves differ, as with independently calibrated amplitudes or coefficients, the disagreement persists under refinement. The analogous coupling problem is central to co-simulation, where independently implemented simulator units exchange interface variables at communication points \citep{gomes2018cosimulation}.

\paragraph{Mechanism M2: regime non-stationarity.}
Two seasonal time-series models, each accurate within its season, fit different autoregressive coefficients or different residual volatilities. At the inter-season transition, both models extrapolate, and the two extrapolations disagree by a magnitude unrelated to either model's in-regime error.

\paragraph{Mechanism M3: sensor--physics conflict.}
A subset of sensor streams is corrupted, drifted, or biased. A regional model that relies on the affected sensors infers the physics needed to fit those streams. A neighboring model fits cleaner sensors, inferring different physics. The disagreement appears as inconsistency between the two regional closures, even though both models are locally calibrated.

\paragraph{Mechanism M4: learned-generator distribution shift.}
A neural operator trained on a distribution of inputs is evaluated on a covariate-shifted input. Each region of the new input may remain close to inputs the operator has seen, while the joint output across regions is no longer on the operator's learned manifold. The local predictions are still well-formed and locally plausible;
the global assembly is not.

Together, these mechanisms expose the cross-region consistency gap that SEAM audits, and the same mechanisms organize the evaluation: Mechanism~M1 is realized by the independently constructed regional models of Section~\ref{sec:group1}, M2 by the seasonal-regime studies of Section~\ref{sec:group6}, M3 by the sensor-corruption study of Section~\ref{sec:group3}, and M4 by the neural-operator monitoring study of Section~\ref{sec:group7}. The four cases listed in Section~\ref{sec:intro} instantiate M1--M4 in that order.

\begin{figure}[pos=t!]
	\centering
	\resizebox{\linewidth}{!}{%
		\begin{tikzpicture}[
			>=Stealth, every node/.style={font=\small}, rowtitle/.style={anchor=west, align=left, text width=2.35cm, inner sep=0pt}, region/.style={rectangle, rounded corners=3pt, draw=seamCodeKeyword!70, fill=white, minimum width=2.25cm, minimum height=1.0cm, align=center, inner sep=4pt, line width=0.55pt}, verdict/.style={rectangle, rounded corners=3pt, minimum width=2.35cm, minimum height=1.0cm, align=center, inner sep=4pt, line width=0.65pt}, badverdict/.style={verdict, draw=seamCodeString!85, fill=seamCodeString!7, text=seamCodeString}, goodverdict/.style={verdict, draw=seamCodeBuiltin!85, fill=seamCodeBuiltin!7, text=seamCodeBuiltin}, badlabel/.style={font=\footnotesize, text=seamCodeString, fill=white, rounded corners=2pt, inner xsep=3pt, inner ysep=1.5pt}, goodlabel/.style={font=\footnotesize, text=seamCodeBuiltin, fill=white, rounded corners=2pt, inner xsep=3pt, inner ysep=1.5pt}, badlink/.style={draw=seamCodeString!85, densely dashed, line width=0.75pt}, goodlink/.style={draw=seamCodeBuiltin!85, line width=0.75pt}, resultarrow/.style={-{Stealth[length=5pt,width=4pt]}, draw=seamCodeNumber, line width=0.65pt} ]
			\path[draw=seamCodeString!30, fill=seamCodeString!2, rounded corners=5pt, line width=0.45pt] (0,1.20) rectangle (16.0,3.55);
			\path[draw=seamCodeBuiltin!30, fill=seamCodeBuiltin!2, rounded corners=5pt, line width=0.45pt] (0,-1.25) rectangle (16.0,1.10);

			\node[rowtitle] at (0.42,2.38) {{\bfseries Independent local models}\\[2pt] {\scriptsize excellent local fit}\\[-1pt] {\scriptsize separately trained}};
			\node[region] (L1) at (4.20,2.25) {{\bfseries Region $U_{1}$}\\[-1pt]$R^{2}_{1}=0.999$};
			\node[region] (L2) at (7.25,2.25) {{\bfseries Region $U_{2}$}\\[-1pt]$R^{2}_{2}=0.999$};
			\node[region] (L3) at (10.30,2.25) {{\bfseries Region $U_{3}$}\\[-1pt]$R^{2}_{3}=0.999$};
			\draw[badlink] (L1.east) -- node[badlabel, above=18pt] {$\norm{\omega_{12}}>0$} (L2.west);
			\draw[badlink] (L2.east) -- node[badlabel, above=18pt] {$\norm{\omega_{23}}>0$} (L3.west);
			\node[badverdict] (LV) at (14.10,2.25) {{\scriptsize Global verdict}\\[1pt] {\footnotesize\bfseries Inadmissible}\\[1pt] {\footnotesize$\norm{\omega}>0$}};
			\draw[resultarrow] (L3.east) -- (LV.west);

			\node[rowtitle] at (0.42,-0.07) {{\bfseries Shared global generator}\\[2pt] {\scriptsize rougher local fit}\\[-1pt] {\scriptsize one coherent source}};
			\node[region] (B1) at (4.20,-0.20) {{\bfseries Region $U_{1}$}\\[-1pt]$R^{2}_{1}=0.85$};
			\node[region] (B2) at (7.25,-0.20) {{\bfseries Region $U_{2}$}\\[-1pt]$R^{2}_{2}=0.85$};
			\node[region] (B3) at (10.30,-0.20) {{\bfseries Region $U_{3}$}\\[-1pt]$R^{2}_{3}=0.85$};
			\draw[goodlink] (B1.east) -- node[goodlabel, above=18pt] {$\norm{\omega_{12}}=0$} (B2.west);
			\draw[goodlink] (B2.east) -- node[goodlabel, above=18pt] {$\norm{\omega_{23}}=0$} (B3.west);
			\node[goodverdict] (BV) at (14.10,-0.20) {{\scriptsize Global verdict}\\[1pt] {\footnotesize\bfseries Admissible}\\[1pt] {\footnotesize$\norm{\omega}=0$}};
			\draw[resultarrow] (B3.east) -- (BV.west);
		\end{tikzpicture}}%
	\caption{Local accuracy and global admissibility are orthogonal. The top row shows a schematic family of three locally trained models in which each model achieves high local accuracy but disagrees on overlaps, so the family is globally inadmissible. The bottom row shows three deliberately rougher local fits with $R^{2}_{i} = 0.85$ that share a single global generator and agree exactly on overlaps: $\norm{\omega} = 0$.}
	\label{fig:local_global}
\end{figure}

\subsection{The five-stage paradigm and its extensions}
\label{sec:five_stages}

We define SEAM as a five-stage data flow with identifiability and monitoring extensions, shown in Figure~\ref{fig:seam_paradigm}.

\begin{definition}[SEAM paradigm]
	\label{def:seam_paradigm}
	A SEAM system consists of the following five stages:
	\paragraph{Generate.}
	A family of local generators $\{G_{i}\}_{i\in I}$ produces, for each region $U_{i}$ of a finite cover $\cover$ of the problem domain, a structured local explanation $e_{i}$.

	\paragraph{Restrict.}
	For each overlap $U_{i}\cap U_{j}$, a restriction map $\rho_{i,ij}$ extracts from $e_{i}$ the object that must be compared on the overlap. The object can be a state value, a closure parameter, a posterior mean, a sensor mean, or an admissibility flag.

	\paragraph{Obstruct.}
	The disagreement between $\rho_{i,ij}(e_{i})$ and $\rho_{j,ij}(e_{j})$ on every overlap is assembled into an obstruction object $\omega$.

	\paragraph{Diagnose.}
	The obstruction is summarized using the diagnostic attributes exposed by the chosen instantiation, producing a report that localizes or qualifies the inadmissibility.

	\paragraph{Intervene.}
	A family of allowable intervention budgets is considered. Each budget states one scientific hypothesis about the cause of the obstruction by declaring the revisions that the hypothesis permits. For each budget, the system tests whether the corresponding repair problem can eliminate the obstruction. Feasible hard repairs are retained by the verdict; infeasible budgets contribute residual or diagnostic information declared by the chosen instantiation.

	Two further extensions are part of the paradigm:
	\paragraph{Identify.}
	An identifiability analyzer determines which parts of the admissible explanation object remain unresolved by current observations.

	\paragraph{Monitor.}
	A SEAM system is also expected to support streaming evaluation of $\omega(t)$ for a fixed family of local generators applied to streaming inputs.
\end{definition}

Definition~\ref{def:seam_paradigm} leaves the mathematical type of $e_{i}$, $\rho_{i,ij}$, and $\omega$ open. Four conditions suffice for an instantiation to realize this abstract definition. Each $\rho_{i,ij}$ must be well-defined for the local explanation object used by the instantiation. The equation $\omega=0$ must provide a notion of admissibility, and the intervention problem must provide a notion of cost. Finally, the instantiation must expose the diagnostic attributes needed to interpret $\omega$. The finite explanation-sheaf instantiation of Section~\ref{sec:framework} fulfills these requirements with vector-space stalks, a named channel decomposition, and linear repair equations.

\begin{figure}[pos=t!]
	\centering
	\resizebox{\linewidth}{!}{%
		\begin{tikzpicture}[
			>=Stealth, node distance=1.4cm and 0.84cm, every node/.style={font=\small}, stage/.style={rectangle, rounded corners=3pt, draw=seamCodeKeyword!70, fill=white, minimum width=1.95cm, minimum height=0.9cm, align=center, inner sep=4pt, line width=0.55pt}, extension/.style={rectangle, rounded corners=3pt, draw=seamCodeBuiltin!72, fill=white, minimum width=1.95cm, minimum height=0.72cm, align=center, inner sep=3pt, line width=0.55pt, font=\footnotesize}, monitor/.style={extension, draw=seamCodeNumber!68}, corepanel/.style={rounded corners=5pt, draw=seamCodeKeyword!28, fill=seamCodeKeyword!2, line width=0.45pt}, extensionpanel/.style={rounded corners=5pt, draw=seamCodeBuiltin!28, fill=seamCodeBuiltin!2, line width=0.45pt}, arr/.style={-{Stealth[length=5pt,width=4pt]}, line width=0.65pt, draw=seamCodeKeyword!78, shorten <=1.5pt, shorten >=1.5pt}, extarr/.style={arr, draw=seamCodeBuiltin!82}, monitorarr/.style={arr, draw=seamCodeNumber!82} ]
			\node[stage] (S1) at (0, 0) {{\bfseries Generate}\\[-1pt]$\{G_{i}\}_{i\in I}$};
			\node[stage,right=of S1] (S2) {{\bfseries Restrict}\\[-1pt]$\rho_{i,ij}$};
			\node[stage,right=of S2] (S3) {{\bfseries Obstruct}\\[-1pt]$\omega$};
			\node[stage,right=of S3] (S4) {{\bfseries Diagnose}\\[-1pt]report};
			\node[stage,right=of S4] (S5) {{\bfseries Intervene}\\[-1pt]repair};
			\draw[arr] (S1) -- (S2);
			\draw[arr] (S2) -- (S3);
			\draw[arr] (S3) -- (S4);
			\draw[arr] (S4) -- (S5);
			\node[extension,below=1.10cm of S4] (S6) {{\bfseries Identify}\\[-1pt]unresolved};
			\draw[extarr] (S5.south) -- ++(0,-0.55) -| (S6.north);
			\node[monitor,below=1.10cm of S3] (S8) {{\bfseries Monitor}\\[-1pt]$\omega(t)$};
			\draw[monitorarr] (S3.south) -- (S8.north);

			\begin{scope}[on background layer]
				\node[corepanel, fit=(S1)(S2)(S3)(S4)(S5), inner xsep=0.34cm, inner ysep=0.26cm] {};
				\node[extensionpanel, fit=(S8)(S6), inner xsep=0.34cm, inner ysep=0.26cm] {};
			\end{scope}
		\end{tikzpicture}}%
	\caption{The SEAM paradigm. The five core stages in the top row take a family of local generators through restriction, obstruction, diagnosis, and budgeted intervention. The Identify extension records unresolved admissible directions, and the Monitor extension reads $\omega$ as a stream. SEAM-$\Omega$ instantiates these stages with finite cellular sheaves.}
	\label{fig:seam_paradigm}
\end{figure}

\subsection{Explanation-admissibility}
\label{sec:admissibility}

The central object of SEAM is explanation-admissibility.

\begin{definition}[Explanation-admissibility]
	\label{def:exp_admissibility}
	A family of local explanations $(e_{i})_{i\in I}$ on a cover $\cover$ is globally admissible if, for every overlap $(i,j)\in E$,
	\begin{align*}
		\rho_{i,ij}(e_{i})  =  \rho_{j,ij}(e_{j}).
	\end{align*}
	Equivalently, the obstruction object of the family vanishes: $\omega = 0$.
\end{definition}

In cellular-sheaf terms, admissibility is the global-section condition. The term emphasizes its scientific interpretation: Neighboring explanations make mutually compatible claims on every overlap. Together with admissibility, the following three definitions specify how a SEAM instantiation represents the obstruction diagnostically, delimits allowable repairs, and converts repair feasibility and cost information into a scientific verdict.

\begin{definition}[Diagnostic decomposition]
	\label{def:channel_omega}
	The obstruction object $\omega$ is reported together with diagnostic attributes exposed by the chosen SEAM instantiation. These attributes may be named channels, regions, uncertainty components, status fields, or other interpretable summaries. In SEAM-$\Omega$, the attributes are the three primary channels and the optional contract-metadata channel introduced in Section~\ref{sec:stalks}.
\end{definition}

\begin{definition}[Intervention budget]
	\label{def:budget}
	An intervention budget is a declared class of allowable repairs together with its cost convention and feasibility test. A hard budget is feasible for $\omega$ if some allowed repair eliminates the obstruction under the semantics of the chosen instantiation. In SEAM-$\Omega$, a budget is represented by an orthogonal projector $P$ and a cost metric $C$, and hard feasibility is the linear condition $\omega\in\im(DP)$ developed in Section~\ref{sec:budgeted}.
\end{definition}

\begin{definition}[Scientific verdict]
	\label{def:verdict}
	Given a candidate set of budgets $\{b_{k}\}$, a numerical zero-obstruction tolerance $\tau_{\mathrm{zero}}$, and an optional unrestricted fallback budget $b_{\mathrm{all}}$, a SEAM verdict is defined piecewise. If $\norm{\omega}\le\tau_{\mathrm{zero}}$, the verdict is $\mathsf{globally\_admissible}$, and no repair budget is selected. When $\norm{\omega}>\tau_{\mathrm{zero}}$, define the exact-feasible budget set by
	\begin{align*}
		\mathcal F(\omega) \coloneqq \{b_k:\text{the hard repair problem for }b_k \text{ is feasible for }\omega\}.
	\end{align*}
	In SEAM-$\Omega$, this feasible-set definition becomes $\mathcal F(\omega)=\{b_k:\omega\in\im(DP_{b_k})\}$. The feasible set of specific accounts is
	\begin{align*}
		\mathcal F_{\mathrm{proper}}(\omega) \coloneqq \mathcal F(\omega)\setminus\{b_{\mathrm{all}}\},
	\end{align*}
	with the set subtraction omitted when no unrestricted fallback is in the catalog. If $\mathcal F_{\mathrm{proper}}(\omega)=\varnothing$, the verdict is unresolved by the declared specific accounts and reports their residuals. If this set has one element, that account is the only specific account retained by exact feasibility. If the set has several elements, all are retained, and the attribution is non-unique. Each feasible budget carries its own minimum repair cost, but costs do not select among accounts with different permitted subspaces.
\end{definition}

In SEAM-$\Omega$, scientific verdicts are based on exact feasibility, while soft records provide comparative attribution together with their residuals. The exact branch is decisive when the obstruction concentrates in the channel that a budget permits. Under the block-diagonal channel restrictions adopted in Section~\ref{sec:channel_decomp}, a single-channel hard budget is feasible only when the obstruction components outside that channel vanish, so an obstruction spread across several channels refutes every single-channel account, and the attribution then rests on the soft records with their residuals. Section~\ref{sec:diag} develops this interpretation.

The three definitions above form one hypothesis test. A candidate set of budgets is a set of declared diagnostic accounts of the same obstruction, and $\mathcal F_{\mathrm{proper}}(\omega)$ collects the specific accounts that survive the overlap evidence. A budget outside $\mathcal F(\omega)$ is not merely expensive: No revision the budget permits removes the obstruction at all, so the evidence rules that account out, and only its residual is reported. An empty or multi-element proper feasible set makes the attribution unresolved or non-unique, respectively. The unrestricted fallback certifies repairability but is never interpreted as a causal account.

The objects compared on an overlap can be regional predictions, residuals, closure summaries, observation records, or contract metadata. Disagreements among those objects enter the same channel and overlap diagnosis, budgeted intervention, and blind-direction analysis. The local--global amplitude-mismatch study in Section~\ref{sec:local_global_amplitude_mismatch} provides the first numerical example.

\section{\texorpdfstring{SEAM-$\Omega$}{SEAM-Omega}: finite explanation sheaves}
\label{sec:framework}

We now instantiate the SEAM paradigm using finite cellular sheaves \citep{curry2014sheaves,robinson2014topological,ghrist2014elementary}. SEAM-$\Omega$ assembles the degree-zero coboundary of the pairwise-overlap complex to obtain closed-form diagnostic, intervention, identifiability, and monitoring operators. The channel-resolved results use the block-diagonal restrictions stated in Assumption~A0. Figure~\ref{fig:finite_sheaf} shows the resulting construction on a three-region cover.

\subsection{Covers and nerves}
\label{sec:covers}

Let $\Domain \subseteq \R^{d}$ be the computational domain. A cover $\cover = (U_{i})_{i\in I}$ is a finite collection of regions $U_i\subseteq \Domain$ with $\bigcup_{i\in I} U_{i} = \Domain$. Overlaps are permitted, and pairwise intersections $U_{ij} \coloneqq U_{i}\cap U_{j}$ may be empty. A nonempty $U_{ij}$ is the comparison support for the corresponding edge. In spatial applications, comparison supports are normally required to have nontrivial measure; an abstract cover may instead specify a finite set of shared evaluation points.

The nerve of $\cover$, denoted $\nerve$, is the simplicial complex with $k$-simplices given by the $(k+1)$-tuples $(i_{0}, \ldots, i_{k})$ of indices whose corresponding intersection is nonempty. For SEAM-$\Omega$, we truncate $\nerve$ to its 1-skeleton. The vertices of that 1-skeleton are regions $i\in I$, and the edges are pairs $E = \{(i,j) : i < j,  U_{ij}\neq\emptyset\}$.

\begin{remark}[Cover construction in practice]
	For PDE applications, $\cover$ is typically chosen by domain decomposition with a prescribed overlap fraction $\alpha_{\mathrm{ov}}$. For time-series applications, $\cover$ is chosen by regime decomposition across seasons, market regimes, or fault states, with overlap windows at regime transitions. For purely data-driven applications, $\cover$ may be specified abstractly via a list of overlapping evaluation points. Evaluation partitions such as benchmark splits or held-out folds use this same construction: Each partition is one region, and the evaluation points scored under two partitions form their overlap. Disjoint partitions carry no edge until shared evaluation points are designated. The SEAM-$\Omega$ implementation supports both spatial and abstract covers, as described in Section~\ref{sec:algos}.
\end{remark}

\begin{figure}[pos=t!]
	\centering
	\resizebox{\linewidth}{!}{%
		\begin{tikzpicture}[
			>=Stealth, every node/.style={font=\small}, region/.style={rectangle, rounded corners=3pt, draw=seamCodeKeyword!70, fill=white, minimum width=2.80cm, minimum height=2.68cm, align=center, inner sep=0pt, line width=0.55pt}, overlap/.style={rectangle, rounded corners=3pt, draw=seamCodeBuiltin!75, fill=seamCodeBuiltin!3, minimum width=1.50cm, minimum height=2.20cm, align=center, inner sep=0pt, line width=0.55pt}, regiontitle/.style={font=\bfseries, text=seamCodeKeyword}, overlaptitle/.style={font=\bfseries\footnotesize, text=seamCodeBuiltin}, stalklabel/.style={font=\scriptsize, text=seamCodeNumber}, channel/.style={font=\scriptsize, anchor=west, text=seamCodeInk}, overlapbody/.style={font=\scriptsize, align=center, text=seamCodeInk}, restriction/.style={-{Stealth[length=5pt,width=4pt]}, line width=0.65pt, draw=seamCodeBuiltin!85, shorten <=1.5pt, shorten >=1.5pt}, maplabel/.style={font=\scriptsize, text=seamCodeKeyword, fill=white, rounded corners=1.5pt, inner xsep=2.5pt, inner ysep=1pt}, defect/.style={align=center, inner sep=0pt, font=\scriptsize, text=seamCodeString}, defectlink/.style={-{Stealth[length=4.5pt,width=3.5pt]}, draw=seamCodeString!72, line width=0.55pt, shorten >=1pt}, panel/.style={rounded corners=5pt, draw=seamCodeKeyword!24, fill=seamCodeKeyword!2, line width=0.45pt} ]
			\node[region]  (U1)  at (0.0,0) {};
			\node[overlap] (U12) at (3.2,0) {};
			\node[region]  (U2)  at (6.4,0) {};
			\node[overlap] (U23) at (9.6,0) {};
			\node[region]  (U3)  at (12.8,0) {};

			\foreach \nodeid/\idx in {U1/1,U2/2,U3/3}{ \node[regiontitle] at ([yshift=0.93cm]\nodeid.center) {Region $U_{\idx}$}; \node[stalklabel] at ([yshift=0.61cm]\nodeid.center) {$F(U_{\idx})$}; \draw[seamCodeKeyword!20,line width=0.4pt] ([xshift=-1.16cm,yshift=0.39cm]\nodeid.center) -- ([xshift=1.16cm,yshift=0.39cm]\nodeid.center); \node[channel] at ([xshift=-1.12cm,yshift=0.11cm]\nodeid.center) {state $u_{\idx}$}; \node[channel] at ([xshift=-1.12cm,yshift=-0.24cm]\nodeid.center) {closure $c_{\idx}$}; \node[channel] at ([xshift=-1.12cm,yshift=-0.59cm]\nodeid.center) {observation $o_{\idx}$}; \node[channel] at ([xshift=-1.12cm,yshift=-0.94cm]\nodeid.center) {metadata $\epsilon_{\idx}$}; }

			\foreach \nodeid/\pair in {U12/12,U23/23}{ \node[overlaptitle] at ([yshift=0.67cm]\nodeid.center) {$U_{\pair}$}; \draw[seamCodeBuiltin!24,line width=0.4pt] ([xshift=-0.55cm,yshift=0.39cm]\nodeid.center) -- ([xshift=0.55cm,yshift=0.39cm]\nodeid.center); \node[overlapbody] at ([yshift=-0.10cm]\nodeid.center) {overlap\\[-1pt]stalk}; \node[stalklabel] at ([yshift=-0.69cm]\nodeid.center) {$F(U_{\pair})$}; }

			\draw[restriction] (U1.east) -- node[maplabel,above=3pt] {$\rho_{1,12}$} (U12.west);
			\draw[restriction] (U2.west) -- node[maplabel,above=3pt] {$\rho_{2,12}$} (U12.east);
			\draw[restriction] (U2.east) -- node[maplabel,above=3pt] {$\rho_{2,23}$} (U23.west);
			\draw[restriction] (U3.west) -- node[maplabel,above=3pt] {$\rho_{3,23}$} (U23.east);

			\node[defect] (D12) at (3.2,-1.84) {$\omega_{12}=\rho_{2,12}s_{2}-\rho_{1,12}s_{1}$};
			\node[defect] (D23) at (9.6,-1.84) {$\omega_{23}=\rho_{3,23}s_{3}-\rho_{2,23}s_{2}$};
			\draw[defectlink] (U12.south) -- (D12.north);
			\draw[defectlink] (U23.south) -- (D23.north);

			\begin{scope}[on background layer]
				\node[panel,fit=(U1)(U12)(U2)(U23)(U3)(D12)(D23), inner xsep=0.28cm,inner ysep=0.36cm] {};
			\end{scope}
		\end{tikzpicture}}%
	\caption{Finite explanation sheaf $F$ on a three-region cover. Each region $U_{i}$ carries a structured stalk partitioned into state, closure, and observation channels plus optional contract metadata. Each overlap $U_{ij}$ carries an overlap stalk $\Fuij$ and two restriction maps. The raw admissibility defect $\omega = Ds$ collects the per-overlap disagreements $\omega_{ij} = \rho_{j,ij}s_{j} - \rho_{i,ij}s_{i}$. Under the block-diagonal channel restrictions of Assumption~A0, the channel decomposition of $\omega$ commutes with the coboundary.}
	\label{fig:finite_sheaf}
\end{figure}

\subsection{Explanation stalks and channel conventions}
\label{sec:stalks}

We now equip each region with a structured local explanation space.

\begin{definition}[Local explanation]
	\label{def:local_explanation_ext}
	A local explanation on region $U_{i}$ is a tuple
	\begin{align*}
		e_{i}  =  (u_{i},  c_{i},  o_{i},  \epsilon_{i}),
	\end{align*}
	where $u_{i}\in\R^{n_{u,i}}$ is the discretized state field on the per-region grid and contains predicted values of the dependent variable of interest. The closure vector $c_{i}\in\R^{n_{c,i}}$ represents parameters or discretized values of hidden source terms, unresolved subgrid physics, or another latent block that the local generator attributes to the region. The observation summary $o_{i}\in\R^{n_{o,i}}$ contains sensor means and the flattened diagonal of the local measurement covariance. The optional contract-metadata vector $\epsilon_{i}\in\R^{n_{\epsilon,i}}$ contains per-contract repair magnitudes, conservation residuals, and other scalar admissibility flags emitted by the local generator; $n_{\epsilon,i}$ may be zero. Formulas label this optional block as $\mathrm{meta}$. The block is metadata and is distinct from an intervention budget $P$, which specifies a subspace of allowable corrections as in Definition~\ref{def:budget}. The embedding of the explanation is the concatenation $\mathrm{vec}(e_{i}) \in \R^{d_{i}}$ with $d_{i} \coloneqq n_{u,i} + n_{c,i} + n_{o,i} + n_{\epsilon,i}$. The common uniform-dimensional case is recovered by setting the nonzero dimensions independent of $i$.
\end{definition}

\begin{definition}[Explanation stalk]
	\label{def:stalk}
	The explanation stalk at region $U_{i}$ is $\Fui \coloneqq \R^{d_{i}}$, equipped with the standard inner product and the partition into three primary diagnostic blocks and one optional metadata block, $\Fui = \Fui^{\mathrm{state}} \oplus \Fui^{\mathrm{closure}} \oplus \Fui^{\mathrm{obs}} \oplus \Fui^{\mathrm{meta}}$ with dimensions $n_{u,i}$, $n_{c,i}$, $n_{o,i}$, and $n_{\epsilon,i}$.
\end{definition}

The choice of which fields go into which channel is part of the modeling design. We discuss two canonical conventions.

\paragraph{PDE convention.}
For PDE generators, the state block holds the discretized solution on the per-region grid; the closure block holds parameters or discretized values of any source, forcing, or subgrid model; and the observation block holds the sensor means and covariance diagonal on the region. When present, the optional metadata block holds cumulative magnitudes of conservation repairs applied during the solve.

\paragraph{Regression convention.}
For wrapped regression predictors, as used throughout the tabular regression studies, the state block holds the predictor outputs at the overlap evaluation points; the closure block is zero or carries residual statistics; the observation block is zero; and the optional metadata block is absent. The framework gracefully degenerates to a pure state-channel diagnostic in this case.

\subsection{Restriction maps, coboundary assembly, and the raw admissibility defect}
\label{sec:restrictions}

For each overlap $(i,j)\in E$, we fix an overlap stalk $\Fuij \coloneqq \R^{m_{ij}}$ and two restriction maps $\rho_{i,ij}\colon \Fui \to \Fuij$ and $\rho_{j,ij}\colon F(U_{j}) \to \Fuij$.

\begin{definition}[Restriction map]
	\label{def:restriction}
	A restriction map $\rho_{i,ij}\colon \Fui \to \Fuij$ is a linear map represented by a real $m_{ij}\times d_{i}$ matrix that sends $\mathrm{vec}(e_{i})$ to the quantities that must be compared on the overlap. The map respects channels, meaning that $\Fuij$ decomposes as $\Fuij = \Fuij^{\mathrm{state}} \oplus \Fuij^{\mathrm{closure}} \oplus \Fuij^{\mathrm{obs}} \oplus \Fuij^{\mathrm{meta}}$ and $\rho_{i,ij}$ acts block-diagonally with respect to the two partitions.
\end{definition}

The state-channel block of $\rho_{i,ij}$ is typically an interpolation operator that resamples the per-region state grid onto the overlap grid. The closure-channel block extracts the closure quantities relevant to the overlap region. In the simplest case, that block is the identity on $\Fui^{\mathrm{closure}}$ followed by a linear map into $\Fuij^{\mathrm{closure}}$. The observation-channel block selects the sensor entries that fall in the overlap. The optional metadata block is zero unless a contract quantity, such as a global conservation residual, is explicitly shared across regions.

\begin{remark}[Hand-crafted and learned restrictions]
	\label{rem:learned_restr}
	SEAM-$\Omega$ uses geometry-defined restrictions, as in application-specific constructions such as the discourse sheaves of \citet{hansen2021opinion}. Learned restriction maps, reviewed in Section~\ref{sec:related_sheaves}, can implement the same interface whenever those maps remain interpretable.
\end{remark}

We assemble the per-region stalks into the cochain spaces
\begin{align*}
	C^{0}(F)  \coloneqq  \bigoplus_{i\in I} \Fui, \qquad C^{1}(F)  \coloneqq  \bigoplus_{(i,j)\in E} \Fuij.
\end{align*}

\begin{definition}[Coboundary]
	\label{def:coboundary}
	The coboundary of $F$ is the linear map $\dz\colon C^{0}(F) \to C^{1}(F)$ defined by
	\begin{align*}
		(\dz s)_{ij}  \coloneqq  \rho_{j,ij}  s_{j} - \rho_{i,ij}  s_{i}, \qquad s = (s_{i})_{i\in I} \in C^{0}(F), (i,j) \in E.
	\end{align*}
	We write $D \coloneqq \mathrm{mat}(\dz)$ for the assembled coboundary matrix in the standard bases.
\end{definition}

A global section of $F$ is an element $s\in\ker\dz$: a family of stalk vectors whose restrictions agree on every overlap.

\begin{definition}[Raw admissibility defect]
	\label{def:raw_defect}
	Given a family of local explanations $(e_{i})_{i\in I}$, let $s_{i} \coloneqq \mathrm{vec}(e_{i})$. The raw admissibility defect $\omega \in C^{1}(F)$ is the 1-cochain
	\begin{align*}
		\omega  \coloneqq  \dz s, \qquad \omega_{ij}  = \rho_{j,ij}  s_{j} - \rho_{i,ij}  s_{i}, \qquad (i,j) \in E.
	\end{align*}
	The terms raw admissibility defect and raw obstruction cochain are synonymous for this $\omega$.
\end{definition}

The symbol $\norm{\cdot}$ denotes the Euclidean norm induced by the standard inner products, and an explicit subscript $2$ marks that same norm. The one departure is $\norm{\cdot}_{C}$, the norm induced by the positive-definite cost metric $C$ introduced in Section~\ref{sec:raw_budgeted}. Channel weights and whitening maps act on the argument, as in $\norm{W_{\bullet}\omega^{\bullet}}_{2}$, leaving the norm Euclidean; the remaining subscripts name the object being measured rather than a metric. The norm $\norm{\omega}$ is the scalar summary of inadmissibility, and the channel norms given in \eqref{eq:channel_decomp}, together with the per-overlap norms, refine that summary. The subsection below introduces hypothesis-specific budget residuals.

\subsection{\texorpdfstring{Channel decomposition of $\omega$}{Channel decomposition of omega}}
\label{sec:channel_decomp}

\begin{assumptionAzero}[Block-diagonal channel restrictions]
	For the channel decomposition to commute with the coboundary, restriction maps act block-diagonally with respect to the channel partition. Each $\rho_{i,ij}$ has no off-block entries mixing state, closure, observation, and optional metadata components. This block-diagonal restriction structure is a modeling choice we adopt throughout SEAM-$\Omega$. We index this assumption by zero because the block-diagonal structure is adopted here for the construction itself rather than for any individual result;
	Section~\ref{sec:theory} lists Assumption~A0 alongside the repair and model conventions used by the result suite.
\end{assumptionAzero}

Under Assumption~A0, $D$ is block diagonal with respect to the channel decomposition; its pseudoinverse and the projector $DD\pinv$ inherit the block structure. The defect therefore decomposes:
\begin{equation}
	\label{eq:channel_decomp}
	\omega  = \omega^{\mathrm{state}} \oplus \omega^{\mathrm{closure}} \oplus \omega^{\mathrm{obs}} \oplus \omega^{\mathrm{meta}},
\end{equation}
where $\bullet$ ranges over the three primary channel labels and the optional metadata label $\{\mathrm{state},\mathrm{closure},\mathrm{obs},\mathrm{meta}\}$, and each overlap component $\omega^{\bullet}_{ij}$ lives in the corresponding sub-block $\Fuij^{\bullet}$. Channel comparisons use a fixed channel metric or normalization. In the simplest unit-scaled case, we summarize each channel by its norm: $\norm{\omega^{\mathrm{state}}}$, $\norm{\omega^{\mathrm{closure}}}$, $\norm{\omega^{\mathrm{obs}}}$, and $\norm{\omega^{\mathrm{meta}}}$. The three primary scalars provide the diagnostic report; the optional meta-labeled scalar records contract-metadata disagreement when that block is present. With channel weights or whitening matrices $W_{\bullet}$, we compute the denominator and, when $Z>0$, the calibrated fractions
\begin{align*}
	Z\coloneqq\sum_{\circ}\norm{W_{\circ}\omega^{\circ}}_{2}^{2}, \qquad p_{\bullet}\coloneqq \frac{\norm{W_{\bullet}\omega^{\bullet}}_{2}^{2}}{Z} \quad (Z>0),
\end{align*}
where $\circ$ ranges over the active labels in the same set as $\bullet$. The channel reported as dominant is the one maximizing $p_{\bullet}$, so a dominance report is meaningful only relative to the declared $W_{\bullet}$. Section~\ref{sec:channel_diagnosis} states the reporting rule that these fractions feed, including the degenerate $Z=0$ case.

\begin{remark}[Resolution of the channel summary]
	The channel decomposition is a first-order summary of inadmissibility. Two systems with identical channel norms can have qualitatively different obstruction structures, differing in their $\omega_{ij}$ profiles across overlaps. The full $\omega$, the per-overlap norms $\norm{\omega_{ij}}$, the channel report, and the budgeted intervention costs together provide the richer report detailed in Section~\ref{sec:diag}.
\end{remark}

\subsection{Budgeted obstruction and residual-aware attribution}
\label{sec:raw_budgeted}

A scientific hypothesis restricts the allowed correction to a strict subspace $\im P \subsetneq C^{0}(F)$. For example, a hypothesis may allow revision only of the closure block. The hard residual below tests whether the restricted operator $DP$ can reproduce the full raw defect. A regularized-soft record is reported separately when the restricted operator cannot.

\begin{definition}[Budgeted obstruction]
	\label{def:budgeted_obstruction}
	Let $P\in\R^{N\times N}$ be the orthogonal projector onto a closed budget subspace of $C^{0}(F)$, and let $C\succ0$ be the metric used to measure corrections. The hard budgeted obstruction residual at budget $P$ is
	\begin{align*}
		r_{P}^{\mathrm{hard}}(\omega)  \coloneqq \omega - (DP)(DP)\pinv \omega \in  \ker (DP)^{\top}.
	\end{align*}
	The budgeted feasibility indicator is the Boolean $\mathbf{1}[r_{P}^{\mathrm{hard}}(\omega) = 0]$, equivalently $\mathbf{1}[\omega\in\im(DP)]$. The hard budgeted intervention cost is the metric-dependent extended-real value
	\begin{align*}
		c_{P,C}^{\mathrm{hard}}(\omega) \coloneqq
		\begin{cases}
			\norm{\delta_{P,C}^{\star}}_{C}^{2}, & \omega\in\im(DP),    \\
			+\infty,                             & \omega\notin\im(DP),
		\end{cases}
	\end{align*}
	where $\delta_{P,C}^{\star}$ is the minimum-cost feasible correction from Theorem~\ref{thm:intervention} in the first case. The residual depends on $P$ and the codomain metric used for projection, while the minimizer and cost also depend on $C$. We often write $r_P$ and $c_P^{\mathrm{hard}}$ when $C$ and the hard-residual convention are fixed. A budget status is exact-feasible, tolerance-feasible, or infeasible with a soft diagnostic. The modifier exact-hard is reserved for costs and verdicts derived solely from exact-feasible hard problems. Scientific verdicts are based on exact feasibility, while tolerance and soft empirical attribution records include their residuals \citep{engl1996regularization}.
\end{definition}

At the unrestricted budget $P=\Id$, every raw admissibility defect is exactly repairable. For proper budgets such as $\mathsf{AllowClosure}$ or $\mathsf{AllowSensorRejection}$, $r_{P}^{\mathrm{hard}}(\omega)$ is generically nontrivial and is the operational signal of hypothesis feasibility.

\section{Obstruction-guided diagnosis and intervention}
\label{sec:diag}

We now turn the obstruction cochain into a scientific diagnostic. The pipeline has two primary outputs. The per-channel decomposition provides a static map of where disagreement occurs, while the budgeted preimage computation tests which permitted revisions can explain that disagreement. Exact-hard feasibility and the set of retained proper budgets determine the formal verdict; each feasible budget's cost quantifies its own repair burden. Residual-aware regularized records are reported separately as empirical attribution rather than as exact tests.

\subsection{From obstruction to channel diagnosis}
\label{sec:channel_diagnosis}

The first component reads the channel decomposition of \eqref{eq:channel_decomp} without solving any optimization.

\paragraph{Total and per-channel norms.}
We compute the total raw norm and, for each channel $\bullet\in\{\mathrm{state},\mathrm{closure},\mathrm{obs}, \mathrm{meta}\}$, the corresponding per-channel norm:
\begin{align*}
	\norm{\omega}_{2}^{2} = \sum_{(i,j)\in E}\norm{\omega_{ij}}_{2}^{2}, \qquad \norm{\omega^{\bullet}}_{2}^{2} = \sum_{(i,j)\in E}\norm{\omega^{\bullet}_{ij}}_{2}^{2}.
\end{align*}
If $\norm{\omega}_{2}\le\tau_{\mathrm{zero}}$, no dominant channel is reported, and the verdict is $\mathsf{globally\_admissible}$. Otherwise, we form the calibrated fractions $p_{\bullet}$ of Section~\ref{sec:channel_decomp}, with $W_{\bullet}=\Id$ only in the unit-scaled default. If $Z>0$, channel dominance is reported as the channel maximizing $p_{\bullet}$. If $Z=0$, no dominant channel is reported, and the fraction vector is left undefined;
this zero-denominator case can arise only from the chosen calibration weights or whitening maps and does not override the zero-obstruction verdict branch.

\paragraph{Per-overlap norms.}
For each overlap $(i,j)$, we report the per-channel decomposition of $\omega_{ij}$. This per-overlap decomposition produces an overlap-level map that identifies which interfaces carry each kind of disagreement. The household-power cross-framework audit provides an example in which the winter--spring overlap is dominant. In general, the dominant overlap is the overlap maximizing $\norm{\omega_{ij}}_{2}^{2}$.

\paragraph{What the channel diagnosis tells us.}
State dominance means that the predicted state fields disagree on the overlap. This state-channel disagreement is the default mode for locally trained regression models that have not seen one another's regions. Closure dominance means that regional hidden-physics representations or closure parameters are inconsistent across regions. Closure dominance arises when one or more regions have an active conservation contract, as in Theorem~\ref{thm:conservation}, or when one region's closure was learned from data unavailable to the others. Observation dominance indicates disagreement among sensor readings or their summary statistics, and commonly reflects differences in sensor noise or drift. A nonzero optional metadata component indicates that declared contract records disagree across regions. That component is reported as metadata rather than treated as a causal intervention account; none of the reported experiments uses an optional-metadata intervention budget.

This channel-level diagnostic requires only matrix--vector products and no optimization. The resulting total and channel norms therefore provide suitable signals for the streaming monitoring described in Section~\ref{sec:monitor}.

\subsection{Budgeted intervention: exact tests and soft attribution}
\label{sec:budgeted}

The second component treats each declared account of the source of disagreement as an intervention budget and evaluates the cost of the minimum-cost intervention under that account. Hard feasibility determines which proper accounts survive. Costs are reported within each account but do not rank an unrestricted fallback against a more specific budget. The antitonicity result in Theorem~\ref{thm:intervention} explains this separation: Enlarging a budget under a shared metric cannot increase its minimum repair cost. The verdict of Definition~\ref{def:verdict} is then applied.

\begin{definition}[Standard budgets]
	\label{def:standard_budgets}
	The SEAM-$\Omega$ framework uses the following standard intervention budgets, each defined by an orthogonal projector $P$ onto a subspace of $C^{0}(F)$: The $\mathsf{AllowState}$ budget projects onto the state channel and permits correction of predicted state fields. The $\mathsf{AllowClosure}$ budget projects onto the closure channel and permits correction of hidden physics or sources. The $\mathsf{AllowSensorRejection}$ budget projects onto the observation channel and permits sensor readings to be reweighted or rejected. The $\mathsf{AllowAll}$ fallback uses $P=\Id$ and permits unrestricted intervention.
\end{definition}

For each budget, we solve the constrained quadratic program
\begin{equation}
	\label{eq:budgeted_intervention}
	\min_{\delta \in C^{0}(F)} \quad \onehalf \delta^{\top}C \delta \qquad \text{subject to}\qquad D P \delta = \omega,\quad (\Id-P)\delta = 0,
\end{equation}
where $C$ is a positive-definite cost metric. The repaired explanations are then
\begin{equation}
	\label{eq:sign_convention}
	s_{\mathrm{repaired}}  \coloneqq  s - P \delta^{\star},
\end{equation}
so that $D s_{\mathrm{repaired}} = D s - D P \delta^{\star} = \omega - \omega = 0$, recovering admissibility. We adopt this sign convention uniformly: $\delta^{\star}$ is the correction subtracted from the current stalk vector. The norm $\norm{\delta^{\star}}_{C}$ does not depend on the sign; only the interpretation does.

The closed-form solution is given by Theorem~\ref{thm:intervention}. When \eqref{eq:budgeted_intervention} is infeasible, so that $\omega \notin \im(DP)$, we report the hard cost $c_{P}^{\mathrm{hard}}=+\infty$ together with the hard residual $r_{P}^{\mathrm{hard}}(\omega)$ of Definition~\ref{def:budgeted_obstruction}, whose norm quantifies how badly the hypothesis fails.

\paragraph{Exact verdicts and auxiliary soft records.}
The system returns the per-budget record
\begin{align*}
	(\text{budget label}, \text{status}_{b}, c_{b}^{\mathrm{hard}}, r_{b}^{\mathrm{hard}}(\omega))
\end{align*}
and applies Definition~\ref{def:verdict} to the exact-feasible records. The verdict reports the full set of retained proper budgets and treats $\mathsf{AllowAll}$ only as a repairability fallback. Hard costs remain attached to their own budgets and are not combined into a cross-budget score. Auxiliary soft records are reported as specified in Definition~\ref{def:budgeted_obstruction} and always include their residuals, so their intervention magnitudes are not conflated with exact-hard verdicts. Figure~\ref{fig:budget_costs} shows such soft records for the data--physics conflict-attribution study of Section~\ref{sec:group3}.

\paragraph{Interpretation of the budget comparison.}
The budget catalog specifies the candidate diagnostic accounts. The closure-revision budget encodes the account that closure is the permitted locus of revision. The sensor-rejection budget analogously restricts revisions to observations. Other budgets encode further accounts, and the catalog is open: Theorem~\ref{thm:intervention} is stated for the orthogonal projector onto an arbitrary closed subspace $V_{P}\subseteq C^{0}(F)$, so any account that a practitioner can state as a set of permitted revisions enters the comparison as a projector, whether or not that set is one of the named channels. Exact feasibility retains or refutes each account separately. The minimum cost then measures the smallest repair within a retained account, rather than supplying a causal ranking across different allowed subspaces.

\paragraph{Refutation and unresolved attribution.}
Each entry in the catalog is separately refutable, which is what makes the comparison a test rather than a weighting. A budget whose hard problem is infeasible is ruled out by the overlap evidence rather than merely outbid, and its residual $\norm{r_{b}^{\mathrm{hard}}(\omega)}$ records how far the permitted revisions fall short of accounting for $\omega$. When every proper budget is ruled out and only the unrestricted budget remains feasible, that fallback carries no diagnostic specificity, because unrestricted revision repairs every raw defect by construction. The identifiability analysis in Section~\ref{sec:ident} then records which admissible directions remain invisible to the present observation map. In that situation, SEAM reports the unresolved subspace rather than assigning a unique interpretation to the unrestricted repair.

\begin{figure}[pos=t!]
	\centering
	\includegraphics[width=0.92\linewidth]{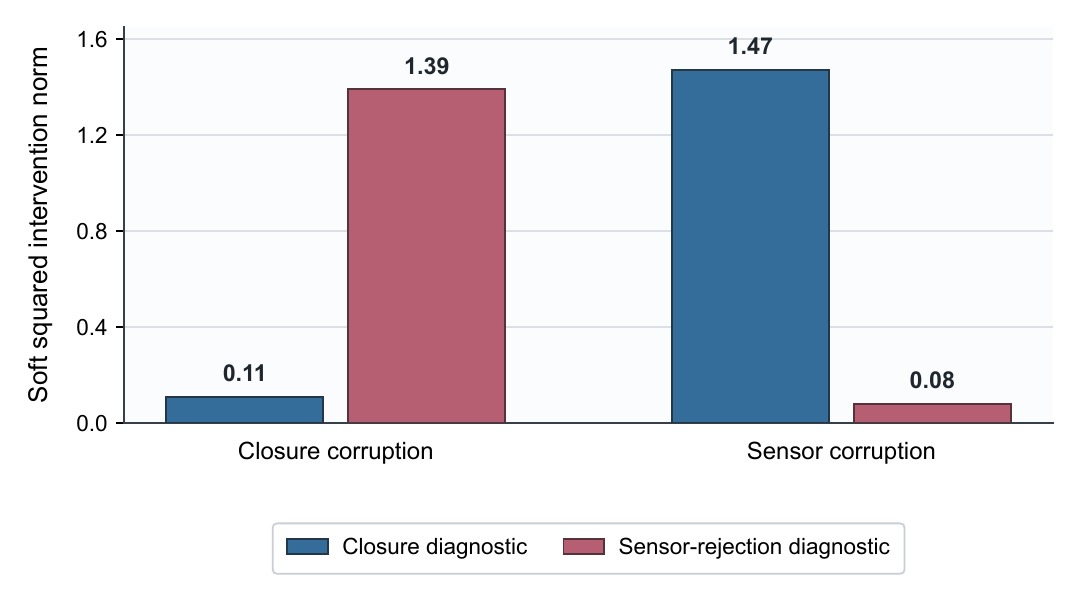}
	\caption{Soft budget-attribution ranking in the data--physics conflict-attribution study. The figure reports per-budget soft squared intervention norms in relative units for two configurations of the same three-region system. In the closure-corruption condition, the closure diagnostic is cheap, and the sensor-rejection diagnostic is expensive. In the sensor-corruption condition, the sensor-rejection diagnostic is cheap, and the closure diagnostic is expensive. For standard single-channel hard budgets under Assumption~A0, a finite hard cost requires full feasibility $\omega\in\im(DP_b)$; wrong-channel single-channel hard costs are $+\infty$ unless the non-target obstruction components vanish. The plotted ratios are therefore empirical soft diagnostics with residuals recorded in the diagnostic record, not theorem-level full hard-cost comparisons.}
	\label{fig:budget_costs}
\end{figure}

\subsection{Closure-restricted repair and missing-physics diagnosis}
\label{sec:missing_physics}

Learning or correcting unresolved closure terms is an established scientific machine learning problem, especially in data-driven turbulence modeling \citep{duraisamy2019turbulence}. Field inversion and machine learning provides a direct computational precedent by inferring spatially distributed functional corrections for deficient closure models \citep{parish2016field}. SEAM's closure-repair test determines whether adjusting the modeled closure removes an overlap obstruction, returning the corresponding minimum-cost reconstruction.

A feasible, low-cost closure-channel preimage supports the missing-closure hypothesis, and repair under the closure-revision budget yields the minimum-cost reconstruction with respect to the selected closure-channel metric. In the channel-separated model used here, the restricted repair solves the preimage problem for $\omega^{\mathrm{closure}}$; state mismatches require a state-revision or channel-coupled budget.

\Needspace{10\baselineskip} Let $\iota_{\mathrm{closure}}$ denote the zero-extension into $C^{0}(F)$, let $\Pi_{\bullet}$ denote the coordinate projection onto channel $\bullet$, and define the closure-channel block
\begin{align*}
	D_C\coloneqq \Pi_{\mathrm{closure}}D\iota_{\mathrm{closure}} \colon C^{0}(F)^{\mathrm{closure}} \to C^{1}(F)^{\mathrm{closure}},
\end{align*}
which is the whole closure-channel content of $D$ because Assumption~A0 gives $\Pi_{\bullet}D\iota_{\mathrm{closure}}=0$ for every $\bullet\ne\mathrm{closure}$. Corollary~\ref{thm:closure_recovery} supplies the minimum-norm closure correction $\delta_C^{\star}=D_C\pinv\omega^{\mathrm{closure}}\in\rowsp(D_C)$ whenever $\omega^{\mathrm{closure}}\in\im D_C$, together with the repaired explanations $(u_i,c_i-\delta_C^{\star}|_i,o_i,\epsilon_i)_{i\in I}$, whose closure-channel defect vanishes, while every non-closure component of $Ds$ is unchanged. For a raw SEAM-$\Omega$ defect $\omega=Ds$, that image condition holds automatically, since $\omega^{\mathrm{closure}}=D_C s_C$ for the closure-channel component $s_C$ of $s$, so the closure repair needs no separate image-membership test.

\begin{remark}[Physical sign and weighted closure costs]
	\label{rem:closure_sign_weighted}
	The vector $\delta_C^{\star}$ is the algebraic correction subtracted from the current closure block. Under $s_{\mathrm{repaired}} =s-\iota_{\mathrm{closure}}\delta_C^{\star}$, the physical closure increment added to the model is therefore $\widehat c_{\mathrm{missing}}\coloneqq-\delta_C^{\star}$. This increment has a physical missing-source interpretation when the closure block faithfully represents the source feature and is identifiable modulo $\ker D_C$ under the current observation map. For a non-identity closure cost $C_C\succ0$, the minimum-cost correction is
	\begin{align*}
		C_C^{-1}D_C^{\top} (D_C C_C^{-1}D_C^{\top})\pinv\omega^{\mathrm{closure}},
	\end{align*}
	which reduces to the Moore--Penrose formula in the Euclidean metric.
\end{remark}

Corollary~\ref{thm:closure_recovery} is stated in Section~\ref{sec:thm_closure_recovery} and proved in Appendix~\ref{app:thm_closure_recovery}. The hidden-source recovery for Burgers study in Section~\ref{sec:hidden_source_recovery} documents the empirical content. For inviscid Burgers with hidden closure summaries, the recovered physical closure increment $-\delta_{C}^{\star}$ has its peak at $\hat{x}_{0} = 0.501$, within $0.2\%$ of the true center $x_{0} = 0.5$ and displaced by $1.7\%$ of the source width $w = 0.06$. The projected recovery has a closure-channel repair residual below $5\%$; the exact image-membership case of Corollary~\ref{thm:closure_recovery} has zero residual. The accompanying state-revision result is a soft attribution record with its residual relative to the same closure-channel right-hand side.

\section{Identifiability}
\label{sec:ident}

Diagnosis tells us what disagrees. Identifiability analysis then determines which admissibility-preserving adjustments are distinguishable by current observations. This distinction separates detected inconsistency from directions that remain unresolved by the available data.

\subsection{Blind admissible directions and the observable quotient}
\label{sec:blind_observable}

Throughout this section, we fix a linear observation map $O\colon C^{0}(F) \to Y$ with $Y$ a finite-dimensional output space. The observation map encodes how sensors see the global section $s\in C^{0}(F)$. Typically, $O$ has a block-diagonal structure that extracts the observation-channel components of the explanations.

The kernel $\ker D$ of the coboundary is the subspace of admissible directions whose restrictions agree on every overlap, thereby leaving the overlap-disagreement vector unchanged. Within $\ker D$, we distinguish two operationally important subspaces.

\begin{definition}[Blind admissible subspace and observable
		admissible quotient]
	\label{def:blind_observable}
	The blind admissible subspace is
	\begin{align*}
		\Nblind  \coloneqq  \ker D  \cap  \ker O  \subseteq  C^{0}(F).
	\end{align*}
	When the dependence of $\Nblind$ on the observation map matters, we write this subspace as $\Nblind(O)$. Directions in $\Nblind$ preserve admissibility and are invisible to the current observation map. The observable admissible quotient is
	\begin{align*}
		\Iobs  \coloneqq  \ker D  /  \Nblind \cong (\ker D)  \cap  \Nblind^{\perp}.
	\end{align*}
	The isomorphism uses the orthogonal complement inside $\ker D$. Directions in $\Iobs$ preserve admissibility and are visible to current observations.
\end{definition}

\begin{remark}[Observable and blind admissible directions]
	Together, $\Nblind$ and $\Iobs$ describe the current observational resolution within the admissible space. The subspace $\Nblind$ contains the unresolved directions, while $\Iobs$ parameterizes the resolved classes.
\end{remark}

The two subspaces have complementary dimensions. Proposition~\ref{thm:identifiability} records the identity
\begin{equation}
	\label{eq:ident_dimensions}
	\dim \Nblind = \dim \ker D  -  \rank(O|_{\ker D}), \qquad \dim \Iobs = \rank(O|_{\ker D}),
\end{equation}
so that $\dim \ker D = \dim \Nblind + \dim \Iobs$. The proof is a direct application of rank--nullity to $O|_{\ker D}$ with $\ker(O|_{\ker D}) = \ker D \cap \ker O = \Nblind$; see Appendix~\ref{app:thm_identifiability}.

\paragraph{Computing $\Nblind$.}
The numerical procedure begins by building an orthonormal basis $V_{\ker}$ of $\ker D$ with a full singular value decomposition (SVD) or an explicit rank-revealing nullspace routine, ensuring that structural right-null directions are retained. The procedure then forms the restricted operator $O V_{\ker}$ and computes either its full right-nullspace basis or a full SVD\@. All right singular directions whose singular values lie below a tolerance $\eta_{\mathrm{svd}}$ are selected, including structural right-null directions omitted by an economy decomposition. Finally, $V_{\ker}$ maps these vectors back to $C^{0}(F)$ to obtain an orthonormal basis $B_{\Nblind}$ of $\Nblind$. The certificate records several checks:
\begin{align*}
	\norm{D B_{\Nblind}}\leq \eta_D\norm{B_{\Nblind}},\qquad \norm{O B_{\Nblind}}\leq \eta_O\norm{B_{\Nblind}},\qquad \norm{B_{\Nblind}^{\top}B_{\Nblind}-\Id}\leq \eta_{\mathrm{orth}},
\end{align*}
together with the dimension identity in Proposition~\ref{thm:identifiability}. The blind-closure identifiability study reports the observation-invisibility component $\norm{O B_{\Nblind}}<10^{-14}$.

\paragraph{What $\Nblind$ tells us.}
The blind admissible subspace specifies the resolution of the SEAM diagnostic. A nonzero $\dim\Nblind$ means that there exist directions in which the local explanations can be adjusted without violating admissibility and without changing any current observation. The identifiability report marks these directions as unresolved by the present data, qualifying any attribution accordingly.

\section{Monitoring learned generators}
\label{sec:monitor}

The diagnostic of Sections~\ref{sec:diag} and~\ref{sec:ident} treats the local generators as static. For learned generators such as neural operators, PINNs, regression models, and hybrid scientific pipelines, the family of local explanations depends on input data that may shift over time. SEAM-$\Omega$ extends naturally to streaming monitoring of these systems.

Let $G_{\theta}$ be a trained operator with parameters $\theta$, and let $\{x_{t}\}_{t\geq 0}$ be a streaming input sequence. Examples include time series of PDE initial conditions, sensor data, and regulatory inputs. At each time $t$, we apply to $x_{t}$ the family of local generators $\{G_{i,\theta}\}_{i\in I}$ obtained by restricting the operator to each region, thereby producing a family of local explanations $e_{i}(t) = G_{i,\theta}(x_{t})$.

\begin{definition}[Streaming obstruction]
	\label{def:streaming_omega}
	The stacked explanation and streaming obstruction at time $t$ are
	\begin{align*}
		s(t)      & \coloneqq (\mathrm{vec}(e_i(t)))_{i\in I}\in C^0(F), \\
		\omega(t) & \coloneqq D s(t)\in C^1(F).
	\end{align*}
\end{definition}

The streaming obstruction is a time-indexed family of vectors in $C^{1}(F)$. At each time, the report contains the total disagreement $\norm{\omega(t)}$ and the norm $\norm{\omega^{\bullet}(t)}$ for each channel. These quantities can be compared with contemporaneous model error or with a baseline recorded during training, but SEAM does not assign a universal alarm threshold.

\paragraph{Example: neural operator under covariate shift.}
Mechanism~M4 appears here in streaming form. For the Fourier neural operator (FNO) architecture \citep{li2021fourier}, trained on a distribution of Burgers initial conditions and evaluated on out-of-distribution (OOD) inputs, the FNO OOD monitoring study in Section~\ref{sec:fno_corr} shows a correlation between $\norm{\omega(t)}$ and the $L^{2}$ prediction error relative to a high-resolution reference. This correlation is the empirical basis for using $\omega$ as a consistency monitor. Channel norms localize which part of the regional record changes, but the experiment does not establish a calibrated probabilistic detector or a universal causal interpretation for that change.

\section{Theoretical results}
\label{sec:theory}

This section collects two central theorems together with companion identifiability and closure-recovery results, and states their shared model conditions. Proofs are consolidated in Appendix~\ref{app:proofs} in the order stated here. Throughout, $D$ denotes the assembled coboundary matrix of an explanation sheaf $F$ over the 1-skeleton of a finite cover's nerve, $D\pinv$ its Moore--Penrose pseudoinverse and associated projector construction \citep{penrose1955generalized}, $P$ an orthogonal projector onto a budget subspace of $C^{0}(F)$, and $C$ a positive-definite cost metric on $C^{0}(F)$. We write $DP$ for the projected coboundary and let $\im(DP)$ denote its image. Observation maps $O\colon C^{0}(F)\to Y$ are linear.

\paragraph{Model conditions and conventions for Section~\ref{sec:theory}.}
The result suite rests on the standing structural Assumption~A0 together with the four repair and model conventions declared below.
\paragraph{A0: block-diagonal channel restrictions.}
The channel-resolved results use Assumption~A0 from Section~\ref{sec:channel_decomp}, where the assumption is adopted for every restriction map of SEAM-$\Omega$.

\paragraph{A1: repair sign convention.}
Corrections are subtracted from the current stalk vector. For a full hard repair, the convention reads $s_{\mathrm{repaired}} = s - P \delta^{\star}$ with $DP \delta^{\star} = \omega$. Corollary~\ref{thm:closure_recovery} uses the channel-restricted version $s_{\mathrm{repaired}}=s-\iota_{\mathrm{closure}}\delta_C^\star$ with $D_C\delta_C^\star=\omega^{\mathrm{closure}}$. This convention is used by Theorem~\ref{thm:intervention}, Corollary~\ref{thm:closure_recovery}, and the repair records of Sections~\ref{sec:diag} and~\ref{sec:missing_physics}.

\paragraph{A2: positive-definite cost metric.}
The condition $C\succ 0$ holds on $C^{0}(F)$, giving the strict convexity required by Theorem~\ref{thm:intervention}.

\paragraph{A3: row-space-aligned one-sided closure repair.}
In Theorem~\ref{thm:conservation}, the conservation-redistribution repair direction at the repaired endpoint lies in the domain-side row space of that endpoint's closure restriction; the baseline overlap closures cancel; the neighboring endpoint contributes zero restricted repair on the overlap, $\rho^{\mathrm{closure}}_{\bar q,e}\delta c_{\bar q}=0$; and $\rho^{\mathrm{closure}}_{q,e}$ has positive rank. The projected form covers repair directions outside the row space; rank-zero restrictions carry no closure signal on the overlap.

\paragraph{A4: pairwise-overlap construction.}
All results use the 1-skeleton of the nerve.

Table~\ref{tab:theorems} summarizes the result suite and its roles in SEAM-$\Omega$ before the individual theorem statements. For Theorem~\ref{thm:conservation} and its row in the table, $\sigma_{\min}^{+}$ denotes the smallest nonzero singular value; the distinction is necessary because the ordinary smallest singular value $\sigma_{\min}$ is zero for a rank-deficient operator.

\begin{table}[pos=t!]
	\centering
	\caption{Central results of SEAM-$\Omega$ and their operational roles.}
	\label{tab:theorems}
	\small
	\begin{tabularx}{\linewidth}{ l l >{\raggedright\arraybackslash}X}
		\toprule
		Result      & Subject                             & Role                                                                                                                                                                                                                                                                                                            \\
		\midrule
		Theorem 1   & Minimum-cost budgeted intervention  & Closed-form $\delta^{\star}$ via restricted operators $A_{P}=DP\iota_{V_{P}}$, $C_{P}=\iota_{V_{P}}^{\top}C\iota_{V_{P}}$, with $s_{\mathrm{repaired}} = s - P\delta^{\star}$.                                                                                                                                  \\
		\addlinespace[1pt]
		Proposition & Identifiability dimensions          & $\dim\Nblind = \dim\ker D - \rank(O|_{\ker D})$, while $\dim\Iobs = \rank(O|_{\ker D})$.                                                                                                                                                                                                                        \\
		\addlinespace[1pt]
		Corollary   & Closure-restricted recoverability   & $\delta_{C}^{\star} = D_{C}\pinv\omega^{\mathrm{closure}} \in\im(D_{C}^{\top})=\rowsp(D_C)$ when $\omega^{\mathrm{closure}}\in\im D_{C}$. The repaired closure blocks are $c_{i}-\delta_{C}^{\star}|_{i}$.                                                                                                      \\
		\addlinespace[1pt]
		Theorem 2   & Conservation-contract detectability & Under A3, comprising baseline cancellation, one-sided zero neighbor contribution, positive rank, and row-space alignment $\delta c_q\in\rowsp(\rho^{\mathrm{closure}}_{q,e})$, $\norm{\omega^{\mathrm{closure}}_{e}} \geq \sigma_{\min}^{+} \epsilon_{\mathrm{repair}}$. The projected bound applies otherwise. \\
		\bottomrule
	\end{tabularx}
\end{table}
\FloatBarrier

\subsection{Theorem 1: minimum-cost budgeted intervention}
\label{sec:thm_intervention}

\begin{seamtheorem}[Minimum-cost budgeted intervention]
	\label{thm:intervention}
	Let $P$ be the orthogonal projector onto a closed subspace $V_{P}\subseteq C^{0}(F)$ called the budget subspace, and let $C\succ 0$ on $C^{0}(F)$. Define the restricted operators
	\begin{align*}
		A_{P}  \coloneqq  D P \iota_{V_{P}}\colon V_{P}\to C^{1}(F), \qquad C_{P}  \coloneqq  \iota_{V_{P}}^{\top} C \iota_{V_{P}} \succ 0 \text{ on } V_{P},
	\end{align*}
	where $\iota_{V_{P}}\colon V_{P}\hookrightarrow C^{0}(F)$ is the inclusion. Consider the constrained quadratic program
	\begin{equation}
		\label{eq:thm_intervention_problem}
		\min_{\delta \in C^{0}(F)}  \onehalf \delta^{\top} C \delta \quad\text{subject to}\quad D P \delta = \omega, (\Id-P) \delta = 0.
	\end{equation}

	\medskip\noindent\textup{\textbf{Feasibility criterion.}}
	The problem is feasible if and only if $\omega \in \im A_{P}$.

	\medskip\noindent\textup{\textbf{Closed-form solution.}}
	When feasible, the unique minimizer is
	\begin{equation}
		\label{eq:thm_intervention_closed}
		\delta^{\star} = \iota_{V_{P}} C_{P}^{-1} A_{P}^{\top} ( A_{P} C_{P}^{-1} A_{P}^{\top} )\pinv \omega,
	\end{equation}
	and the reported squared intervention cost is $\norm{\delta^{\star}}_{C}^{2} = \omega^{\top}(A_{P}C_{P}^{-1}A_{P}^{\top})\pinv\omega$. The optimum of the objective $\onehalf \delta^{\top}C\delta$ in \eqref{eq:thm_intervention_problem} is one half of this value.

	\medskip\noindent\textup{\textbf{Soft Tikhonov form.}}
	For $\lambda > 0$, the unique minimizer of $\onehalf \delta^{\top}C\delta + (\lambda/2)\norm{D\delta - \omega}^{2}$ over $\delta \in V_{P}$ is
	\begin{equation}
		\label{eq:thm_intervention_soft}
		\delta^{\star}_{\lambda} = \iota_{V_{P}} (A_{P}^{\top}A_{P} + \lambda^{-1} C_{P})^{-1} A_{P}^{\top} \omega,
	\end{equation}
	which is well-defined for all $\lambda \in (0, \infty)$.

	\medskip\noindent\textup{\textbf{Antitonicity in budget.}}
	For a fixed ambient cost metric $C$, suppose that $V_{P_{1}} \subseteq V_{P_{2}}$, meaning that the larger budget contains the smaller. Then, for every $\omega$ for which both problems are feasible under that same $C$, the optimal costs satisfy $\norm{\delta^{\star}(P_{2})}_{C}^{2} \leq \norm{\delta^{\star}(P_{1})}_{C}^{2}$. Budget antitonicity therefore requires a shared ambient cost metric;
	budget inclusion alone is insufficient when the two budgets use different budget-specific metrics.

	\medskip\noindent\textup{\textbf{Sign convention.}}
	The repaired stalk vector is $s_{\mathrm{repaired}} = s - P\delta^{\star}$ with $\delta^{\star}\in V_{P}$, so that
	\begin{align*}
		D s_{\mathrm{repaired}} = \omega-DP\delta^{\star} = 0.
	\end{align*}
	The $V_{P}^{\perp}$ component of the stalk vector is unchanged. If $P$ is a channel projector, non-target channels are unchanged under Assumption~A0.
\end{seamtheorem}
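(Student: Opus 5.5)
The plan is to reduce the constrained program \eqref{eq:thm_intervention_problem} to an unconstrained-domain problem on the budget subspace. The constraint $(\Id-P)\delta=0$ says exactly that $\delta=\iota_{V_P}x$ for a unique $x\in V_P$. On such $\delta$ we have $P\delta=\delta$, so $DP\delta=A_Px$ and $\onehalf\delta^\top C\delta=\onehalf x^\top C_Px$. Here $C_P\succ0$ because $C\succ0$ and $\iota_{V_P}$ is injective.

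The feasibility criterion follows at once: a feasible $\delta$ exists if and only if some $x$ satisfies $A_Px=\omega$, that is, if and only if $\omega\in\im A_P$. The sign-convention clause is also immediate. Since $\delta^\star\in V_P$ we have $P\delta^\star=\delta^\star$, and $Ds_{\mathrm{repaired}}=Ds-DP\delta^\star=\omega-\omega=0$. The $V_P^\perp$ component is untouched because $P\delta^\star\in V_P$. For a channel projector, Assumption~A0 means that $P$ acts blockwise, so the other channel blocks of $s$ are not modified.

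For the closed form, I would change variables by $y=C_P^{1/2}x$. The problem then becomes minimum-norm solution of $By=\omega$ with $B=A_PC_P^{-1/2}$, whose unique solution is $y^\star=B\pinv\omega=B^\top(BB^\top)\pinv\omega$. Here I use the identity $B\pinv=B^\top(BB^\top)\pinv$, valid for any matrix, which one can check from the SVD. Uniqueness comes from strict convexity under A2, or equivalently from the fact that $y^\star$ is the unique feasible point in $\rowsp B$. Undoing the substitution gives $x^\star=C_P^{-1}A_P^\top(A_PC_P^{-1}A_P^\top)\pinv\omega$, and $\delta^\star=\iota_{V_P}x^\star$ as in \eqref{eq:thm_intervention_closed}. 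The cost is $\|y^\star\|^2=\omega^\top(BB^\top)\pinv BB^\top(BB^\top)\pinv\omega$, which equals $\omega^\top(BB^\top)\pinv\omega$ by the Penrose identity $M\pinv MM\pinv=M\pinv$ for symmetric $M$. The halving remark about the objective is trivial.

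For the soft Tikhonov form, I would substitute $\delta=\iota_{V_P}x$. Then $D\delta=A_Px$, since $P$ is the identity on $V_P$. The objective becomes $\onehalf x^\top C_Px+\tfrac{\lambda}{2}\|A_Px-\omega\|^2$, a quadratic with Hessian $C_P+\lambda A_P^\top A_P\succ0$ for every $\lambda>0$. Setting the gradient to zero and dividing by $\lambda$ gives \eqref{eq:thm_intervention_soft}, and the Hessian shows the inverse exists.

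For antitonicity, work in the original variable $\delta$. With the shared ambient $C$, the feasible set for $P_1$ is $\{\delta\in V_{P_1}: D\delta=\omega\}$, which is contained in the corresponding set for $P_2$ because $V_{P_1}\subseteq V_{P_2}$. A minimum over a larger set cannot be larger, which gives the claimed inequality.

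I expect the main obstacle to be bookkeeping rather than depth. The key point there is that $DP\iota_{V_P}=D\iota_{V_P}$ is required in both the feasibility and antitonicity arguments. The other delicate step is the pseudoinverse identity when $A_P$ is rank-deficient, that is, when $A_PC_P^{-1}A_P^\top$ is singular. I would handle that through the SVD of $B$ rather than through normal equations.
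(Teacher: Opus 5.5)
Your proposal is correct. It differs from the paper's proof mainly in how the closed form is derived.

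\textbf{The paper's route.} It forms a Lagrangian on $V_P$ and reads off $z=C_P^{-1}A_P^{\top}\mu$ from stationarity. It then shows the multiplier equation $A_PC_P^{-1}A_P^{\top}\mu=\omega$ is solvable because $\im(A_PC_P^{-1}A_P^{\top})=\im A_P$, and substitutes $\mu^{\star}=(A_PC_P^{-1}A_P^{\top})\pinv\omega$. Optimality rests on the standard fact that a stationary point of a convex equality-constrained quadratic is its minimizer.

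\textbf{Your route.} You whiten with $y=C_P^{1/2}x$ and reduce to the minimum-norm preimage problem for $B=A_PC_P^{-1/2}$. This is the content of Lemma~\ref{lem:min_norm_preimage}, applicable since $\im B=\im A_P$. You then convert $B\pinv\omega$ to the stated formula through $B\pinv=B^{\top}(BB^{\top})\pinv$.

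\textbf{Comparison.} Your argument certifies global optimality directly through the orthogonal-projection characterization. It also absorbs rank deficiency of $A_P$ into the SVD of $B$ without any multiplier bookkeeping. The price is a matrix square root and one extra pseudoinverse identity. The paper's route avoids $C_P^{1/2}$ and exposes the dual variable.

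The cost computation via $M\pinv MM\pinv=M\pinv$, and the Tikhonov and antitonicity arguments, coincide with the paper's. You are more careful in two places:
\begin{itemize}
\item You make explicit that $DP\iota_{V_P}=D\iota_{V_P}$, which the paper's set-inclusion argument for antitonicity uses silently.
\item You verify the sign-convention clause, which the paper's proof leaves unaddressed.
\end{itemize}

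There is one small attribution slip. A channel projector acts blockwise by definition, not because of Assumption~A0. What A0 adds is that $DP\delta^{\star}$ lies in the target channel of $C^{1}(F)$, so the non-target defect components are untouched as well.
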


\begin{remark}[Restricted inverse and commutativity]
	\label{rem:restricted_inverse}
	The identity $(PCP)\pinv = PC^{-1}P$ is valid in the commutative case $[P,C]=0$ but not for a general metric coupling $V_{P}$ and $V_{P}^{\perp}$. Restricting the metric to $V_P$ gives the compressed inverse
	\begin{align*}
		\iota_{V_P} (\iota_{V_P}^{\top}C\iota_{V_P})^{-1} \iota_{V_P}^{\top},
	\end{align*}
	not generally $PC^{-1}P$. The formulation above with $A_{P}$ and $C_{P}$ avoids the commutativity assumption entirely, and the reference implementation uses this restricted form. For $C = \Id$, as adopted in all experiments, the restricted and ambient expressions coincide.
\end{remark}

\subsection{Identifiability dimensions}
\label{sec:thm_identifiability}

\begin{proposition}[Identifiability dimensions]
	\label{thm:identifiability}
	Let $D\colon C^{0}(F) \to C^{1}(F)$ be the coboundary and let $O\colon C^{0}(F) \to Y$ be a linear observation map. Define
	\begin{align*}
		\Nblind  \coloneqq  \ker D \cap \ker O, \qquad \Iobs  \coloneqq  \ker D / \Nblind.
	\end{align*}
	Then the blind-space dimension is $\dim\Nblind=\dim\ker D-\rank(O|_{\ker D})$, the observable quotient has dimension $\dim\Iobs=\rank(O|_{\ker D})$, and the two dimensions sum to $\dim\ker D=\dim\Nblind+\dim\Iobs$. Moreover, $\Nblind$ is the set of admissibility-preserving directions that are also invisible to $O$, while $\Iobs$ parameterizes the admissibility-preserving directions that $O$ can resolve.
\end{proposition}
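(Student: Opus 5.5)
The plan is to apply rank--nullity to the restricted observation map $O|_{\ker D}\colon \ker D\to Y$. All spaces are finite-dimensional, because $C^{0}(F)=\bigoplus_{i}\Fui$ is a finite direct sum of finite-dimensional stalks and $Y$ is finite-dimensional by hypothesis. So $\ker D$ is a finite-dimensional subspace of $C^{0}(F)$, and $O|_{\ker D}$ is a linear map between finite-dimensional spaces.

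\textbf{Step 1: identify the kernel.} For $v\in\ker D$ we have $O|_{\ker D}v=0$ if and only if $v\in\ker O$. Hence $\ker(O|_{\ker D})=\ker D\cap\ker O=\Nblind$.

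\textbf{Step 2: rank--nullity.} Applied to $O|_{\ker D}$, Step~1 gives
\begin{align*}
\dim\ker D=\dim\Nblind+\rank(O|_{\ker D}),
\end{align*}
which is the first dimension formula.

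\textbf{Step 3: the quotient.} Since $\Nblind$ is a subspace of $\ker D$, the quotient satisfies $\dim\Iobs=\dim\ker D-\dim\Nblind$, and by Step~2 this equals $\rank(O|_{\ker D})$. Equivalently, the first isomorphism theorem gives $\Iobs=\ker D/\ker(O|_{\ker D})\cong \im(O|_{\ker D})$. This isomorphism is the substance of the interpretive clause: $\Iobs$ is in bijection with the observation signatures $Oy$ produced by admissible directions $y$. The sum identity $\dim\ker D=\dim\Nblind+\dim\Iobs$ then follows immediately. For the concrete realization used in Definition~\ref{def:blind_observable}, the orthogonal decomposition $\ker D=\Nblind\oplus(\ker D\cap\Nblind^{\perp})$ shows that $\ker D\cap\Nblind^{\perp}$ is a complement of $\Nblind$ in $\ker D$, so it is isomorphic to the quotient.

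\textbf{Step 4: interpretive statements.} Elements of $\ker D$ are exactly the directions $v$ with $D(s+v)=Ds$, that is, the admissibility-preserving directions. Those that also lie in $\ker O$ leave every observation unchanged, which is the claim about $\Nblind$. Two admissible directions $v,w$ have the same observations exactly when $v-w\in\Nblind$, so the classes in $\Iobs$ are precisely the admissible directions as $O$ can distinguish them.

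There is no real obstacle here. The only point needing care is to state that finite-dimensionality, inherited from the finite cover and finite stalks, is what licenses rank--nullity.
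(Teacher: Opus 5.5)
Your proposal is correct and follows the paper's proof essentially step for step: identify $\ker(O|_{\ker D})=\ker D\cap\ker O=\Nblind$, apply rank--nullity to $O|_{\ker D}$, and use the quotient dimension formula to obtain $\dim\Iobs=\rank(O|_{\ker D})$ and the sum identity. Your remarks on finite-dimensionality and the first-isomorphism reading $\Iobs\cong\im(O|_{\ker D})$ make the interpretive clause more explicit than the paper does, but they do not change the argument.
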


\subsection{Closure-restricted recoverability}
\label{sec:thm_closure_recovery}

\begin{corollary}[Closure-restricted recoverability]
	\label{thm:closure_recovery}
	Let $D_{C}$ be the closure-channel block of $D$ under Assumption~A0, \ie, the typed map $D_C\colon C^{0}(F)^{\mathrm{closure}}\to C^{1}(F)^{\mathrm{closure}}$ obtained by restricting the domain to the closure channel and projecting the codomain onto that channel. Suppose $\omega^{\mathrm{closure}} \in \im D_{C}$. Then the unique minimum-norm closure correction in the Euclidean metric is
	\begin{align*}
		\delta_{C}^{\star}  =  D_{C}\pinv \omega^{\mathrm{closure}} \in  \im(D_{C}^{\top}) =  \rowsp(D_C) =  (\ker D_{C})^{\perp}.
	\end{align*}
	The repaired stalks $(u_{i},  c_{i} - \delta_{C}^{\star}|_{i},  o_{i}, \epsilon_{i})_{i\in I}$ have zero closure-channel admissibility defect, and the non-closure channels of $\omega$ are unchanged. Under the sign convention, the zero-extended correction is subtracted consistently with the channel-restricted convention in A1. The recovered $\delta_{C}^{\star}$ is the unique element of $\im(D_C^\top)=\rowsp(D_C)$ that achieves closure-channel admissibility, but adding any element of $\ker D_{C}$ leaves the closure obstruction unchanged. Such a closure direction belongs to $\Nblind$ only after embedding that direction into the full stalk space and checking membership in $\ker D\cap\ker O$ according to Proposition~\ref{thm:identifiability}. For raw admissibility defects $\omega=Ds$, Assumption~A0 makes $\omega^{\mathrm{closure}}\in\im D_C$ automatic.
\end{corollary}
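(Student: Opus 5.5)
The plan is to obtain Corollary~\ref{thm:closure_recovery} as the specialization of Theorem~\ref{thm:intervention} to the closure channel with the Euclidean metric, and to use Assumption~A0 to pass from the full coboundary to its closure block. Take $P=P_{\mathrm{closure}}$, the orthogonal coordinate projector onto the closure channel, so that $V_P=C^{0}(F)^{\mathrm{closure}}$, $\iota_{V_P}=\iota_{\mathrm{closure}}$, and $C=\Id$, hence $C_P=\Id$.

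By Assumption~A0, $\Pi_{\bullet}D\iota_{\mathrm{closure}}=0$ for every $\bullet\neq\mathrm{closure}$. Therefore $A_P=D\iota_{\mathrm{closure}}$ equals $D_C$ followed by the zero-extension of the closure codomain block. Note that the full equation $DP\delta=\omega$ is solvable only if the non-closure components of $\omega$ vanish. The corollary therefore concerns the channel-restricted problem
\[
\min \tfrac12\norm{\delta_C}^{2}\quad\text{subject to}\quad D_C\delta_C=\omega^{\mathrm{closure}},
\]
and I will argue directly for this problem rather than through the full theorem. Two routes are available: apply the closed form of Theorem~\ref{thm:intervention} with $D$ replaced by $D_C$, or use the standard Moore--Penrose argument.

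The key steps, in order, are as follows.

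First, feasibility is exactly the hypothesis $\omega^{\mathrm{closure}}\in\im D_C$. In that case $\delta_C^{\star}\coloneqq D_C\pinv\omega^{\mathrm{closure}}$ satisfies $D_C\delta_C^{\star}=D_CD_C\pinv\omega^{\mathrm{closure}}=\omega^{\mathrm{closure}}$, because $D_CD_C\pinv$ is the orthogonal projector onto $\im D_C$. The same vector is what the closed form of Theorem~\ref{thm:intervention} gives, via the identity $D_C^{\top}(D_CD_C^{\top})\pinv=D_C\pinv$.

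Second, $\delta_C^{\star}\in\im(D_C\pinv)=\im(D_C^{\top})=(\ker D_C)^{\perp}$, which is the row space of $D_C$. Every feasible correction has the form $\delta_C^{\star}+k$ with $k\in\ker D_C$. Pythagoras then gives $\norm{\delta_C^{\star}+k}^{2}=\norm{\delta_C^{\star}}^{2}+\norm{k}^{2}$. This yields minimality, and uniqueness both as the minimizer and as the only feasible element of the row space.

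Third, for the repair, set $s'=s-\iota_{\mathrm{closure}}\delta_C^{\star}$. By A0 and linearity, $(Ds')^{\mathrm{closure}}=\omega^{\mathrm{closure}}-D_C\delta_C^{\star}=0$, and $(Ds')^{\bullet}=\omega^{\bullet}-\Pi_{\bullet}D\iota_{\mathrm{closure}}\delta_C^{\star}=\omega^{\bullet}$ for $\bullet\neq\mathrm{closure}$. In stalk coordinates this repair reads $c_i-\delta_C^{\star}|_i$, consistent with convention A1. The same Pythagorean identity shows that adding any $k\in\ker D_C$ leaves the closure defect unchanged.

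Fourth, for raw defects $\omega=Ds$, write $s=\sum_\bullet\iota_\bullet s_\bullet$. A0 then gives $\omega^{\mathrm{closure}}=\Pi_{\mathrm{closure}}D\iota_{\mathrm{closure}}s_C=D_Cs_C$, so the image condition holds automatically.

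The remark about $\Nblind$ is definitional and needs no argument. A vector $\iota_{\mathrm{closure}}k$ with $k\in\ker D_C$ lies in $\ker D$ by A0, since all other channel blocks vanish. Whether it lies in $\Nblind$ then depends only on the additional condition $O\iota_{\mathrm{closure}}k=0$, as in Proposition~\ref{thm:identifiability}.

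The only step needing care is the bookkeeping in the first step. There the typed map $D_C$ must be identified with $A_P$ restricted to the closure codomain, and the block vanishing from A0 must be invoked explicitly, so that the full-space closed form reduces to the channel pseudoinverse without an implicit commutativity assumption.
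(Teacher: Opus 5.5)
Your proposal is correct and matches the paper's proof. Both reduce to the channel-restricted problem $\min\tfrac12\norm{z}^{2}$ subject to $D_Cz=\omega^{\mathrm{closure}}$, solve it by the Moore--Penrose minimum-norm preimage (Lemma~\ref{lem:min_norm_preimage}, equivalently Theorem~\ref{thm:intervention} with $C=\Id$), obtain membership in $\rowsp(D_C)$ from the pseudoinverse range identity, and use the block-diagonality of $D$ under Assumption~A0 to show that the zero-extended repair leaves the non-closure channels unchanged; you additionally spell out the automatic image condition for raw defects and the $\Nblind$ remark, which the appendix proof leaves implicit. One small slip: the invariance of the closure defect under adding $k\in\ker D_C$ follows from linearity ($D_Ck=0$), not from the Pythagorean identity.
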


\subsection{Theorem 2: conservation-contract detectability}
\label{sec:thm_conservation}

\begin{seamtheorem}[Conservation-contract detectability]
	\label{thm:conservation}
	Let $F$ be an explanation sheaf with closure-channel restrictions $\rho^{\mathrm{closure}}_{q,e}$ under Assumption~A0. Let $e=(a,b)$ be an oriented overlap edge and let $q\in\{a,b\}$ be the repaired endpoint. Suppose $U_q$ applies a conservation-contract repair whose actual closure-channel vector entering the restriction is $\delta c_q$, and set $\epsilon_{\mathrm{repair}}\coloneqq\norm{\delta c_q}$.

	Assume the following conditions, which together instantiate A3:

	\medskip\noindent\textup{\textbf{Baseline cancellation.}}
	The baseline closure restrictions agree on $e$ before repair: $\rho^{\mathrm{closure}}_{a,e}c_{a}^{0} = \rho^{\mathrm{closure}}_{b,e}c_{b}^{0}$.

	\medskip\noindent\textup{\textbf{Row-space alignment.}}
	The repair direction $\delta c_{q}$ lies in $\rowsp(\rho^{\mathrm{closure}}_{q,e})$.

	\medskip\noindent\textup{\textbf{One-sided zero contribution.}}
	The other endpoint's restricted repair contribution on $e$ is zero: $\rho^{\mathrm{closure}}_{\bar q,e}\delta c_{\bar q}=0$, where $\bar q$ denotes the endpoint other than $q$.

	\medskip\noindent\textup{\textbf{Positive-rank restriction.}}
	The map $\rho^{\mathrm{closure}}_{q,e}$ has positive rank.

	\medskip\noindent\textup{\textbf{Conclusion.}}
	Let $\varepsilon_a\coloneqq-1$ and $\varepsilon_b\coloneqq+1$. Then, under baseline cancellation and the one-sided zero-contribution condition, the restricted defect and its closure-channel lower bound are
	\begin{equation}
		\label{eq:thm_conservation}
		\omega^{\mathrm{closure}}_{e} = \varepsilon_q \rho^{\mathrm{closure}}_{q,e}\delta c_q, \qquad \norm{\omega^{\mathrm{closure}}_{e}} \geq \sigma_{\min}^{+}(\rho^{\mathrm{closure}}_{q,e}) \cdot \epsilon_{\mathrm{repair}}.
	\end{equation}

	\medskip\noindent\textup{\textbf{Projected variant without row-space alignment.}}
	If baseline cancellation, one-sided zero contribution, and the positive-rank restriction hold but row-space alignment fails, the same argument gives the projected bound
	\begin{equation}
		\label{eq:thm_conservation_projected}
		\norm{\omega^{\mathrm{closure}}_{e}} \geq \sigma_{\min}^{+}(\rho^{\mathrm{closure}}_{q,e}) \cdot \norm{\Pi_{\rowsp(\rho^{\mathrm{closure}}_{q,e})} \delta c_{q}},
	\end{equation}
	where $\Pi_{\rowsp(\rho^{\mathrm{closure}}_{q,e})}$ denotes the orthogonal projector onto the indicated row space. This bound equals \eqref{eq:thm_conservation} under row-space alignment and is strictly weaker otherwise. If both endpoints are repaired, a positive lower bound in terms of $\norm{\delta c_q}$ additionally requires a quantitative non-cancellation assumption on the net restricted repair.

	\medskip\noindent\textup{\textbf{Rank-zero convention.}}
	If $\rank(\rho^{\mathrm{closure}}_{q,e})=0$, then $\rho^{\mathrm{closure}}_{q,e}$ carries no closure signal on the overlap: The projected row-space term is zero, and $\sigma_{\min}^{+}(\rho^{\mathrm{closure}}_{q,e})$ is not defined.
\end{seamtheorem}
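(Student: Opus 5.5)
The argument is a direct computation of the closure component of the coboundary on the single edge $e=(a,b)$, followed by a singular-value bound for a linear map restricted to its row space. First I will write the post-repair closure blocks as $c_q=c_q^0+\delta c_q$ and $c_{\bar q}=c_{\bar q}^0+\delta c_{\bar q}$, where $\delta c_{\bar q}$ may be zero. Under Assumption~A0 the restriction $\rho_{i,e}$ is block diagonal, so the closure component of $\omega_e=(Ds)_e$ equals $\rho^{\mathrm{closure}}_{b,e}c_b-\rho^{\mathrm{closure}}_{a,e}c_a$ by Definition~\ref{def:coboundary}. Expanding by linearity gives three groups of terms:
\begin{align*}
	\omega^{\mathrm{closure}}_e=\bigl(\rho^{\mathrm{closure}}_{b,e}c_b^0-\rho^{\mathrm{closure}}_{a,e}c_a^0\bigr)+\varepsilon_q\rho^{\mathrm{closure}}_{q,e}\delta c_q+\varepsilon_{\bar q}\rho^{\mathrm{closure}}_{\bar q,e}\delta c_{\bar q}.
\end{align*}
The sign $\varepsilon_q$ is $-1$ for $q=a$ and $+1$ for $q=b$, which matches the orientation in the definition. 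Baseline cancellation removes the first bracket, and the one-sided zero-contribution condition removes the last term. What remains is the identity $\omega^{\mathrm{closure}}_e=\varepsilon_q\rho^{\mathrm{closure}}_{q,e}\delta c_q$ in \eqref{eq:thm_conservation}.

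For the bound, write $R\coloneqq\rho^{\mathrm{closure}}_{q,e}$, which has positive rank $r$, and take its compact SVD $R=U\Sigma V^\top$, where $V$ has orthonormal columns spanning $\rowsp(R)$ and $\Sigma=\mathrm{diag}(\sigma_1,\dots,\sigma_r)$ with every $\sigma_k\ge\sigma_{\min}^+(R)>0$. For any vector $x$ we have $Rx=RV V^\top x=R\,\Pi_{\rowsp(R)}x$. We also have $\norm{Rx}=\norm{\Sigma V^\top x}\ge\sigma_{\min}^+\norm{V^\top x}=\sigma_{\min}^+\norm{\Pi_{\rowsp(R)}x}$, using that $U$ has orthonormal columns. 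Taking $x=\delta c_q$ and noting $|\varepsilon_q|=1$ gives the projected bound \eqref{eq:thm_conservation_projected}. Under row-space alignment $\Pi_{\rowsp(R)}\delta c_q=\delta c_q$, and this yields \eqref{eq:thm_conservation} with $\epsilon_{\mathrm{repair}}=\norm{\delta c_q}$.

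Three remaining claims then need brief justification. The projected bound is strictly weaker without alignment because $\norm{\Pi x}<\norm{x}$ whenever $x\notin\rowsp(R)$. For the rank-zero convention, $R=0$ forces $\rowsp(R)=\{0\}$, so the projection is zero and there are no nonzero singular values. For the two-endpoint remark, I will give a short example in which $\rho^{\mathrm{closure}}_{a,e}\delta c_a=\rho^{\mathrm{closure}}_{b,e}\delta c_b\ne0$; then the closure defect vanishes even though $\norm{\delta c_q}>0$, which shows that a non-cancellation hypothesis is needed.

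No step is a real obstacle. The only point needing care is bookkeeping: the orientation sign $\varepsilon_q$ has to match Definition~\ref{def:coboundary}, and A0 is what guarantees that non-closure blocks contribute nothing to the closure component of $\omega_e$.
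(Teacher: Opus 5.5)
Your proposal is correct and follows essentially the same route as the paper's proof. Baseline cancellation and one-sided zero contribution reduce the closure defect to $\varepsilon_q\rho^{\mathrm{closure}}_{q,e}\delta c_q$, and a singular-value bound on the row-space component then gives the projected bound, and \eqref{eq:thm_conservation} under alignment; you make that bound explicit through the compact SVD where the paper splits $\delta c_q$ into its $\rowsp$ and $\ker$ parts, and your coboundary expansion, strictness argument, and two-endpoint cancellation example add details the paper leaves implicit.
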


\begin{remark}[How CT-SEAM satisfies the row-space condition]
	\label{rem:conservation_row_space}
	The conservation-contract repair operator in CT-SEAM, described in Appendix~\ref{app:ctseam}, applies a uniform scalar correction over the 1D closure coordinate used by the overlap restriction. For that experimental closure block, the domain-side repair vector is assumed to lie in $\rowsp(\rho^{\mathrm{closure}}_{q,e})$. In more general closure spaces, this row-space membership must be checked explicitly; otherwise, the projected bound \eqref{eq:thm_conservation_projected} applies.
\end{remark}

\paragraph{Empirical verification.}
The conservation-contract detectability study in Section~\ref{sec:conservation_contract_detectability} verifies the bound on a three-region Burgers configuration with an active conservation repair: The observed $\norm{\omega^{\mathrm{closure}}_{12}} = 1.172$ exceeds the predicted lower bound of $\sigma_{\min}^{+} \epsilon_{\mathrm{repair}} = 0.128$ by a factor of $9.2$. The associated parametric sweep reports a pass for all 25 configurations under the prespecified $10\%$ discretization-tolerant criterion and exhibits the predicted approximately linear scaling with repair magnitude.

\paragraph{Sensitivity floor.}
The bound is tight when the repaired endpoint's closure vector $\delta c_{q}$ aligns with the right singular vector of $\rho^{\mathrm{closure}}_{q,e}$ corresponding to the smallest nonzero singular value. The bound is loose when $\delta c_{q}$ aligns with the dominant singular vectors. The sensitivity floor $\sigma_{\min}^{+}$ is therefore a worst-case detection floor; in practice, generic repair directions yield much larger closure obstructions. The slack factor is a useful diagnostic in its own right, because a small slack indicates a near-minimum-detection repair direction.

\section{Algorithms and computational workflow}
\label{sec:algos}

Here we present the core algorithms in mathematically clean form;
Appendix~\ref{app:algorithms} records the numerical conventions used by the reference implementation. The presentation groups the core diagnostic workflow and its extensions before treating the computational and audit properties of the resulting implementation.

\subsection{Diagnostic workflow and extensions}
\label{sec:diagnostic_intervention_workflow}

\paragraph{\texorpdfstring{Top-level SEAM-$\Omega$ diagnostic pipeline.}{Top-level SEAM-Omega diagnostic pipeline.}}
\label{sec:top_level_pipeline}

The top-level diagnostic pipeline is Algorithm~\ref{alg:diagnose}. The index set $\mathcal K_{\mathrm{bud}}$ labels the finite budget catalog. The pipeline returns a structured diagnostic record.

\begin{algorithm}[ht!]
	\caption{SEAM-$\Omega$ diagnostic pipeline, \texttt{diagnose}.}
	\label{alg:diagnose}
	\begin{algorithmic}[1]
		\Require cover $\cover$, local generators $\{G_{i}\}_{i\in I}$, input data $x$, restriction maps $\{\rho_{i,ij}, \rho_{j,ij}\}_{(i,j)\in E}$, observation map $O$, budget catalog $\{(P_{b}, C_{b})\}_{b\in\mathcal K_{\mathrm{bud}}}$, zero-obstruction tolerance $\tau_{\mathrm{zero}}$.
		\Ensure $\omega$, channel norms, per-budget hard records, tolerance and soft diagnostic records, identifiability basis, diagnostic record.
		\State Generate: for each $i\in I$, compute $e_{i} \gets G_{i}(x,U_{i})$ and $s_{i} \gets \mathrm{vec}(e_{i})$;
		set $s\gets(s_i)_{i\in I}$.
		\State Restrict and obstruct: assemble $D$ from the restriction maps; compute $\omega \gets D s$.
		\State Decompose channels: compute $\norm{\omega^{\bullet}}$ for $\bullet\in \{\mathrm{state},\mathrm{closure},\mathrm{obs},\mathrm{meta}\}$.
		\State Apply budgeted intervention for each budget $b$:
		\Statex \quad solve \eqref{eq:thm_intervention_problem} to obtain $\delta^{\star}_{b}$ or detect infeasibility.
		\Statex \quad record exact feasibility status;
		$c^{\mathrm{hard}}_{b}$, which is finite only for $\mathsf{exact\_feasible}$;
		optional tolerance cost $c^{\mathrm{tol}}_{b}$;
		optional soft squared intervention norm $n^{\mathrm{soft}}_{b,\lambda}$ and objective $J^{\mathrm{soft}}_{b,\lambda}$; hard residual $r_{b}^{\mathrm{hard}}(\omega)$; and any soft residual $r_{b,\lambda}^{\mathrm{soft}}(\omega)$.
		\State Analyze identifiability: compute the orthonormal basis of $\Nblind$ via Proposition~\ref{thm:identifiability}.
		\State Build the diagnostic record: bundle the obstruction, channel norms, budget records, and $\Nblind$ certificate.
		\State \Return diagnostic record.
	\end{algorithmic}
\end{algorithm}

\paragraph{Budgeted intervention solver.}
\label{sec:budget_solver}

Algorithm~\ref{alg:intervene} implements the minimum-cost intervention result of Theorem~\ref{thm:intervention} in closed form and couples that result to an explicit feasibility check. After restricting the coboundary and cost metric to the budget subspace $V_P$, the algorithm uses the hard residual to distinguish exact-feasible and tolerance-feasible repairs from infeasible cases, which are passed to the soft Tikhonov diagnostic.

\begin{algorithm}[ht!]
	\caption{Budgeted intervention solver, \texttt{synthesize\_intervention}.}
	\label{alg:intervene}
	\begin{algorithmic}[1]
		\Require $D$, $\omega$, projector $P$, cost metric $C$, feasibility tolerance $\eta_{\mathrm{feas}} > 0$, Tikhonov regularizer $\lambda > 0$ for soft mode.
		\Ensure $\delta^{\star}$, hard cost, tolerance record, soft diagnostic record, $r_{P}^{\mathrm{hard}}(\omega)$, optional $r_{P,\lambda}^{\mathrm{soft}}(\omega)$, feasibility status.
		\State Let $Q$ be an orthonormal basis and inclusion matrix for $V_P=\im P$.
		\State Form $A_{P} \gets D Q$ and $C_{P} \gets Q^{\top} C Q$.
		\State Compute $A_{P}\pinv$ and $r_{P}^{\mathrm{hard}}(\omega) \gets \omega - A_{P} A_{P}\pinv\omega$.
		\If{$\norm{r_{P}^{\mathrm{hard}}(\omega)} = 0$}
		\State Enter the theorem-level exact-feasible branch.
		\State $z^{\star} \gets C_{P}^{-1}A_{P}^{\top}(A_{P}C_{P}^{-1}A_{P}^{\top})\pinv\omega$.
		\State $\delta^{\star} \gets Q z^{\star}$.
		\State $c^{\mathrm{hard}} \gets {\delta^{\star}}^{\top} C \delta^{\star}$.
		\State \Return $(\delta^{\star}, c^{\mathrm{hard}}, r_{P}^{\mathrm{hard}}(\omega), \mathsf{exact\_feasible})$.
		\ElsIf{$\norm{r_{P}^{\mathrm{hard}}(\omega)} \leq \eta_{\mathrm{feas}}$}
		\State Enter the tolerance-feasible engineering branch.
		\State Compute $z^{\star}$ and $\delta^{\star}$ as above.
		\State $c^{\mathrm{tol}} \gets {\delta^{\star}}^{\top} C \delta^{\star}$.
		\State $c^{\mathrm{hard}}\gets +\infty$ for exact-hard verdicts.
		\State \Return $(\delta^{\star}, (c^{\mathrm{hard}},c^{\mathrm{tol}}), r_{P}^{\mathrm{hard}}(\omega), \mathsf{tolerance\_feasible})$.
		\Else
		\State Enter the soft Tikhonov branch.
		\State $z_{\lambda}^{\star}\gets (A_{P}^{\top}A_{P}+\lambda^{-1}C_{P})^{-1}A_{P}^{\top}\omega$.
		\State $\delta^{\star}_{\lambda} \gets Q z_{\lambda}^{\star}$.
		\State $r_{P,\lambda}^{\mathrm{soft}}(\omega) \gets \omega - A_P z_\lambda^{\star}$.
		\State $c^{\mathrm{hard}}\gets +\infty$.
		\State $n^{\mathrm{soft}}_{\lambda} \gets \norm{\delta^{\star}_{\lambda}}_{C}^{2} = {\delta^{\star}_{\lambda}}^{\top} C \delta^{\star}_{\lambda}$.
		\State $J^{\mathrm{soft}}_{\lambda} \gets \onehalf n^{\mathrm{soft}}_{\lambda} +(\lambda/2)\norm{r_{P,\lambda}^{\mathrm{soft}}(\omega)}^{2}$.
		\State \Return $(\delta^{\star}_{\lambda}, (c^{\mathrm{hard}},n^{\mathrm{soft}}_{\lambda}, J^{\mathrm{soft}}_{\lambda}), (r_{P}^{\mathrm{hard}}(\omega), r_{P,\lambda}^{\mathrm{soft}}(\omega)), \mathsf{infeasible\_soft\_diagnostic})$.
		\EndIf
	\end{algorithmic}
\end{algorithm}

Two implementation notes apply. First, pseudoinverses are computed via a project-specific truncated SVD tolerance, stated uniformly in Appendix~\ref{app:pinv_impl}; library defaults are not assumed, and the exact routine and version are recorded whenever an implementation relies on one. Second, the matrices $A_{P}$ and $C_P$ are typically small, so dense linear algebra is appropriate. For larger problems, exact pseudoinverse projection with a truncated or randomized SVD requires retaining every singular direction above the chosen rank tolerance; smaller retained ranks yield approximate low-rank diagnostics.

\paragraph{Identifiability.}
\label{sec:ident_alg}

Algorithm~\ref{alg:ident} implements the two-stage nullspace construction and returns its numerical certificate.

\begin{algorithm}[ht!]
	\caption{Identifiability analyzer, \texttt{analyze\_identifiability}.}
	\label{alg:ident}
	\begin{algorithmic}[1]
		\Require $D\in\R^{M\times N}$, $O$, singular-value tolerance $\eta_{\mathrm{svd}} > 0$.
		\Ensure orthonormal basis $B_{\Nblind}$ of $\Nblind$, $\dim\Nblind$, channel decomposition of $B_{\Nblind}$, multi-part certificate.
		\State Compute a full SVD or rank-revealing nullspace of $D$: $D = U_{D}\Sigma_{D}V_{D}^{\top}$.
		\State Let $r=\#\{\ell : \sigma_\ell(D)>\eta_{\mathrm{svd}}\}$ and set $V_{\ker}\gets V_{D}[:, r+1:N]$, including structural zero singular directions.
		\State Compute $O' \gets O V_{\ker}$ and its SVD $O' = U_{O'}\Sigma_{O'}V_{O'}^{\top}$.
		\State Let $V_{0}$ be a complete basis of $\ker_{\eta_{\mathrm{svd}}}(O')$, including structural right-null directions omitted by an economy SVD, and set $B_{\Nblind}\gets V_{\ker} V_{0}$.
		\State Decompose each column of $B_{\Nblind}$ into channel components by projecting onto the three primary channels and any optional contract-metadata block.
		\State Verify $\norm{D B_{\Nblind}}$, $\norm{O B_{\Nblind}}$, orthonormality, and the dimension identity of Proposition~\ref{thm:identifiability}.
		\State \Return $(B_{\Nblind}, \dim\Nblind, \text{channel decomposition}, \text{certificate})$.
	\end{algorithmic}
\end{algorithm}

\paragraph{Streaming monitoring.}
\label{sec:monitor_alg}

The monitoring pipeline runs the diagnostic pipeline at each time step of a streaming input but reuses cached operators where possible.

\begin{algorithm}[ht!]
	\caption{Streaming monitoring.}
	\label{alg:monitor}
	\begin{algorithmic}[1]
		\Require fixed cover $\cover$, fixed restrictions, fixed budgets, learned generator $G_{\theta}$, streaming inputs $\{x_{t}\}$.
		\Ensure stream of $(\omega(t), \text{channel norms}(t), \text{hard and soft diagnostic budget records}(t))$.
		\State Assemble $D$ once.
		\State Cache pseudoinverses and projectors for all standard budgets.
		\For{$t = 1, 2, \ldots$}
		\State Compute $e_{i}(t) \gets G_{i,\theta}(x_{t})$ for each $i$.
		\State Set $s(t)\gets(\mathrm{vec}(e_i(t)))_{i\in I}$ and compute $\omega(t) \gets D s(t)$.
		\State Compute channel norms and hard and soft diagnostic budget records.
		\State Emit summary.
		\EndFor
	\end{algorithmic}
\end{algorithm}

\subsection{Computational complexity and auditability}
\label{sec:complexity}

We summarize the time complexity in terms of the total stalk dimension $N \coloneqq \sum_{i} d_{i}$, the total overlap stalk dimension $M \coloneqq \sum_{(i,j)} m_{ij}$, the number of regions $|I|$, and the number of budgets $|\mathcal K_{\mathrm{bud}}|$.

The coboundary $D$ has dimensions $M\times N$ and is block sparse with $\bigO(|E|)$ nonzero blocks. Sparse storage, assembly, and matrix--vector costs are $\bigO(\operatorname{nnz}(D))$, where $\operatorname{nnz}(D)$ counts scalar nonzeros in all restriction blocks. This sparse-operation cost becomes $\bigO(M+N)$ only under a local-interpolation sparsity assumption. Computing $\omega=Ds$ costs $\bigO(MN)$ in the dense regime and $\bigO(\operatorname{nnz}(D))$ in the sparse regime.

For each budget label $b$, budgeted intervention through Algorithm~\ref{alg:intervene} forms the restricted operator $A_{P_b}=DQ_b$ with $p_b\coloneqq\dim V_{P_b}$. A dense thin SVD or an equivalent rectangular factorization costs $\bigO(Mp_b\min\{M,p_b\})$ per budget, in addition to matrix--vector products with $\omega$ and any cost-metric factorization. The dense catalog cost is $\bigO(\sum_{b\in\mathcal K_{\mathrm{bud}}}Mp_b\min\{M,p_b\})$. Directly factoring a dense non-identity restricted metric $Q_b^{\top} C_b Q_b$ adds $\bigO(p_b^3)$ unless the factor is cached or structured.

Identifiability analysis requires a rank-revealing nullspace computation for $D$ and an SVD of $O V_{\ker}$. Each analyzed matrix follows the dense compact SVD cost convention $\bigO(ab\min\{a,b\})$.

\paragraph{Diagnostic record and audit trail.}
\label{sec:case_file_main}

Every \texttt{diagnose} call returns a structured diagnostic record containing $\norm{\omega}$, channel and per-overlap norms, exact-hard feasibility and cost records, separate tolerance and soft records with their residuals, and the identifiability certificate. These fields are the quantities needed to audit the scientific verdict; no additional aggregate score or discrete classification layer is introduced.

\FloatBarrier

\section{Experiments}
\label{sec:experiments}

We evaluate SEAM-$\Omega$ in nineteen experiments spanning synthetic PDE conservation laws; random-sinusoid Burgers initial conditions;
FNO OOD monitoring; real-world open datasets from the University of California, Irvine (UCI) Machine Learning Repository, namely Metro Interstate Traffic Volume, Air Quality, and Individual Household Electric Power Consumption; and United States Geological Survey (USGS) streamflow at three National Water Information System (NWIS) sites. The suite also uses synthetic financial multi-regime data and a synthetic industrial multi-zone fault scenario. The experiments are organized into eight groups aligned with the scientific claims that each group evaluates. Aggregate reporting and repeated-run checks follow Appendix~\ref{app:seed_protocol}. Appendix~\ref{app:protocols} records the remaining common configuration and per-experiment settings needed to interpret and reproduce the results. Cross-experiment comparisons use the raw and edge-count-normalized reports introduced next.

\subsection{Reporting obstruction magnitudes}
\label{sec:exp_norm}

Raw obstruction magnitudes $\norm{\omega}_{2}$ retain the units of the overlap stalks and depend on the scale of $s$ and on the number and dimensions of the overlaps. Every experiment below reports the raw norm, which records absolute magnitude in physical units. Comparisons across covers of different sizes additionally use the edge-count-normalized report
\begin{align*}
	|E|^{-1/2}\norm{\omega}_{2} =\bigl(|E|^{-1}\textstyle\sum_{(i,j)\in E} \norm{\omega_{ij}}^{2}\bigr)^{1/2},
\end{align*}
the root-mean-square (RMS) per-overlap obstruction, defined when $|E|>0$ and controlling for the number of overlaps.

\subsection{Local correctness does not imply admissibility}
\label{sec:group1}

\subsubsection{Local--global amplitude mismatch}
\label{sec:local_global_amplitude_mismatch}

\paragraph{Setup.}
We use the cover $[0,1] = U_{1}\cup U_{2}\cup U_{3}$ with a $10\%$ overlap fraction. Region $U_{i}$ carries the analytic generator $u_{i}(x) = A_{i}\sin(2\pi x)$ with amplitudes $A_{1} = 1.0$, $A_{2} = 1.5$, and $A_{3} = 0.7$. Local explanations carry only a state block. The three regional generators are independently specified and internally exact, making this experiment the controlled realization of Mechanism~M1.

\paragraph{Results.}
The raw obstruction norm is $\norm{\omega}\approx 0.71$ and is dominated by the state channel. We assess admissibility-restoring interventions by their budget-specific hard feasibility, residuals, and costs. Under the unrestricted budget, the hard repair residual $\norm{\omega-D\delta_{\mathrm{all}}^{\star}}$ falls below $10^{-12}$. The closure-revision budget is infeasible because the explanations carry no closure block to absorb the mismatch, whereas the state-revision budget succeeds with cost $\norm{\delta^{\star}}^{2}\approx 0.50$. The state channel is dominant on both overlaps, and the state-revision budget is the only retained proper account. This verdict agrees with the amplitudes having been specified independently across the three regional generators.

\subsubsection{Zero-floor negative control}
\label{sec:zero_floor_negative_control}

\paragraph{Setup.}
A valid obstruction measure must report $\norm{\omega} = 0$ when no inconsistency is present. Using the USGS Potomac second-order autoregressive system, denoted AR(2), we evaluate three conditions on a three-region abstract cover: an identical-model control, which uses identical AR(2) weights across regions; a heterogeneous-seasonal reference, which uses the calibrated seasonal weights from the Potomac streamflow seasonality baseline; and a noise-perturbed identical-model control, which adds independent and identically distributed Gaussian noise at the $10^{-8}$ scale to predictor outputs from otherwise identical models.

\paragraph{Results.}
The identical-model control gives $\norm{\omega} = 0.0000 \pm 0.0000$. The heterogeneous-seasonal reference gives $\norm{\omega} = 0.0266 \pm 0.0000$ and exactly reproduces the Potomac streamflow seasonality baseline. The noise-perturbed identical-model control gives $\norm{\omega} = (2.71 \pm 0.78)\times 10^{-8}$, well below $10^{-6}$ and several orders of magnitude below the smallest real-world obstruction of $0.0266$ from the Potomac streamflow seasonality study. The zero floor is robust to floating-point perturbations at the $10^{-8}$ scale.

\subsubsection{Household-power cross-framework audit}
\label{sec:household_power_audit}

\paragraph{Setup.}
We use the UCI Individual Household Electric Power Consumption dataset \citep{hebrail2012household}. The target is \texttt{Global\_active\_power} in kW. Features comprise the cyclical hour represented by sine and cosine, standardized voltage, and standardized reactive power. Separate winter, spring, summer, and autumn models are fitted by three officially maintained baseline frameworks: scikit-learn (SK) \texttt{LinearRegression} \citep{pedregosa2011sklearn}; XGBoost (XGB) \texttt{XGBRegressor} \citep{chen2016xgboost}; and statsmodels (SM), with ordinary least squares (OLS) implemented by \texttt{OLS} \citep{seabold2010statsmodels}. Each seasonal model is trained on an $80\%$ split and evaluated on the held-out $20\%$. The four seasonal models are wrapped by \texttt{from\_predictor} and passed to \texttt{diagnose} on a path cover with overlaps $(\mathrm{winter},\mathrm{spring})$, $(\mathrm{spring},\mathrm{summer})$, and $(\mathrm{summer},\mathrm{autumn})$.

\paragraph{Local accuracy.}
All three model families achieve mean seasonal $R^{2} > 0.999$, meeting the local-accuracy criterion used here.

\paragraph{Global admissibility audit.}
SEAM-$\Omega$ detects nontrivial obstruction in all three model families: $\norm{\omega}_{\mathrm{SK}} = 9.424 \pm 0.732$, $\norm{\omega}_{\mathrm{XGB}} = 9.391 \pm 0.719$, and $\norm{\omega}_{\mathrm{SM}} = 9.424 \pm 0.732$. The state channel is dominant for all three model families, and the dominant overlap is $(\mathrm{winter},\mathrm{spring})$: The winter--spring transition carries the largest seasonal gradient in domestic electricity consumption.

\paragraph{Physical and operational interpretation.}
The obstruction norm is in units of kW$\cdot$(point-count)$^{1/2}$, reflecting the $L^{2}$ disagreement across three overlaps with 30 evaluation points per overlap. The global norm corresponds to an RMS disagreement of approximately $9.424/\sqrt{3\cdot30}\approx 1.0$\,kW per evaluation point. An automated demand-response system that switches between seasonal models at the transition would exhibit prediction jumps of this magnitude.

\subsection{Missing closure is recoverable}
\label{sec:group2}

\subsubsection{Hidden-source recovery for Burgers}
\label{sec:hidden_source_recovery}

\paragraph{Setup.}
The true PDE is the inviscid Burgers equation with a hidden Gaussian source:
\begin{align*}
	u_{t} + \onehalf (u^{2})_{x} = A \exp(-(x - x_{0})^{2}/(2 w^{2})),
\end{align*}
where $A = 0.6$, $x_{0} = 0.5$, and $w = 0.06$. A source-free finite-volume Burgers solver is run on each region, and the local explanation emits a closure-summary block measuring the missing-source residual on the overlap. This block is audited through the closure restriction, while the state-revision budget handles state mismatches.

\paragraph{Results.}
The closure-channel recovery module computes the Moore--Penrose projected correction $\delta_{C}^{\star}$; under the sign convention, the recovered physical source increment is $-\delta_C^{\star}$. The relative closure-channel residual of this projected recovery is $\norm{\omega^{\mathrm{closure}} - D_{C}\delta_{C}^{\star}} / \norm{\omega^{\mathrm{closure}}} < 0.05$, equivalently below $5\%$. The recovered closure peak is $\hat{x}_{0} = 0.501$, within $0.2\%$ of the true center $x_{0} = 0.5$ and displaced by $1.7\%$ of the source width $w = 0.06$. Corollary~\ref{thm:closure_recovery} gives zero residual under exact image membership; the finite-grid reconstruction above is the projected numerical counterpart of that exact case. The closure channel is dominant on both overlaps.

\paragraph{Scientific interpretation.}
For the identifiable closure-summary model used in this experiment, the recovered physical increment $-\delta_{C}^{\star}$ is the minimum-norm closure reconstruction. The reconstruction makes the missing-physics hypothesis explicit and provides the recovered source profile together with its projected residual, following the physical interpretation specified in Section~\ref{sec:missing_physics}. For the saved pre-repair raw cochain, $\norm{\omega}=2.020994$; the overlap norms are $1.428951$ on $U_{12}$ and $1.429166$ on $U_{23}$. The closure-channel norm is $2.020986$, compared with $0.005873$ in the state channel and zero in the observation and optional contract-metadata channels. Thus, the closure channel contributes $99.9992\%$ of $\norm{\omega}^{2}$, supporting the closure-dominated diagnostic reading. Figure~\ref{fig:obstruction_heatmap} shows these per-overlap and per-channel summaries of the repair-input $\omega$.

\begin{figure}[pos=t!]
	\centering
	\includegraphics[width=0.90\linewidth]{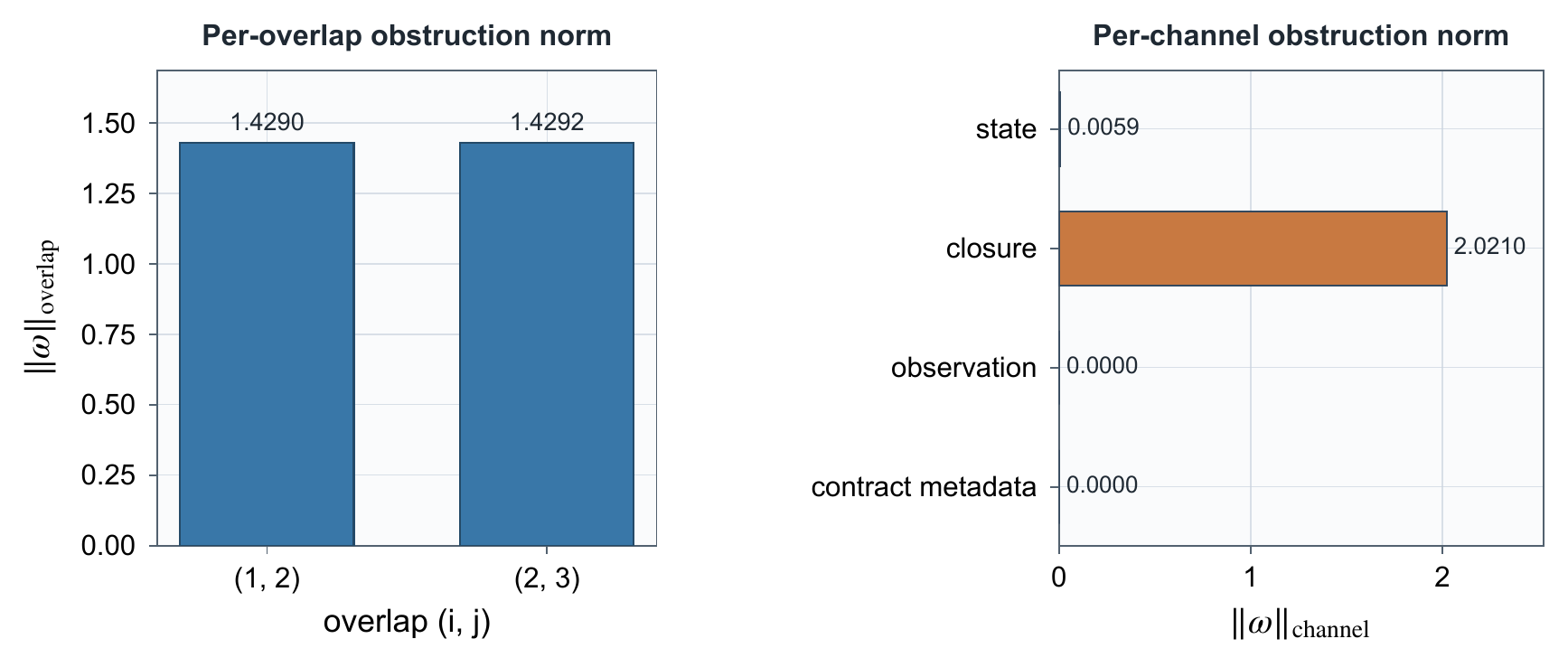}
	\caption{Per-overlap and per-channel obstruction summary for hidden-source recovery for Burgers, computed directly from the saved pre-repair raw cochain $\omega=Ds$. The left plot indexes the two overlaps $U_{12}$ and $U_{23}$ of the three-region cover; the right plot indexes the three primary channels and the optional contract-metadata block, which is absent in this experiment and therefore has zero norm. The closure channel contributes $99.9992\%$ of $\norm{\omega}^{2}$, while the other channels contribute negligibly. This concentration is the visual signature behind the closure-dominated diagnostic reading. Repair under the closure-revision budget is the minimum-norm closure recovery of Corollary~\ref{thm:closure_recovery}.}
	\label{fig:obstruction_heatmap}
\end{figure}

\subsubsection{Conservation-contract detectability}
\label{sec:conservation_contract_detectability}

\paragraph{Setup.}
Three Burgers regions cover $[0,1]$. The central region $U_{2}$ has an active conservation contract. The repair magnitude used in Theorem~\ref{thm:conservation} and reported here is $\epsilon_{\mathrm{repair}}\coloneqq\norm{\delta c_q}$, the norm of the actual closure-channel vector entering the overlap restriction at the repaired endpoint. This endpoint repair magnitude is distinct from the accumulated absolute repair magnitude $\epsilon_{q}^{\mathrm{abs}}\coloneqq\sum_t|\delta_q(t)|$, because the closure block accumulates signed increments, while the optional metadata block records their magnitudes (Appendix~\ref{app:ctseam_motivation}). Regions $U_{1}$ and $U_{3}$ have no conservation repair, so their restricted repair contributions vanish on their incident overlaps. The 1-skeleton of this cover's nerve is the three-vertex path joining regions $1$, $2$, and $3$ in that order;
its two edges correspond to the overlaps $U_{12}$ and $U_{23}$.

\paragraph{Results.}
$\norm{\omega^{\mathrm{closure}}_{12}} = 1.172$ and $\norm{\omega^{\mathrm{closure}}_{23}} = 1.172$. The closure-revision intervention recovers the repair magnitude to within $0.83\%$ relative error. The smallest nonzero singular value of the closure restriction is $\sigma_{\min}^{+} = 2.828$; the lower bound in Theorem~\ref{thm:conservation} is $0.128$; and the observed norm exceeds the bound by a factor of $9.2$ under the one-sided zero-contribution setting of Theorem~\ref{thm:conservation}.

\paragraph{Parametric sweep setup.}
We sweep five values of the repair magnitude in Theorem~\ref{thm:conservation}, $\epsilon_{\mathrm{repair}}=\norm{\delta c_q}$, namely $\epsilon_{\mathrm{repair}}\in \{0.05, 0.10, 0.20, 0.30, 0.50\}$. This parametric sweep yields 25 configurations, whose run construction is specified in Appendix~\ref{app:seed_protocol}. The discretization-tolerant criterion is $\norm{\omega^{\mathrm{closure}}_{ij}} \geq 0.9 \cdot \sigma_{\min}^{+} \epsilon_{\mathrm{repair}}$ with a prespecified $10\%$ tolerance.

\paragraph{Parametric sweep results.}
All 25 configurations satisfy this criterion. The minimum observed slack across all configurations exceeds $0.9$. Both the empirical norm and the lower bound scale approximately linearly with $\epsilon_{\mathrm{repair}}$, consistent with the linear structure of the diagnostic. The diagnostic records retain the unadjusted ratios $\norm{\omega^{\mathrm{closure}}_{ij}}/ (\sigma_{\min}^{+}\epsilon_{\mathrm{repair}})$ alongside the discretization-tolerant result. Figure~\ref{fig:conservation_sweep} plots the seed-aggregated empirical $\norm{\omega^{\mathrm{closure}}_{ij}}$ against the theoretical lower bound at the five repair magnitudes.

\begin{figure}[pos=t!]
	\centering
	\includegraphics[width=0.65\linewidth]{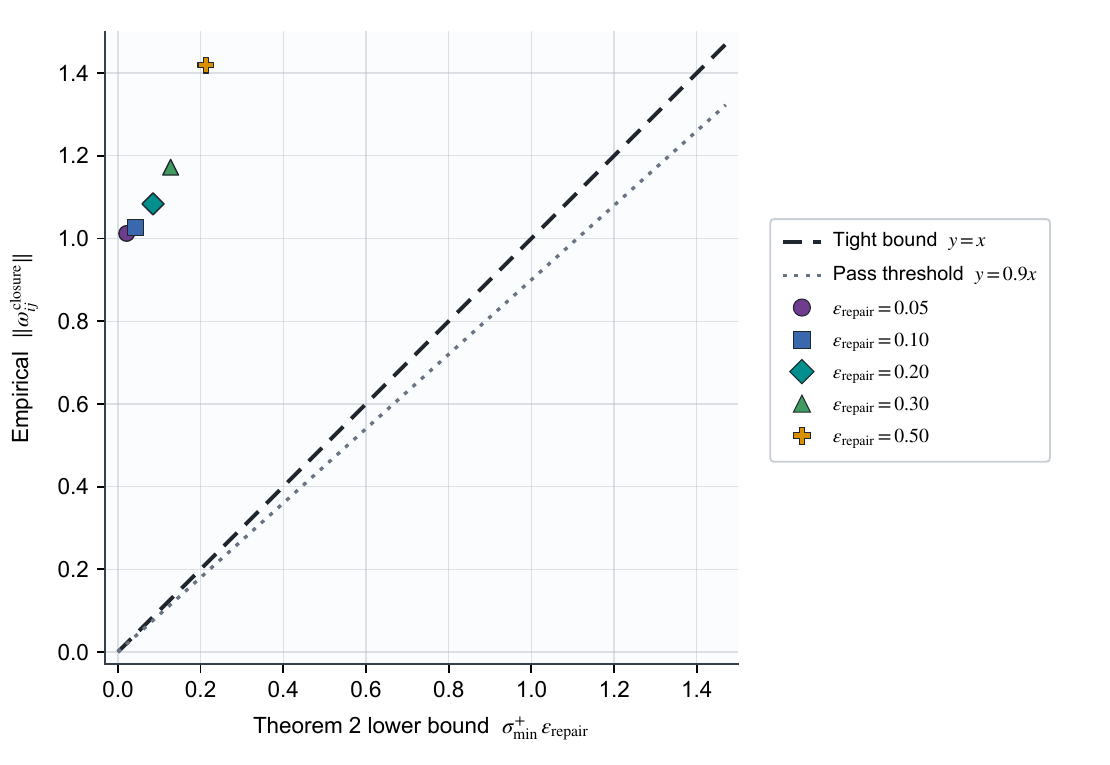}
	\caption{Theorem~\ref{thm:conservation} parametric sweep. All 25 configurations pass, corresponding to a $100\%$ rate. Seed-aggregated observed $\norm{\omega^{\mathrm{closure}}_{ij}}$ as a function of the lower bound $\sigma_{\min}^{+} \epsilon_{\mathrm{repair}}$ at five values of $\epsilon_{\mathrm{repair}}=\norm{\delta c_q}$. With five seeds per value, the plotted points summarize the 25 configurations in the run construction of Appendix~\ref{app:seed_protocol}. All configurations satisfy the prespecified $10\%$ discretization-tolerant criterion. Empirical norms can exceed the lower bound because the row-space component of $\delta c_q$ has components along singular directions associated with singular values larger than $\sigma_{\min}^{+}$; row-space-orthogonal kernel components are annihilated and do not increase the observed obstruction.}
	\label{fig:conservation_sweep}
\end{figure}

\subsection{Data--physics conflict attribution}
\label{sec:group3}

\paragraph{Setup.}
The Burgers solver with a correct Gaussian closure prior is run on three overlapping regions. In the sensor-corruption condition, one region's sensor mean is shifted by $5\sigma$; in the closure-corruption condition, one region's closure parameter is biased. The sensor-corruption condition realizes Mechanism~M3, and the closure-corruption condition is the matching physics-side control. The budgets compared are the sensor-rejection budget, which reweights sensors, and the closure-revision budget, which shifts closure parameters. This experiment evaluates the soft budget-attribution branch; squared intervention norms and residuals are reported for both injected corruption types under the reporting contract of Definition~\ref{def:budgeted_obstruction}.

\paragraph{Results: sensor corruption.}
The observation-channel soft squared intervention norm is $n^{\mathrm{soft}}_{\mathrm{obs}} = 0.08$, the closure-channel counterpart is $n^{\mathrm{soft}}_{\mathrm{closure}} = 1.47$, and their ratio favors the sensor-rejection budget by a factor of $18.4$. The residual-aware empirical diagnosis is a data conflict.

\paragraph{Results: closure corruption.}
The closure-channel soft squared intervention norm is $n^{\mathrm{soft}}_{\mathrm{closure}} = 0.11$, the observation-channel counterpart is $n^{\mathrm{soft}}_{\mathrm{obs}} = 1.39$, and their ratio favors the closure-revision budget by a factor of $12.6$. The residual-aware empirical diagnosis is a closure conflict.

In both conditions, the budget with the lower squared intervention norm identifies the injected source of disagreement; the diagnostic records retain the corresponding residuals.

\subsection{Blind-closure identifiability}
\label{sec:group4}

\paragraph{Setup.}
The closure block of each region is spanned by two basis functions. A visible function is detected by $O$, whereas an invisible function belongs to $\ker O$. We run the identifiability analyzer from Algorithm~\ref{alg:ident}.

\paragraph{Results.}
$\dim \Nblind = 1$ in the global explanation space. The computed orthonormal basis of $\Nblind$ consists of a single vector whose support lies entirely in the closure channel. The corresponding channel fractions are $100\%$ closure and $0\%$ for each of the state, observation, and optional metadata blocks. Observation invisibility satisfies $\norm{O B_{\Nblind}} < 10^{-14}$. Two closure configurations with identical predicted states and sensor readings, differing only in the blind admissible direction, are constructed and verified to differ by $\norm{s - s'} > 0.15$. Together, the constructed pair and the observation-invisibility certificate establish a 1D blind admissible direction.

\subsection{Governing-model discrimination and backend interoperability}
\label{sec:group5}

\subsubsection{PDE model-form discrimination: Burgers versus advection}
\label{sec:pde_geometry_discrimination}

\paragraph{Setup.}
The true PDE is linear advection, $u_{t} + c u_{x} = 0$, with $c = 0.5$ on $[0,1]$, periodic boundary conditions, and a smooth bump as the initial condition. Two candidate generators are evaluated: finite-volume Burgers and finite-volume linear advection at $c=0.5$.

\paragraph{Results.}
$\norm{\omega}_{\mathrm{Burgers}} = 0.84$ and $\norm{\omega}_{\mathrm{advection}} = 0.031$; ranking by obstruction selects linear advection with a $96\%$ relative obstruction gap. The Burgers obstruction is localized near the steep region of the bump, where the Burgers characteristic speed $u$ departs most from the constant advection speed $c$. Both state-only comparisons have state-channel obstruction above the default $\tau_{\mathrm{zero}}=10^{-6}$, but the advection run has substantially lower obstruction. The comparison therefore selects advection as the lower-obstruction model form while retaining its nonzero state-channel diagnostic.

\paragraph{Observed regime.}
Across the tested model forms and parameter range, obstruction increases monotonically with model mismatch. The compared quantity is the raw admissibility defect $\omega=Ds$ of Definition~\ref{def:raw_defect}. Because this comparison uses state-only explanations, the reported obstruction lies in the state channel; that obstruction is the raw defect, not a budget-specific residual or an intervention cost.

In this experiment, model mismatch specifically denotes the discrepancy between the candidate transport dynamics and the data-generating linear advection dynamics. The correct candidate transports the bump at the constant characteristic speed $c=0.5$, whereas the Burgers candidate uses the state-dependent speed $u$; the difference between the two speeds is largest near the steep portion of the bump, which is also where the overlap defects concentrate. Thus, the value $0.031$ for advection represents a small but nonzero residual overlap disagreement at the tested resolution, while the value $0.84$ for Burgers records the much larger incompatibility induced by the wrong governing dynamics. The monotone statement is an empirical observation for this controlled comparison and its tested parameter range, not a claim that $\norm{\omega}$ must be monotone in an arbitrary parametrization of model error.

\subsubsection{Backend interoperability: finite-volume and CT-SEAM generators}
\label{sec:backend_interoperability}

\paragraph{Setup.}
Two backends produce local explanations on the same three-region Burgers domain. Backend~A is a plain finite-volume Burgers generator with zero closure. Backend~B is the same solver with an active conservation-theoretic closure implemented by CT-SEAM as described in Appendix~\ref{app:ctseam}, and with region-varying repair amplitudes. Both outputs are audited with the same cover, restrictions, channel metrics, and intervention catalog.

\paragraph{Results.}
The diagnostic-record comparison gives the raw obstruction-norm ratio $\norm{\omega}_{B}/\norm{\omega}_{A} > 1$, confirming that the conservation repair introduces detectable closure-channel disagreement. The channel report assigns the additional signal in Backend~B to the closure channel, as expected from its region-varying repair. The diagnostic records report the obstruction, channel, and intervention summaries for both backends.

\subsection{Cross-domain and stochastic-generator evaluations}
\label{sec:group6}

\subsubsection{Random-sinusoid Burgers stress test}
\label{sec:random_sinusoid_burgers}

\paragraph{Setup.}
The Burgers initial condition is generated as a superposition of $N_{\sin} = 10$ random sinusoids with integer frequency $k_{n}\sim\Unif\{1,\ldots,8\}$ and amplitude $a_{n}\sim\mathcal{N}(0, 1/k_{n})$. The construction adapts the random-sinusoid family used in PDEBench to this ten-wave, Gaussian-amplitude distribution \citep{takamoto2022pdebench}. The local solver generates all trajectories used in this study; PDEBench supplies methodological precedent rather than experiment data.

\paragraph{Results.}
The aggregate run protocol gives $\norm{\omega} = 0.4754 \pm 0.0000$.

\subsubsection{Metro traffic seasonality}
\label{sec:metro_traffic_seasonality}

\paragraph{Setup.}
We use the UCI Metro Interstate Traffic Volume dataset \citep{hogue2019metro}. A three-region seasonal cover spans winter, summer, and a transition region, with one AR(2) predictor fitted per region. This study, together with the streamflow and air-quality studies that follow, instantiates Mechanism~M2 on measured data.

\paragraph{Results.}
The aggregate run protocol gives $\norm{\omega} = 0.0973 \pm 0.0000$. The obstruction reflects genuine cross-seasonal mismatch in the AR(2) parameters.

\subsubsection{Potomac streamflow seasonality and multi-site regime comparison}
\label{sec:potomac_streamflow_seasonality}

\paragraph{Potomac streamflow setup.}
We use USGS streamflow data \citep{usgs2016nwis} from site 01646500 on the Potomac River at the Little Falls Pump Station near Washington, District of Columbia (DC). The dataset covers 2021-10-01 through 2022-09-30 with 365 daily observations. The cover has three hydrological regions, with one AR(2) predictor fitted per region.

\paragraph{Potomac streamflow results.}
The aggregate run protocol gives $\norm{\omega} = 0.0266 \pm 0.0000$. This norm is the smallest value across the real-world experiments, consistent with the Potomac's moderate seasonality.

\paragraph{Multi-site comparison setup.}
We use three USGS sites with distinct hydrological regimes: 01413500 is the East Branch Delaware River at Margaretville, New York, with Catskill snowmelt and strong seasonality; 01646500 is the Potomac River near Washington, DC, at the Little Falls Pump Station, with moderate seasonality; and 02169500 is the Congaree River at Columbia, South Carolina, with mild southern seasonality. The hypothesis states that the southern site's obstruction is below the maximum northern-site value.

\paragraph{Multi-site comparison results.}
$\norm{\omega}_{01413500} = 0.054$, $\norm{\omega}_{02169500} = 0.037$, and $\norm{\omega}_{01646500} = 0.027$. The hypothesis is supported: The southern site is below the northern-site maximum. The strong-seasonality site carries the largest obstruction, whereas the moderate- and mild-seasonality sites do not follow the stated label order under a common configuration.

\subsubsection{Air-quality temporal consistency}
\label{sec:air_quality_temporal_consistency}

\paragraph{Setup.}
We use the UCI Air Quality dataset \citep{devito2008airquality}, which contains 9357 hourly records of the four pollutant channels CO, C$_6$H$_6$, NO$_x$, and NO$_2$ from an Italian monitoring station. The cover has three temporal regions, with one OLS predictor fitted per region.

\paragraph{Results.}
The fixed-data protocol gives $\norm{\omega} = 0.2850$ for the UCI dataset and preprocessing protocol. The computation is deterministic. The higher $\norm{\omega}$ relative to streamflow is consistent with the stronger seasonality of air quality, governed by temperature-dependent photochemistry.

\subsubsection{Financial regime and industrial fault detection}
\label{sec:financial_industrial_detection}

\paragraph{Financial regime detection.}
Synthetic log returns are generated across bull, bear, and sideways regimes. The regime-dependent volatility of those returns is motivated by a selected subset of the stylized facts documented by \citet{cont2001empirical}. The generator represents this selected subset of asset-return properties. The heterogeneous case yields $\norm{\omega} = 0.0137 \pm 0.0007$, with the closure channel serving as a residual-volatility proxy and dominating the aggregate result. The bull--bear overlap contributes $\approx 57\%$ of the total obstruction; the bear--sideways overlap contributes $\approx 43\%$. The identical-model control gives $\norm{\omega} = 0$.

\paragraph{Industrial multi-zone fault detection.}
We simulate a synthetic three-zone serial process with one OLS regressor per zone and inject a $0.3\sigma$ step fault into the midstream zone halfway through the evaluation window. Under normal operation, $\norm{\omega}_{\mathrm{normal}} = 0.491 \pm 0.199$; under fault injection, $\norm{\omega}_{\mathrm{fault}} = 2.039 \pm 0.501$, an increase by a factor of $4.1$. The state channel dominates the aggregate fault response, correctly attributing the inconsistency to process-variable changes rather than to a closure or sensor conflict. The identical-zone control gives $\norm{\omega} = 0.000$.

\subsection{Learned-generator monitoring}
\label{sec:group7}

\subsubsection{Controlled predictor perturbation}
\label{sec:controlled_predictor_perturbation}

\paragraph{Setup.}
Starting from the USGS Potomac AR(2) predictors calibrated in the Potomac streamflow seasonality study, we perturb the spring predictor weights in 10 random directions with magnitudes $\sigma\in\{0, 0.05, 0.10, 0.20, 0.40, 0.80, 1.60\}$ and compute $\norm{\omega}$.

\paragraph{Results.}
We define the empirical monotonicity fraction as the proportion of adjacent perturbation-level pairs for which $\norm{\omega}$ strictly increases. This fraction is $1.00\pm0.00$ under the aggregate run protocol. The zero-perturbation floor is $\norm{\omega} = 0.0266$, equal to the Potomac streamflow seasonality baseline. The diagnostic is strictly monotone on this dataset across all tested magnitudes and directions. This behavior is empirical monotonicity in perturbation magnitude for this dataset and sweep, not a general guarantee. Figure~\ref{fig:inconsistency_detection} plots $\norm{\omega}$ as a function of $\sigma$, with mean $\pm$ standard deviation across the 10 perturbation directions.

\begin{figure}[pos=t!]
	\centering
	\includegraphics[width=0.84\linewidth]{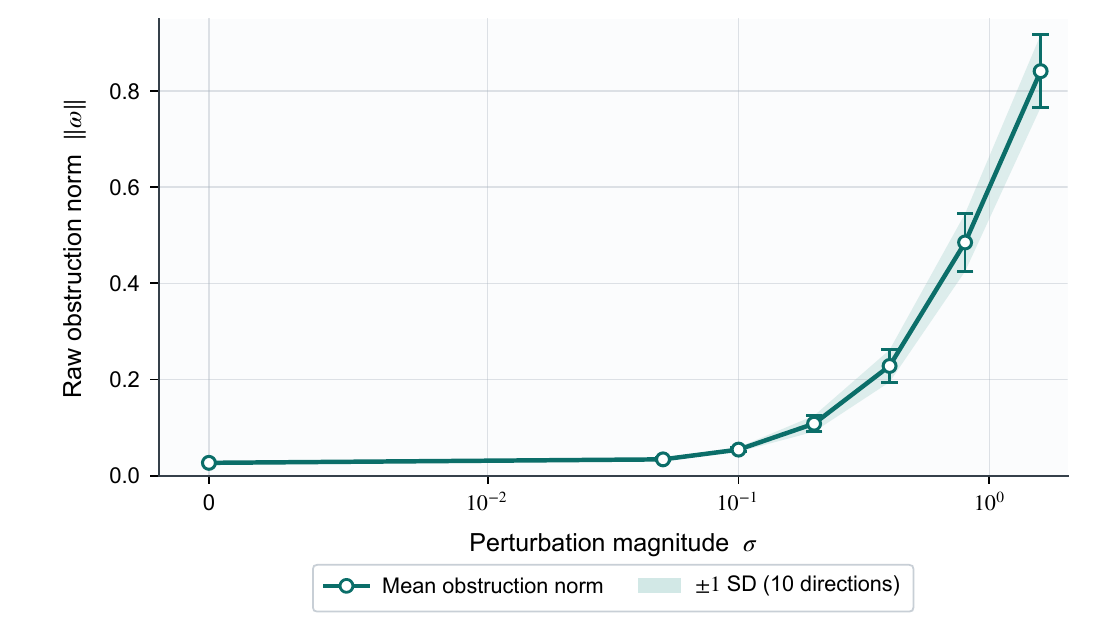}
	\caption{Inconsistency detection shows that $\norm{\omega}$ scales monotonically with injected perturbation magnitude. Starting from the USGS Potomac AR(2) predictors calibrated in the Potomac streamflow seasonality study, the spring predictor weights are perturbed in 10 random directions with magnitudes $\sigma \in \{0, 0.05, 0.10, 0.20, 0.40, 0.80, 1.60\}$. The curve shows the mean $\norm{\omega}$ across directions; error bars and the shaded band show $\pm$ one standard deviation. The vertical axis reports the raw obstruction norm. Across the full range, the empirical monotonicity fraction is $1.00\pm0.00$ under the aggregate run protocol.}
	\label{fig:inconsistency_detection}
\end{figure}

\subsubsection{FNO OOD monitoring}
\label{sec:fno_corr}

\paragraph{Setup.}
An FNO \citep{li2021fourier} is trained on a distribution of Burgers initial conditions and evaluated on OOD covariate-shifted inputs. The FNO-1D architecture has width $64$, four spectral layers, mode cutoff $m=16$, Gaussian error linear unit activation, and approximately $0.65$ million parameters. The streaming obstruction $\omega(t)$ is computed from the FNO regional explanations and their overlap restrictions. The study is the experimental realization of Mechanism~M4. A high-resolution reference solver independently supplies the $L^{2}$ prediction-error coordinate in the correlation analysis.

\paragraph{Dataset.}
The dataset represents the 1D Burgers equation on $\Domain = [0,1]$ over $t\in[0, 0.5]$, with viscosity $\nu = 0.01$ and periodic boundary conditions. The spatial and temporal grids have $N_{x} = 128$ and $N_{t} = 100$ points, respectively. The training-distribution initial conditions are superpositions of three sinusoids:
\begin{align*}
	u_{0}(x) = \sum_{\ell=1}^{3} a_{\ell}\sin(2\pi q_{\ell}x + \phi_{\ell}).
\end{align*}
Here $a_{\ell}\sim \Unif[-0.5, 0.5]$, $q_{\ell}\in\{1, 2, 3\}$ chosen uniformly, and $\phi_{\ell}\sim\Unif[0, 2\pi]$.

\paragraph{Training.}
Training uses $N_{\mathrm{train}} = 1000$ trajectories, a batch size of $32$, the Adam optimizer with a learning rate of $10^{-3}$, and cosine annealing over $200$ epochs. The loss is the relative $L^{2}$ error over the $(x, t)$ grid. Repeated-run training and aggregation follow Appendix~\ref{app:seed_protocol}, with all other settings held fixed.

\paragraph{OOD shift families.}
Three parametric shifts modify the training initial-condition distribution. The frequency shift uses $q_{\ell}\in\{4,5,6\}$ instead of $\{1,2,3\}$. The amplitude shift uses $a_{\ell}\sim\Unif[-1.5,1.5]$ instead of $\Unif[-0.5,0.5]$. The spectral-truncation shift uses a band-limited initial condition truncated to four Fourier modes, with random coefficients drawn from $\Unif[-0.5,0.5]$. Each shift family contains $N_{\mathrm{ood}}=500$ test trajectories. For the auxiliary visualization in Figure~\ref{fig:fno_ood}, a separate 20-level sweep uses $\alpha\in[0,1]$ as a path coordinate from the training distribution at $\alpha=0$ to a fixed OOD endpoint at $\alpha=1$. This seed-averaged sweep is distinct from the per-trajectory endpoint evaluations of the three shift families.

\paragraph{Reference solver.}
A high-resolution finite-volume scheme uses $N_{x}^{\mathrm{ref}} = 1024$, $N_{t}^{\mathrm{ref}} = 800$, and Courant--Friedrichs--Lewy (CFL) number $0.4$. Reference solutions are restricted to the FNO grid by spectral downsampling. The relative $L^{\infty}$ residual with respect to an even higher-resolution run is below $5\times 10^{-4}$.

\paragraph{SEAM-$\Omega$ cover.}
The cover has three temporal regions $U_{1} = [0, 0.2]$, $U_{2} = [0.15, 0.35]$, and $U_{3} = [0.30, 0.5]$, with two overlaps. Each region's local generator is the FNO, advanced from initial conditions sampled at the region's start time. The restrictions are identity maps on the overlap-time intervals, with spatial averaging at $n_{\mathrm{eval}} = 8$ equispaced points.

\paragraph{Reported statistics.}
For each shift family, we record the per-trajectory pair $(\norm{\omega(t_{\mathrm f})}, \norm{u^{\mathrm{FNO}}(t_{\mathrm f}) - u^{\mathrm{ref}}(t_{\mathrm f})}_{L^{2}})$ at the final time $t_{\mathrm f} = 0.5$. Each family therefore yields a sample of $N_{\mathrm{ood}}$ pairs. The reported statistics are the Pearson correlation $r$, the Spearman rank correlation $\rho$, and the $95\%$ bootstrap confidence interval (CI) for $r$ from 1000 bootstrap resamples, with run-wise aggregation specified in Appendix~\ref{app:seed_protocol}.

\paragraph{Results.}
Across the three OOD families, the Pearson correlation between $\norm{\omega(t_{\mathrm f})}$ and the $L^{2}$ prediction error is $r = 0.995 \pm 0.003$, with Spearman $\rho = 0.987 \pm 0.005$. The associated $95\%$ bootstrap CIs fall within $[0.991,0.998]$. Across the three tested Burgers--FNO shift families, $\omega$ is a strong regime-shift monitoring signal.

\paragraph{Comparison to confidence-based monitoring.}
We compare $\omega$ with per-trajectory predictive variance from the separate five-member FNO ensemble specified in Appendix~\ref{app:seed_protocol}. The Pearson correlation between ensemble variance and $L^{2}$ error is $r_{\mathrm{var}} = 0.71 \pm 0.04$ on the same OOD test sets, a substantially weaker signal than $\omega$. Ensemble variance also requires five times the computation of a single FNO, whereas $\omega(t)$ requires one FNO forward pass per region with one coboundary multiplication.

\begin{figure}[pos=t!]
	\centering
	\includegraphics[width=0.96\linewidth]{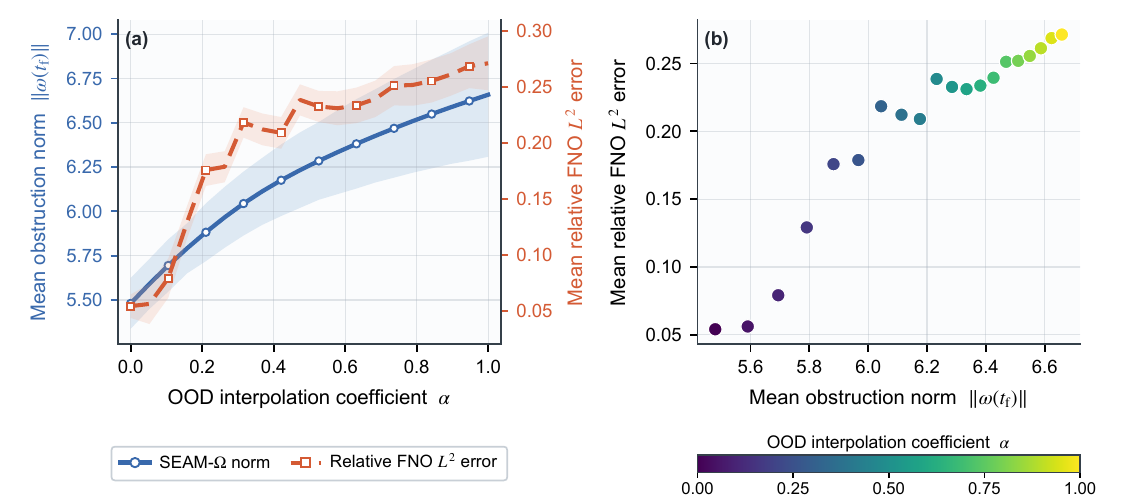}
	\caption{FNO OOD monitoring in the auxiliary 20-level interpolation sweep. The left plot shows the mean SEAM-$\Omega$ obstruction norm and mean relative FNO $L^{2}$ prediction error as functions of the OOD interpolation coefficient $\alpha$; shaded envelopes show $\pm$ one standard deviation over seeds. The right plot shows the same seed-averaged quantities, colored by $\alpha$. Low-$\alpha$ cases occupy the lower-left corner, and increasing distribution shift moves the sequence toward the upper-right corner. The within-sweep correlation $r=0.953$ is a visualization-level statistic for the 20 seed-averaged points; that statistic is distinct from the headline per-trajectory Pearson and Spearman correlations across the three shift families reported in Section~\ref{sec:fno_corr}.}
	\label{fig:fno_ood}
\end{figure}

\subsection{Ablations, robustness, and scalability}
\label{sec:group8}

\subsubsection{Cover-sensitivity ablation}
\label{sec:cover_sensitivity_ablation}

\paragraph{Evaluation points per overlap.}
Varying $n_{\mathrm{eval}}\in\{1,2,3,5\}$ with abstract identity restrictions causes $\norm{\omega}$ to scale as $\sqrt{n_{\mathrm{eval}}}$, consistent with the identity restriction contributing one independent dimension per evaluation point.

\paragraph{Number of regions.}
Varying $n_{r}\in\{2,3,4\}$ leaves the per-overlap disagreement stable, while $\norm{\omega}^{2}$ scales approximately linearly with the number of overlaps $|E|$; equivalently, $\norm{\omega}$ scales approximately as $\sqrt{|E|}$. The per-overlap disagreement metric $|E|^{-1/2}\norm{\omega}$ is therefore a region-count-invariant summary.

\subsubsection{\texorpdfstring{Baseline comparison between naive metrics and SEAM-$\Omega$}{Baseline comparison between naive metrics and SEAM-Omega}}
\label{sec:baseline_comparison}

\paragraph{Setup.}
We compare SEAM-$\Omega$ against two naive cross-region disagreement metrics in the metro traffic seasonality, Potomac streamflow seasonality, and air-quality temporal-consistency studies: cross-prediction root-mean-square error (RMSE) at the overlap evaluation points and mean absolute pairwise difference. Both are normalized by the mean prediction magnitude.

\paragraph{Results.}
SEAM-$\Omega$ orders the three datasets by decreasing $\norm{\omega}$ as air quality, traffic, and streamflow, with respective values $0.2850$, $0.0973$, and $0.0266$. Normalized cross-prediction RMSE orders the same datasets as air quality, streamflow, and traffic, with respective values $0.4512$, $0.3270$, and $0.1572$. That baseline therefore matches SEAM-$\Omega$ on the largest case and reverses the ordering of traffic and streamflow. This ordering difference is descriptive rather than a performance ranking because the three datasets do not supply a ground-truth order of inconsistency. The scalar baselines summarize overall disagreement, whereas SEAM-$\Omega$ also retains channel attribution, budgeted repair analysis, and identifiability.

\subsubsection{GPU implementation check}
\label{sec:gpu_scaling}

The GPU coboundary is tested on synthetic problems with $|I| = 32$ regions and $d_{i} = 10^{3}$ per region. GPU and CPU $\omega$ agree to within a relative tolerance of $10^{-10}$. Per-call wall time on a single NVIDIA A100 GPU is below $5\,\mathrm{ms}$ for the matrix--vector product; the SVD dominates the total runtime as in the CPU regime, but large-scale runs require a rank-revealing truncated approximation that captures all singular values above the chosen tolerance before the projection is treated as exact.

\section{Related work}
\label{sec:related}

SEAM connects four strands of literature: sheaf-theoretic consistency;
physics-informed learning, neural operators, and domain decomposition;
identifiability in inverse problems; and model discrepancy together with distribution-shift monitoring. The comparison below focuses on how each strand treats overlap agreement, repair, observational ambiguity, or shift.

\subsection{Sheaf-theoretic signal processing and consistency}
\label{sec:related_sheaves}

\citet{robinson2014topological} systematized the cellular-sheaf formalism for signal processing, while the thesis of \citet{curry2014sheaves} provides a foundational treatment of sheaves, cosheaves, and their applications. The exposition by \citet{ghrist2014elementary} introduced the formalism to an applied audience, and \citet{bredon1997sheaf} provides the classical reference for sheaf theory and cohomology via derived functors. At the level of coupled scientific models, \citet{robinson2017multimodel} shows how sheaves assemble interacting local models into systems of equations, including dynamical systems and PDEs, whereas \citet{hansen2021opinion} apply sheaves to opinion dynamics, providing a nontrivial example of restriction maps as scientific objects. In a distinct application area, \citet{abramsky2011sheaf} use the failure of global-section existence as the local-to-global obstruction that characterizes contextuality. SEAM shares the obstruction viewpoint, but its local objects are structured scientific explanations, and its operational outputs are channel reports, repairs, and identifiability certificates.

For heterogeneous measurements, \citet{robinson2017sheaves} develops sheaves as a sensor-integration data structure with consistency and fusion tools, whereas \citet{robinson2020assignments} formalizes a consistency radius for locally supplied, potentially noisy assignments. That radius is a scalar summary of the same overlap disagreement that SEAM-$\Omega$ keeps as a cochain and resolves into channel and overlap components.

In machine learning, \citet{hansen2020sheaf} formulate neural networks whose message passing is defined on a specified cellular sheaf, while \citet{bodnar2022neural} parameterize and learn the sheaf structure for heterophilic graph learning. Both constructions use sheaf-Laplacian diffusion, whose algebraic background comes from the spectral theory of the sheaf Laplacian \citep{hansen2019spectral} and the broader graph Hodge-Laplacian framework \citep{lim2020hodge}. These constructions read the sheaf Laplacian as a diffusion operator inside a learning objective. SEAM instead places the same local-to-global algebra on heterogeneous scientific explanations through typed restriction maps, and uses channel-resolved disagreement for diagnosis and intervention.

\paragraph{Explanation-admissibility synthesis.}
SEAM-$\Omega$ builds on global-section and assignment-consistency ideas by treating a structured scientific explanation as the compared object. The stalks of the explanation sheaf separate state, closure, and observation channels, with optional contract metadata, and the geometry-defined restriction maps keep overlap semantics auditable. The resulting obstruction feeds budgeted hypothesis tests and an identifiability quotient.

\subsection{Physics-informed learning, neural operators, and domain decomposition}
\label{sec:related_pinn}

PINNs embed differential-equation residuals and associated conditions in training objectives \citep{raissi2019physics}, while universal differential equations place trainable components inside differential-equation models \citep{rackauckas2020universal}. Both sit within the broader physics-informed machine learning program \citep{karniadakis2021physics}. Studies of PINN optimization in difficult PDE regimes motivate complementary validation of solution behavior alongside the soft physics-residual objective \citep{krishnapriyan2021failure}. Domain-decomposed PINNs directly address interface coupling. The cPINN formulation enforces flux and average-solution conditions at discrete subdomain interfaces, XPINNs generalize the construction to space--time decompositions, and FBPINNs use compactly supported networks on overlapping subdomains \citep{jagtap2020conservative,jagtap2020extended,moseley2023finite}. Neural operators \citep{kovachki2023neural,lu2021learning} learn solution maps; the FNO \citep{li2021fourier} supplies the specific spectral configuration used in the FNO OOD monitoring study. Interface assembly is already central to these methods, and SEAM applies the same assembly logic to structured explanations emitted by heterogeneous generators. \citet{chen2025locality} also demonstrate that neural operators can violate the physical principle of locality, and propose a data decomposition that restricts predictions to a finite domain of dependence. This locality constraint acts inside the learned generator. SEAM's overlap audit acts on the explanations emitted by any generator.

Classical domain-decomposition methods for PDEs \citep{quarteroni1999domain} decompose a global problem into subproblems on overlapping or non-overlapping subdomains; iterative Schwarz, Dirichlet--Neumann, and Robin--Robin methods enforce interface conditions through iteration on the trace mismatch. The trace mismatch is, formally, a single-overlap version of $\omega$. SEAM turns this mismatch into a diagnostic output, accepts data-driven generators alongside PDE solvers, and augments the interface report with channel decomposition, budgeted hypothesis tests, and identifiability.

\paragraph{Generator integration.}
Domain-decomposed PINNs jointly train local networks to produce one PDE solution, with interface quantities appearing as solution constraints or loss terms. SEAM applies a common audit to explanations from PINNs, operators, classical solvers, and regressors, and uses channel-resolved overlap defects to test and compare repair hypotheses. A PINN, a DeepONet, or an FNO can therefore be wrapped as a SEAM local generator. The FNO OOD correlation result in Section~\ref{sec:fno_corr} demonstrates this capability: SEAM monitors a learned operator without modifying the operator's parameters.

\subsection{Identifiability in inverse problems}
\label{sec:related_id_design}

Identifiability analysis in inverse problems and parameter estimation has a long history. \citet{bellman1970structural} introduced the structural-identifiability question for dynamical systems, and \citet{walter1997identification} systematized structural and practical identifiability for parametric models. Profile-likelihood analysis of partially observed dynamical systems demonstrates how structural and practical non-identifiability appear in finite-data inference \citep{raue2009identifiability}. SEAM-$\Omega$ specializes this question to admissible scientific explanations: The blind admissible subspace $\Nblind = \ker D\cap\ker O$ contains directions that preserve overlap agreement while remaining invisible to the current observation map, and the quotient $\Iobs$ contains the observable admissible classes.

\subsection{Model discrepancy and distribution-shift monitoring}
\label{sec:related_discrepancy_shift}

Two traditions quantify the gap between a model and the reality the model describes. Bayesian discrepancy modeling calibrates that gap as a statistical object, whereas shift monitoring detects the gap opening under streaming inputs.

\paragraph{Bayesian model discrepancy.}
\citet{kennedy2001bayesian} model the gap between a computer model and observations as a Gaussian-process discrepancy term, calibrated jointly with the model parameters. This Bayesian discrepancy model is the closest probabilistic analog of $\omega$ because the discrepancy term provides a structured account of where the model fails to match reality. Ignoring or misspecifying that discrepancy can confound physical parameters and produce biased, overconfident inference \citep{brynjarsdottir2014discrepancy}, a caution that also motivates SEAM's separation of closure, observation, and blind directions. SEAM provides a deterministic, multi-channel, local-to-global counterpart: The framework decomposes discrepancy across an explanation sheaf and reports the obstruction cochain.

\paragraph{Consistency-based diagnosis and data reconciliation.}
Constraint-relaxation methods diagnose inconsistent model--data collections by identifying which assumptions must be relaxed \citep{hegde2018consistency}, while process data reconciliation adjusts measurements to satisfy declared balance equations, and uses residuals to identify gross errors \citep{narasimhan2000reconciliation}. Consistency-based diagnosis similarly asks which component assumptions can account for a conflict \citep{reiter1987diagnosis}. SEAM applies this logic to typed regional explanations: Budget subspaces state the permitted revisions, and exact feasibility determines which accounts survive.

\paragraph{Streaming shift detection.}
OOD detection, covariate-shift detection, and streaming model monitoring are active areas. Concept-drift methods detect and adapt to changes in evolving supervised streams \citep{gama2014concept}, while classical sequential change detection includes the cumulative sum statistic \citep{page1954continuous}. Confidence scores supply a foundational OOD baseline \citep{hendrycks2017baseline}, although modern neural-network probabilities may require explicit calibration \citep{guo2017calibration}. Dataset-shift detectors instead compare test samples or distributions with the training distribution; for example, \citet{rabanser2019failing} evaluate two-sample and domain-discrimination approaches for detecting and characterizing dataset shift. Predictive confidence is another monitoring signal, but its calibration can degrade under shift \citep{ovadia2019uncertainty}.

\paragraph{Admissibility monitoring.}
SEAM monitors explanation-admissibility directly from model outputs, complementing input-distribution shift detectors. Shifts that break overlap admissibility produce channel-attributed signals, while admissibility-preserving gauge shifts leave the signal invariant.

\FloatBarrier

\section{Discussion}
\label{sec:discussion}

\subsection{\texorpdfstring{Design choices and scope of SEAM-$\Omega$}{Design choices and scope of SEAM-Omega}}
\label{sec:design_choices}

Three modeling choices delimit the claims of SEAM-$\Omega$.

\paragraph{Finite-dimensional linear setting.}
SEAM-$\Omega$ uses vector-space stalks, linear restriction maps, and a matrix coboundary, yielding the closed-form diagnostic and intervention operators proved here.

\paragraph{Geometry-defined restrictions.}
Restrictions are constructed from problem geometry, keeping $D$ transparent as a scientific object. Learned restriction maps would require their own identifiability and validation analysis.

\paragraph{Pairwise-overlap complex.}
SEAM-$\Omega$ uses only pairwise overlaps, represented by the 1-skeleton of the nerve. Accordingly, the guarantees of SEAM-$\Omega$ concern pairwise compatibility and do not establish higher-order overlap consistency.

\subsection{Interpretation and operational responses}
\label{sec:interpretation}

The following six properties state what the obstruction resolves and connect each diagnostic pattern to an operational response.

\paragraph{Gauge invariance and blind directions.}
A simultaneous shift $s \mapsto s + v$ with $v\in\ker D$ leaves $\omega = Ds$ unchanged. If the shift is also in $\ker O$, the direction $v$ lies in $\Nblind$ and is invisible to current sensors. The identifiability report records this unresolved direction through $\dim\Nblind$.

\paragraph{Restriction-visible repairs.}
If the repaired endpoint's conservation-redistribution repair vector has a component in $\ker\rho^{\mathrm{closure}}_{q,e}$, that component is removed by the overlap restriction. The projected form of Theorem~\ref{thm:conservation}, \eqref{eq:thm_conservation_projected}, therefore measures the row-space-visible repair component; row-space alignment recovers \eqref{eq:thm_conservation}. A large gap between $\norm{\delta c_q}$ and $\norm{\Pi_{\rowsp(\rho^{\mathrm{closure}}_{q,e})}\delta c_q}$, or a non-negligible kernel-component norm $\norm{(\Id-\Pi_{\rowsp(\rho^{\mathrm{closure}}_{q,e})})\delta c_{q}}$, signals the need for a full-column-rank closure restriction or a repair operator aligned with the relevant row space, as in the CT-SEAM experiments of Remark~\ref{rem:conservation_row_space}.

\paragraph{Localization resolution.}
A discrepancy whose true spatial support is smaller than the finest overlap is averaged into uniformly distributed $\omega_{ij}$ components across the overlaps containing that support. SEAM reports $\norm{\omega} > 0$ at overlap resolution. Comparable neighboring norms with no dominant overlap identify where a refined cover will add localization power.

\paragraph{Residual-aware observation attribution.}
Observation-channel attribution is interpreted from the channel norm together with the budget residual rather than from that norm alone. A low-norm but high-residual sensor-rejection record therefore remains unresolved.

\paragraph{Budget ambiguity and catalog dependence.}
When several proper budgets are exact-feasible, the verdict retains all of those budgets rather than using cost to force a causal choice. Soft records with similar intervention magnitudes are likewise reported as ambiguous, with their residuals retained in the diagnostic record. Retention is also conditional on the declared catalog. Exact feasibility eliminates only accounts that were stated, so a cause absent from the catalog is neither tested nor excluded, making a single retained budget as specific as the catalog that produced that budget.

\paragraph{Overlap consistency and absolute accuracy.}
The obstruction norm $\norm{\omega}$ compares neighboring regional explanations with each other rather than with a reference solution. With the cover, restriction maps, discretization, and channel scaling held fixed, the norm aggregates the overlap-wise differences between the restricted explanations, thereby reporting mutual agreement rather than correctness. Under that reading, $\norm{\omega}=0$ certifies agreement on overlaps. Absolute-error checks on regions with ground truth provide the complementary test for shared biases that preserve overlap agreement. The FNO OOD monitoring study in Section~\ref{sec:fno_corr} evaluates both overlap consistency and reference-solver error.

\section{Conclusion}
\label{sec:conclusion}

We introduced SEAM as a generator-agnostic paradigm that treats the global admissibility of scientific explanations as a local-to-global gluing problem, with SEAM-$\Omega$ as its finite explanation-sheaf instantiation. Starting from structured regional explanations with state, closure, and observation channels plus optional contract metadata, SEAM-$\Omega$ restricts neighboring explanations to their overlaps and assembles their disagreements into the 1-cochain $\omega=Ds$. The channel and overlap components of $\omega$ resolve that disagreement by entry and by channel rather than collapsing the cochain into a single score, while the budget-specific images $\im(DP)$ determine repair feasibility. Budgeted preimages then test allowable repair hypotheses: A declared account is refuted when the revisions that account permits cannot remove the obstruction, and priced when those revisions can. Residual-aware soft records provide separately reported empirical attribution when exact feasibility fails.

The blind admissible subspace $\ker D\cap\ker O$ separates inconsistency from what current observations cannot identify, and the streaming $\omega(t)$ supplies an empirical consistency signal under distribution shift. The finite-dimensional result suite provides guarantees for minimum-cost intervention and conservation-contract detectability, with companion identifiability and closure-recovery results. The resulting algorithms assemble $D$, compute budgeted interventions and a basis of $\Nblind$, and emit an auditable diagnostic record.

Nineteen experiments across synthetic PDEs, FNO OOD monitoring, four open datasets, and synthetic financial and industrial systems jointly show that locally accurate or physically plausible generators can remain globally inadmissible, that closure-restricted repair recovers the injected closure sources, and that channel-resolved obstruction distinguishes and monitors multiple failure modes. A zero-obstruction control, a monotone response to injected predictor perturbations, cross-domain channel attribution, and strong correlation between streaming obstruction and FNO prediction error support these findings.

SEAM-$\Omega$ makes overlap consistency exactly certifiable: $\omega=0$ if and only if neighboring restrictions agree. The framework turns violations into auditable diagnoses and interventions, so that explanation-admissibility becomes a tractable global explanation-consistency audit for scientific machine learning.

\clearpage

\printcredits

\section*{Funding}
This work was supported through doctoral training funding provided to Gnankan Landry Regis N'guessan by the African Institute for Mathematical Sciences Data Science Doctoral Program and the Nelson Mandela African Institution of Science and Technology.

\section*{Declaration of competing interest}
The authors declare that they have no known competing financial interests or personal relationships that could have appeared to influence the work reported in this paper.

\section*{Acknowledgments}
Gnankan Landry Regis N'guessan thanks his doctoral supervisor and the Axiom Research Group for sustained discussion of the framework.

\section*{Data and code availability}
The third-party datasets analyzed in this study are publicly available from the sources listed in Appendix~\ref{app:datasets}. UCI data are released under CC BY 4.0, and USGS data are in the public domain under Section 105 of Title 17 of the United States Code. The common experiment configuration, data sources, and preprocessing are specified in Appendix~\ref{app:protocols}. The source code, configuration files, generated synthetic and PDE cases, trained FNO checkpoints, seed-resolved outputs, diagnostic records, pinned environment specifications, and the publication-figure pipeline are available from the corresponding author on reasonable request.

\clearpage
\appendix
\counterwithin{equation}{section}
\counterwithin{figure}{section}
\counterwithin{table}{section}

\section{List of notation}
\label{app:notation}

Table~\ref{tab:notation} lists the symbols used throughout the paper. Sans-serif denotes formal named constants in definitions, equations, and algorithms, such as $\mathsf{AllowClosure}$ and $\mathsf{globally\_admissible}$; monospace denotes literal algorithmic identifiers.

\begin{table}[pos=h!]
	\centering
	\caption{List of notation.}
	\label{tab:notation}
	\small
	\begin{tabularx}{\linewidth}{ l >{\raggedright\arraybackslash}X}
		\toprule
		Symbol                                     & Meaning                                                                            \\
		\midrule
		$\Domain\subseteq\R^{d}$                   & computational domain                                                               \\
		$\cover=(U_i)_{i\in I}$                    & finite cover indexed by regions $I$                                                \\
		$U_{ij}, E$                                & pairwise overlap and nerve-edge set                                                \\
		$F$                                        & finite explanation sheaf                                                           \\
		$\Fui, \Fuij$                              & region and overlap stalks                                                          \\
		$e_i=(u_i,c_i,o_i,\epsilon_i)$             & local explanation with three primary channels and optional contract metadata       \\
		$s=(\mathrm{vec}(e_i))_{i\in I}$           & stack of local explanations; a 0-cochain                                           \\
		$\rho_{i,ij}$                              & restriction from region $i$ to overlap $U_{ij}$                                    \\
		$C^{0}(F), C^{1}(F)$                       & region and overlap cochain spaces                                                  \\
		\addlinespace[4pt]
		$\dz, D$                                   & coboundary operator and its matrix                                                 \\
		$\omega=\dz s=Ds$                          & raw admissibility defect; a 1-cochain                                              \\
		$\omega_{ij}, \omega^{\bullet}$            & overlap and channel components of $\omega$                                         \\
		$P, V_P$                                   & budget projector and allowed correction subspace                                   \\
		$r_P^{\mathrm{hard}}(\omega)$              & hard budget-feasibility residual                                                   \\
		$C, \norm{\delta}_{C}^{2}$                 & positive-definite cost metric and intervention cost                                \\
		$\delta^{\star}$                           & minimum-cost admissibility correction                                              \\
		$D_C, \delta_C^{\star}$                    & closure-channel coboundary block and minimum-norm closure correction               \\
		\addlinespace[4pt]
		$O$                                        & observation map                                                                    \\
		$\Nblind=\ker D\cap\ker O$                 & blind admissible subspace                                                          \\
		$\Iobs=\ker D/\Nblind$                     & observable admissible quotient                                                     \\
		$B_{\Nblind}$                              & orthonormal basis matrix for $\Nblind$                                             \\
		$\epsilon_{\mathrm{repair}}$               & closure-repair norm entering an overlap; distinct from accumulated repair metadata \\
		$\sigma_{\min}^{+}$                        & smallest nonzero singular value                                                    \\
		$\eta_{\mathrm{feas}},\eta_{\mathrm{svd}}$ & feasibility and singular-value tolerances                                          \\
		$\tau_{\mathrm{zero}}$                     & zero-obstruction tolerance                                                         \\
		$\bigO(\cdot)$                             & asymptotic complexity notation; distinct from the observation map $O$              \\
		\bottomrule
	\end{tabularx}
\end{table}

\section{Proofs}
\label{app:proofs}

This appendix collects the proofs of the two theorems and their companion proposition and corollary, together with the auxiliary pseudoinverse identities that those proofs use.

\subsection{Pseudoinverse identities used throughout}
\label{app:pinv_identities}

We collect the standard Moore--Penrose identities used in Section~\ref{sec:theory}. For a real $m\times n$ matrix $A$, the Moore--Penrose pseudoinverse $A\pinv$ is the unique matrix satisfying the following four conditions:

\begin{equation*}
	\begin{aligned}
		A A\pinv A = A \quad \textup{(MP1)},               & \qquad A\pinv A A\pinv = A\pinv \quad \textup{(MP2)},     \\
		(A A\pinv)^{\top} = A A\pinv \quad \textup{(MP3)}, & \qquad (A\pinv A)^{\top} = A\pinv A \quad \textup{(MP4)}.
	\end{aligned}
\end{equation*}

The following identities are immediate consequences:

\begin{lemma}
	\label{lem:pinv_projections}
	$AA\pinv$ is the orthogonal projector onto $\im A$. $A\pinv A$ is the orthogonal projector onto $(\ker A)^{\perp}$. Equivalently, $\Id - A\pinv A$ is the orthogonal projector onto $\ker A$.
\end{lemma}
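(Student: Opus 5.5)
The plan is to derive all three claims from the Moore--Penrose conditions (MP1)--(MP4). We show first that $AA\pinv$ is the orthogonal projector onto $\im A$, then transfer the argument to $A\pinv A$ by symmetry, and finally obtain the kernel statement by complementation.

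\emph{Step 1 (the projector $AA\pinv$).} Set $\Pi\coloneqq AA\pinv$.
\begin{itemize}
\item It is idempotent: by (MP1), $\Pi^{2}=AA\pinv AA\pinv=(AA\pinv A)A\pinv=AA\pinv=\Pi$.
\item It is symmetric: this is exactly (MP3).
\item An idempotent symmetric matrix is the orthogonal projector onto its own range, so it remains to show $\im\Pi=\im A$.
\item The inclusion $\im\Pi\subseteq\im A$ holds because $\Pi=A(A\pinv)$.
\item The reverse inclusion follows from (MP1): $A=\Pi A$, so every $Ax$ equals $\Pi(Ax)\in\im\Pi$.
\end{itemize}

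\emph{Step 2 (the projector $A\pinv A$).} Set $\Pi'\coloneqq A\pinv A$.
\begin{itemize}
\item It is idempotent: by (MP2), $\Pi'^{2}=A\pinv(AA\pinv A)=A\pinv A=\Pi'$.
\item It is symmetric: this is exactly (MP4).
\item So $\Pi'$ is the orthogonal projector onto $\im\Pi'$, and we must identify this range with $(\ker A)^{\perp}$.
\item We identify the kernel first. If $Ax=0$, then $\Pi'x=0$. Conversely, if $\Pi'x=0$, then $Ax=AA\pinv Ax=A\Pi'x=0$ by (MP1). Hence $\ker\Pi'=\ker A$.
\item Because $\Pi'$ is symmetric, $\im\Pi'=(\ker\Pi')^{\perp}=(\ker A)^{\perp}$.
\end{itemize}

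\emph{Step 3 (the complementary projector).} For any orthogonal projector $Q$ onto a subspace $W$, the matrix $\Id-Q$ is again symmetric and idempotent, and its range is $W^{\perp}$. Applying this with $Q=A\pinv A$ and $W=(\ker A)^{\perp}$ gives that $\Id-A\pinv A$ is the orthogonal projector onto $((\ker A)^{\perp})^{\perp}=\ker A$. This last equality uses finite dimensionality, which holds since $A$ is a real $m\times n$ matrix.

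There is no genuine obstacle in this argument. The only point needing care is Step 2, where $\im(A\pinv A)$ should be identified through the kernel equality $\ker\Pi'=\ker A$. Working via the kernel avoids first proving $\im A\pinv=\im A^{\top}$; the symmetry of $\Pi'$ from (MP4) then converts the kernel equality into the range equality.
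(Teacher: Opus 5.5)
Your proof is correct and follows the paper's route. You get symmetry and idempotency from the Moore--Penrose conditions and then identify the range, and your Step~1 matches the paper's argument for $AA\pinv$ essentially verbatim.

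The paper dismisses the $A\pinv A$ case as analogous. A literal analogue would only give $\im(A\pinv A)=\im A\pinv$, so your identification $\ker(A\pinv A)=\ker A$ fills in what the paper leaves open, as does your explicit treatment of $\Id-A\pinv A$.

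One trivial slip: the grouping $A\pinv(AA\pinv A)=A\pinv A$ in your idempotency computation uses (MP1), not (MP2). Either identity works with the appropriate grouping.
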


\begin{proof}
	By identity~MP3, $AA\pinv$ is symmetric. By identity~MP1, $(AA\pinv)^{2}=A A\pinv A A\pinv=A A\pinv$, so the matrix is idempotent and hence an orthogonal projector. The image of $AA\pinv$ is $\im A$. Clearly, $\im(AA\pinv)\subseteq \im A$, and for $v = Au\in\im A$, we have $AA\pinv v = AA\pinv Au = Au = v$, so $\im(AA\pinv) = \im A$. The statement about $A\pinv A$ is analogous.
\end{proof}

\begin{lemma}[Minimum-norm preimage]
	\label{lem:min_norm_preimage}
	For $b\in\im A$, the set $\{x : Ax = b\}$ has a unique minimum-norm element, given by $A\pinv b\in(\ker A)^{\perp}$.
\end{lemma}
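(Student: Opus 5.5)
The statement to prove is Lemma~\ref{lem:min_norm_preimage}. The plan is to decompose an arbitrary preimage along the orthogonal splitting $\R^{n}=(\ker A)^{\perp}\oplus\ker A$ and to use Lemma~\ref{lem:pinv_projections} to identify $A\pinv b$ as the component in $(\ker A)^{\perp}$.

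\textbf{Step 1: $A\pinv b$ is a preimage.} Since $b\in\im A$, Lemma~\ref{lem:pinv_projections} says that $AA\pinv$ is the orthogonal projector onto $\im A$. It therefore fixes $b$, so $A(A\pinv b)=AA\pinv b=b$.

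\textbf{Step 2: $A\pinv b$ lies in $(\ker A)^{\perp}$.} By MP2, $A\pinv b = A\pinv A (A\pinv b)$. Hence $A\pinv b\in\im(A\pinv A)$. By Lemma~\ref{lem:pinv_projections}, this image is $(\ker A)^{\perp}$.

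\textbf{Step 3: describe the full solution set.} Any other solution $x$ satisfies $A(x-A\pinv b)=0$. So the solution set is exactly the affine space $A\pinv b+\ker A$.

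\textbf{Step 4: compare norms.} Write $x=A\pinv b+k$ with $k\in\ker A$. Since $A\pinv b\perp k$, the Pythagorean identity gives
\begin{align*}
\norm{x}^{2}=\norm{A\pinv b}^{2}+\norm{k}^{2}\ge\norm{A\pinv b}^{2}.
\end{align*}
Equality holds if and only if $k=0$. This establishes both minimality and uniqueness of the minimizer.

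The only point needing care is Step 2, which identifies the range of $A\pinv$ with $(\ker A)^{\perp}$. It follows immediately from MP2 combined with the projector characterization already proved in Lemma~\ref{lem:pinv_projections}, so no real obstacle is expected. Everything else is the routine orthogonal-decomposition argument.
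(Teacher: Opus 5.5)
Your proof is correct and follows essentially the same route as the paper: the solution set is the affine space $A\pinv b+\ker A$, and the minimum-norm element is its unique point in $(\ker A)^{\perp}$. You make explicit two steps the paper compresses, namely the MP2 argument that places $A\pinv b$ in $\im(A\pinv A)=(\ker A)^{\perp}$ and the Pythagorean identity that gives minimality and uniqueness.
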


\begin{proof}
	The preimage set is the affine subspace $x_{0} + \ker A$ for any one preimage $x_{0}$. The minimum-norm element with respect to the Euclidean norm is the orthogonal projection of $0$ onto this affine subspace, which is the unique element in $(\ker A)^{\perp}$. By Lemma~\ref{lem:pinv_projections}, $A\pinv b\in(\ker A)^{\perp}$, and $AA\pinv b = b$ since $b\in\im A$. Hence, $A\pinv b$ is the desired element.
\end{proof}

\subsection{Theorem 1: detailed proof}
\label{app:thm_intervention}

\begin{theorem*}[Minimum-cost budgeted intervention]
	Let $P$ be the orthogonal projector onto a closed subspace $V_{P}\subseteq C^{0}(F)$, let $\iota_{V_{P}}\colon V_{P}\hookrightarrow C^{0}(F)$ be the inclusion, and let $C\succ 0$ on $C^{0}(F)$. Set $A_{P} \coloneqq DP\iota_{V_{P}}$ and $C_{P} \coloneqq \iota_{V_{P}}^{\top}C\iota_{V_{P}}$. The problem $\min_{\delta} \onehalf \delta^{\top}C\delta$ subject to $DP\delta = \omega$ and $(\Id-P)\delta = 0$ is feasible if and only if $\omega\in\im A_{P}$; in the feasible case, the unique minimizer is
	\begin{align*}
		\delta^{\star} =  \iota_{V_{P}} C_{P}^{-1} A_{P}^{\top} (A_{P} C_{P}^{-1} A_{P}^{\top})\pinv \omega.
	\end{align*}
	The reported squared intervention cost is
	\begin{align*}
		\norm{\delta^\star}_{C}^{2} = \omega^{\top} (A_P C_P^{-1} A_P^{\top})\pinv\omega,
	\end{align*}
	while the optimum of the quadratic objective is half this value. For $\lambda>0$, the soft restricted solution is
	\begin{align*}
		\delta_\lambda^\star = \iota_{V_P} (A_P^{\top}A_P+\lambda^{-1}C_P)^{-1}A_P^{\top}\omega.
	\end{align*}
	For the same ambient cost metric $C$, if $V_{P_1}\subseteq V_{P_2}$ and both hard problems are feasible, the reported squared hard costs are antitone in the budget. No such conclusion is asserted for two unrelated budget-specific metrics. The repair convention is $s_{\mathrm{repaired}}=s-P\delta^\star$.
\end{theorem*}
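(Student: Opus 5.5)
The plan is to eliminate the constraint $(\Id-P)\delta=0$ by parametrizing the feasible set through $V_P$. I will write $\delta=\iota_{V_P}z$ with $z\in V_P$, or $\delta=Qz$ for an orthonormal basis matrix $Q$ as in Algorithm~\ref{alg:intervene}. Since $P\iota_{V_P}=\iota_{V_P}$, the equality constraint becomes $A_Pz=\omega$ and the objective becomes $\onehalf z^{\top}C_Pz$. Also $C_P\succ0$, because $z^{\top}C_Pz=(\iota_{V_P}z)^{\top}C(\iota_{V_P}z)>0$ for $z\neq0$ by A2. The map $z\mapsto\iota_{V_P}z$ is a bijection onto $\{\delta:(\Id-P)\delta=0\}$. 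Feasibility of the original problem is therefore equivalent to solvability of $A_Pz=\omega$, which is exactly $\omega\in\im A_P$.

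For the closed form, I will change variables to $y=C_P^{1/2}z$. The problem becomes minimizing $\onehalf\norm{y}^2$ subject to $By=\omega$, where $B\coloneqq A_PC_P^{-1/2}$. Lemma~\ref{lem:min_norm_preimage} gives the unique minimizer $y^\star=B\pinv\omega$; uniqueness also follows from strict convexity. Next I will apply the identity $B\pinv=B^{\top}(BB^{\top})\pinv$, which follows by checking MP1--MP4 or via the SVD. Since $BB^{\top}=A_PC_P^{-1}A_P^{\top}$, this yields $z^\star=C_P^{-1/2}y^\star=C_P^{-1}A_P^{\top}(A_PC_P^{-1}A_P^{\top})\pinv\omega$, and applying $\iota_{V_P}$ gives $\delta^\star$. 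For the cost, $\norm{\delta^\star}_C^2=z^{\star\top}C_Pz^\star=\omega^{\top}M\pinv A_PC_P^{-1}A_P^{\top}M\pinv\omega$ with $M\coloneqq A_PC_P^{-1}A_P^{\top}$ symmetric. The identity $M\pinv MM\pinv=M\pinv$ (MP2) then gives $\omega^{\top}M\pinv\omega$, and the optimal objective is half of this.

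For the soft form, I will restrict to $z\in V_P$ and use $D\iota_{V_P}z=A_Pz$. The objective $\onehalf z^{\top}C_Pz+(\lambda/2)\norm{A_Pz-\omega}^2$ is strictly convex because its Hessian $C_P+\lambda A_P^{\top}A_P$ is positive definite. Setting the gradient to zero gives $(A_P^{\top}A_P+\lambda^{-1}C_P)z=A_P^{\top}\omega$. The matrix on the left is positive definite for every $\lambda\in(0,\infty)$, so it is invertible and the formula is well-defined.

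For antitonicity, the argument is by feasible-set inclusion rather than by the formulas. If $V_{P_1}\subseteq V_{P_2}$, then $P_2P_1=P_1$. Any $\delta$ feasible for $P_1$ satisfies $\delta\in V_{P_1}\subseteq V_{P_2}$, so $P_2\delta=\delta=P_1\delta$ and hence $DP_2\delta=DP_1\delta=\omega$. The $P_2$ feasible set therefore contains the $P_1$ feasible set, and both problems minimize the same objective $\norm{\cdot}_C^2$. The minimum over the larger set is no larger, which proves the inequality; this step visibly needs the shared $C$. The sign convention follows from $P\delta^\star=\delta^\star$ and $Ds_{\mathrm{repaired}}=Ds-DP\delta^\star=\omega-\omega=0$. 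The $V_P^{\perp}$ component is unchanged because $(\Id-P)P\delta^\star=0$. For a channel projector, $P\delta^\star$ is supported on the target channel, so the other channel blocks of $s$ are untouched.

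The main obstacle is the pseudoinverse identity $B\pinv=B^{\top}(BB^{\top})\pinv$ and its use in the cost simplification when $A_P$ is rank-deficient. I would verify it through the SVD $B=U\Sigma V^{\top}$: both sides equal $V\Sigma\pinv U^{\top}$. I would also check that $BB^{\top}(BB^{\top})\pinv$ is the projector onto $\im B=\im A_P$, which guarantees that the constraint holds exactly when $\omega\in\im A_P$.
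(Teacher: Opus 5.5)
Your proof is correct. Four parts coincide with the paper's proof:
\begin{itemize}
\item the feasibility argument via $\delta=\iota_{V_P}z$;
\item the soft Tikhonov stationarity equation;
\item the cost simplification $\omega^{\top}M\pinv MM\pinv\omega=\omega^{\top}M\pinv\omega$ for $M\coloneqq A_PC_P^{-1}A_P^{\top}$;
\item antitonicity by feasible-set inclusion, where you additionally check $DP_2\delta=DP_1\delta$ for $\delta\in V_{P_1}$, which the paper leaves implicit.
\end{itemize}

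The genuine difference is the closed-form step. The paper works with the Lagrangian $\onehalf z^{\top}C_Pz-\mu^{\top}(A_Pz-\omega)$. Stationarity gives $z=C_P^{-1}A_P^{\top}\mu$. The multiplier equation $M\mu=\omega$ is solvable because $\ker M=\ker A_P^{\top}$, hence $\im M=\im A_P$. The paper then takes $\mu^{\star}=M\pinv\omega$; optimality of the stationary point rests implicitly on convexity, and uniqueness on strict convexity.

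You instead whiten by $y=C_P^{1/2}z$. This reduces the problem to a Euclidean minimum-norm preimage for $B=A_PC_P^{-1/2}$, so existence, optimality and uniqueness all follow at once from Lemma~\ref{lem:min_norm_preimage}. The formula then follows from $B\pinv=B^{\top}(BB^{\top})\pinv$ together with $BB^{\top}=M$.

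Your route needs no KKT sufficiency argument. It also makes clear why the choice of multiplier does not matter, since the cost is simply $\norm{B\pinv\omega}^{2}$. The paper's route avoids matrix square roots and the extra pseudoinverse identity. It works directly with $C_P^{-1}$ and $M\pinv$, which are the quantities Algorithm~\ref{alg:intervene} actually computes.
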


\begin{proof}
	\textup{\textbf{Feasibility argument.}} The constraint $(\Id-P)\delta = 0$ is $\delta\in V_{P}$. Combined with $DP\delta = \omega$, the feasible set is $V_{P}\cap (DP|_{V_{P}})^{-1}(\omega) = V_{P}\cap A_{P}^{-1}(\omega)$, nonempty if and only if $\omega\in\im A_{P}$.

	\medskip\noindent\textup{\textbf{Closed-form argument.}}
	Working in $V_{P}$, write $\delta = \iota_{V_{P}}z$ for $z\in V_{P}$. The objective becomes $\onehalf z^{\top}C_{P}z$, which is strictly convex on $V_{P}$ because $C_{P}\succ 0$. The constraint is $A_{P}z = \omega$. The Lagrangian is $L(z,\mu) \coloneqq \onehalf z^{\top}C_{P}z - \mu^{\top}(A_{P}z - \omega)$. Stationarity gives $C_{P}z = A_{P}^{\top}\mu$, so $z = C_{P}^{-1}A_{P}^{\top}\mu$. Substituting into the constraint yields $A_{P}C_{P}^{-1}A_{P}^{\top}\mu = \omega$. A direct computation gives $\im(A_P C_P^{-1} A_P^\top)=\im A_P$, since $\ker(A_P C_P^{-1} A_P^\top)=\ker A_P^\top$; feasibility therefore makes the multiplier equation solvable. The minimum-norm solution of that equation is $\mu^{\star} = (A_{P}C_{P}^{-1}A_{P}^{\top})\pinv\omega$. Hence, $z^{\star} = C_{P}^{-1}A_{P}^{\top}\mu^{\star}$ and $\delta^{\star} = \iota_{V_{P}}z^{\star}$ give the claimed form. With $M \coloneqq A_P C_P^{-1} A_P^\top$,
	\begin{align*}
		{z^\star}^{\top} C_P z^\star = {\mu^\star}^{\top}M\mu^\star = \omega^\top M\pinv M M\pinv\omega = \omega^\top M\pinv\omega,
	\end{align*}
	which is the reported squared intervention cost. The objective value of $\onehalf z^\top C_P z$ is half of that cost. Uniqueness of the minimizer $\delta^{\star}$ follows from strict convexity of that objective on $V_{P}$, where $C_{P}\succ 0$.

	\medskip\noindent\textup{\textbf{Soft Tikhonov argument.}}
	The objective $\onehalf \delta^{\top}C\delta + (\lambda/2)\norm{D\delta - \omega}^{2}$, restricted to $\delta = \iota_{V_{P}}z$, becomes $\onehalf z^{\top}C_{P}z + (\lambda/2)\norm{A_{P}z - \omega}^{2}$. Stationarity in $z$ gives $(C_{P} + \lambda A_{P}^{\top}A_{P})z = \lambda A_{P}^{\top}\omega$, equivalently $(A_{P}^{\top}A_{P} + \lambda^{-1}C_{P})z = A_{P}^{\top}\omega$. The operator on the left is positive definite on $V_{P}$ as the sum of a positive-semidefinite term and a positive-definite term, so the unique solution is $z^{\star}_{\lambda} = (A_{P}^{\top}A_{P} + \lambda^{-1}C_{P})^{-1}A_{P}^{\top}\omega$, and $\delta^{\star}_{\lambda} = \iota_{V_{P}}z^{\star}_{\lambda}$.

	\medskip\noindent\textup{\textbf{Budget-antitonicity argument.}}
	For the same ambient metric $C$ and $V_{P_{1}}\subseteq V_{P_{2}}$, the feasible set at $P_{1}$ is contained in that at $P_{2}$. The minimum of the same strictly convex objective over a smaller feasible set is at least the minimum over the larger set; hence, $\norm{\delta^{\star}(P_{1})}_{C}^{2}\geq \norm{\delta^{\star}(P_{2})}_{C}^{2}$. If the metric changes with the budget, this set-inclusion argument no longer proves any cost inequality.
\end{proof}

\subsection{Identifiability proposition: detailed proof and monotonicity corollary}
\label{app:thm_identifiability}

\begin{proposition*}[Identifiability dimensions]
	With $\Nblind = \ker D\cap\ker O$ and $\Iobs = \ker D/\Nblind$, the dimensions satisfy $\dim\Nblind=\dim\ker D-\rank(O|_{\ker D})$, $\dim\Iobs=\rank(O|_{\ker D})$, and $\dim\ker D=\dim\Nblind+\dim\Iobs$.
\end{proposition*}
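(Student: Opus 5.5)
The plan is to apply rank--nullity to the restricted observation map $O|_{\ker D}\colon \ker D\to Y$. Everything happens in the finite-dimensional space $C^{0}(F)$, so $\ker D$ is a finite-dimensional subspace. The linear map $O|_{\ker D}$ is defined on it, and its image $O(\ker D)\subseteq Y$ has dimension $\rank(O|_{\ker D})$ by definition.

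The first step identifies the kernel of this restriction. A vector $v\in\ker D$ satisfies $O|_{\ker D}v=0$ exactly when $v\in\ker O$. Hence
\begin{align*}
	\ker(O|_{\ker D})=\ker D\cap\ker O=\Nblind .
\end{align*}
Rank--nullity for $O|_{\ker D}$ then gives $\dim\ker D=\dim\Nblind+\rank(O|_{\ker D})$, which rearranges to the blind-space formula $\dim\Nblind=\dim\ker D-\rank(O|_{\ker D})$.

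The second step handles the quotient. Since $\Nblind$ is a subspace of $\ker D$, the quotient $\Iobs=\ker D/\Nblind$ has dimension $\dim\ker D-\dim\Nblind$, and by the first step this equals $\rank(O|_{\ker D})$. A more illuminating route is the first isomorphism theorem: $O|_{\ker D}$ induces an isomorphism $\ker D/\Nblind\cong O(\ker D)$. This directly justifies the interpretive claim that $\Iobs$ parameterizes the admissibility-preserving directions that $O$ resolves, because distinct classes in $\Iobs$ have distinct observations. The identification $\Iobs\cong(\ker D)\cap\Nblind^{\perp}$ from Definition~\ref{def:blind_observable} follows from the orthogonal decomposition of $\ker D$ into $\Nblind$ and its complement inside $\ker D$, which uses the standard inner product on $C^{0}(F)$. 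Summing the two dimension formulas gives $\dim\ker D=\dim\Nblind+\dim\Iobs$.

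No step is a real obstacle. The only point that needs care is the interpretive sentence. Membership in $\ker D$ means $\omega=Ds$ is unchanged under $s\mapsto s+v$, which is the gauge-invariance reading of admissibility preservation. Membership in $\ker O$ means the observations are unchanged. These are unwindings of the definitions, so the sentence needs no further argument.
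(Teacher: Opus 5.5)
Your proof is correct and takes essentially the same approach as the paper's: identify $\ker(O|_{\ker D})=\ker D\cap\ker O=\Nblind$, apply rank--nullity to $O|_{\ker D}$, use the quotient dimension formula for $\Iobs$, and add the two identities. Your first-isomorphism-theorem remark, $\ker D/\Nblind\cong O(\ker D)$, justifies the interpretive claim about $\Iobs$, which the paper's proof leaves implicit.
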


\begin{proof}
	The identity $\ker(O|_{\ker D}) = \ker D\cap\ker O = \Nblind$ follows from the definition of the restriction. Rank--nullity for $O|_{\ker D}\colon \ker D\to Y$ gives $\dim\ker D = \dim\ker(O|_{\ker D}) + \rank(O|_{\ker D})$, which is the first identity. The quotient dimension formula gives $\dim\Iobs = \dim\ker D - \dim\Nblind = \rank(O|_{\ker D})$, the second identity. Adding the first two identities cancels $\rank(O|_{\ker D})$, leaving the third, $\dim\ker D=\dim\Nblind+\dim\Iobs$.
\end{proof}

\begin{corollary}[Monotonicity of $\Nblind$ under finer observations]
	\label{cor:ident_nested}
	If $O_{1}, O_{2}$ have $\ker O_{1}\supseteq\ker O_{2}$, meaning that $O_{2}$ observes everything $O_{1}$ observes and possibly more, then $\dim\Nblind(O_{2})\leq \dim\Nblind(O_{1})$, and correspondingly $\dim\Iobs(O_{2})\geq \dim\Iobs(O_{1})$.
\end{corollary}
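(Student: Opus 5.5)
The plan is to reduce both inequalities to the Identifiability proposition just proved, so that everything comes down to a single containment of subspaces. First, from $\ker O_{1}\supseteq\ker O_{2}$, intersecting both sides with $\ker D$ gives
\begin{align*}
	\Nblind(O_{2}) = \ker D\cap\ker O_{2} \subseteq \ker D\cap\ker O_{1} = \Nblind(O_{1}).
\end{align*}
Both are subspaces of the finite-dimensional space $C^{0}(F)$, so monotonicity of dimension under inclusion yields $\dim\Nblind(O_{2})\le\dim\Nblind(O_{1})$.

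For the quotient statement, I will invoke the dimension identity $\dim\ker D=\dim\Nblind(O)+\dim\Iobs(O)$ from the Identifiability proposition, applied once with $O_{1}$ and once with $O_{2}$. The crucial point is that $D$, and hence $\dim\ker D$, does not depend on the observation map. Subtracting the two identities therefore gives $\dim\Iobs(O_{2})-\dim\Iobs(O_{1})=\dim\Nblind(O_{1})-\dim\Nblind(O_{2})\ge 0$.

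An equivalent route goes through ranks. Since $\ker(O_{2}|_{\ker D})\subseteq\ker(O_{1}|_{\ker D})$, rank--nullity on $\ker D$ gives $\rank(O_{2}|_{\ker D})\ge\rank(O_{1}|_{\ker D})$, which is exactly $\dim\Iobs(O_{2})\ge\dim\Iobs(O_{1})$ by the proposition. I would mention this only as a cross-check.

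There is no substantive obstacle. The one point that needs care is that $O_{1}$ and $O_{2}$ may have different codomains $Y_{1}$ and $Y_{2}$. This causes no difficulty, because the argument uses only their kernels inside the common domain $C^{0}(F)$, never a comparison between the codomains. I would also note that the hypothesis as written, "observes everything $O_{1}$ observes," is to be read precisely as the kernel inclusion; for example, it holds when $O_{1}=L\circ O_{2}$ for some linear map $L$.
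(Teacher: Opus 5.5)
Your proof is correct and follows the paper's argument: the inclusion $\ker D\cap\ker O_{2}\subseteq\ker D\cap\ker O_{1}$ gives the inequality for the blind subspace, and $\dim\Iobs=\dim\ker D-\dim\Nblind$, with $\dim\ker D$ independent of the observation map, gives the quotient inequality. Your rank-based cross-check and the remarks on differing codomains and on the factorization $O_{1}=L\circ O_{2}$ are valid but not needed.
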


\begin{proof}
	$\ker D\cap\ker O_{2}\subseteq \ker D\cap\ker O_{1}$, so $\dim\Nblind(O_{2})\leq \dim\Nblind(O_{1})$. The complementary inequality on $\Iobs$ follows from $\dim\Iobs = \dim\ker D - \dim\Nblind$.
\end{proof}

\subsection{Closure-recovery corollary: detailed proof}
\label{app:thm_closure_recovery}

\begin{corollary*}[Closure-restricted recoverability]
	Let $D_{C}$ be the closure-channel block of $D$ under Assumption~A0, as a typed map $C^{0}(F)^{\mathrm{closure}}\to C^{1}(F)^{\mathrm{closure}}$. If $\omega^{\mathrm{closure}}\in\im D_{C}$, the unique minimum-norm closure correction in the Euclidean metric is $\delta_{C}^{\star} = D_{C}\pinv \omega^{\mathrm{closure}} \in \im(D_C^\top)=\rowsp(D_C)$, and the repaired stalks $(u_{i}, c_{i} - \delta_{C}^{\star}|_{i}, o_{i}, \epsilon_{i})_{i\in I}$ have closure-channel admissibility defect equal to zero, with non-closure channels unchanged.
\end{corollary*}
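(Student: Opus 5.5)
The argument splits into two parts. The minimum-norm claim is a direct specialization of Lemma~\ref{lem:min_norm_preimage}. The repair claim follows from the block-diagonal structure of Assumption~A0.

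\textbf{Minimum-norm correction.} Apply Lemma~\ref{lem:min_norm_preimage} with $A=D_C$ and $b=\omega^{\mathrm{closure}}$, which lies in $\im D_C$ by hypothesis. This gives that $D_C\pinv\omega^{\mathrm{closure}}$ is the unique minimum-Euclidean-norm solution of $D_C\delta=\omega^{\mathrm{closure}}$, and that it lies in $(\ker D_C)^{\perp}$. The identification $(\ker D_C)^{\perp}=\im(D_C^{\top})=\rowsp(D_C)$ is the fundamental subspace relation. Alternatively, Lemma~\ref{lem:pinv_projections} shows that $D_C\pinv D_C$ projects onto $(\ker D_C)^{\perp}$, and that projector has the same image as $D_C\pinv$. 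The uniqueness statement within $\rowsp(D_C)$ follows because two solutions differ by an element of $\ker D_C$. Such a difference lies both in $\ker D_C$ and in its orthogonal complement, so it is zero. The same observation shows that adding any $k\in\ker D_C$ leaves $D_C\delta$ unchanged. One could also obtain the result as the case $P=\Pi_{\mathrm{closure}}$, $C=\Id$ of Theorem~\ref{thm:intervention}, but the lemma is more direct.

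\textbf{Repair of the closure channel.} Set $s_{\mathrm{repaired}}=s-\iota_{\mathrm{closure}}\delta_C^{\star}$, following convention A1. By linearity, $Ds_{\mathrm{repaired}}=\omega-D\iota_{\mathrm{closure}}\delta_C^{\star}$. Under Assumption~A0, $D$ is block diagonal with respect to the channel decomposition. Hence $\Pi_{\bullet}D\iota_{\mathrm{closure}}=0$ for every $\bullet\neq\mathrm{closure}$, and $\Pi_{\mathrm{closure}}D\iota_{\mathrm{closure}}=D_C$. Projecting onto the closure channel gives $\omega^{\mathrm{closure}}-D_C\delta_C^{\star}=\omega^{\mathrm{closure}}-D_CD_C\pinv\omega^{\mathrm{closure}}=0$. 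The last equality holds because $D_CD_C\pinv$ is the orthogonal projector onto $\im D_C$ (Lemma~\ref{lem:pinv_projections}) and $\omega^{\mathrm{closure}}\in\im D_C$. Projecting onto any other channel $\bullet$ gives $\omega^{\bullet}$ unchanged. The stalkwise description $(u_i,c_i-\delta_C^{\star}|_i,o_i,\epsilon_i)$ is just the coordinate form of subtracting the zero-extended vector.

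\textbf{Raw defects and the blind-space remark.} For a raw defect $\omega=Ds$, block diagonality gives $\omega^{\mathrm{closure}}=\Pi_{\mathrm{closure}}Ds=D_Cs_C$, where $s_C$ is the closure component of $s$. Hence the image condition holds automatically. The remark that a $\ker D_C$ direction belongs to $\Nblind$ only after embedding and checking membership in $\ker D\cap\ker O$ needs no proof beyond Definition~\ref{def:blind_observable}. It suffices to note that $\iota_{\mathrm{closure}}k\in\ker D$ exactly when $k\in\ker D_C$, again by A0, while membership in $\ker O$ is an independent condition.

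The only step requiring care is the typed bookkeeping in the repair step: verifying that the off-diagonal blocks $\Pi_{\bullet}D\iota_{\mathrm{closure}}$ vanish under A0, so that the non-closure channels are genuinely untouched. Everything else is a routine pseudoinverse identity.
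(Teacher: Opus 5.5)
Your proof is correct and follows the paper's argument. The paper likewise obtains $\delta_C^{\star}=D_C\pinv\omega^{\mathrm{closure}}$ from Lemma~\ref{lem:min_norm_preimage} (or Theorem~\ref{thm:intervention}), reads off row-space membership from the pseudoinverse range identity, and uses block diagonality of $D$ under Assumption~A0 to show that the closure defect of the raw cochain vanishes after subtracting $\iota_{\mathrm{closure}}\delta_C^{\star}$ while the other channels are unchanged; you additionally spell out $D_CD_C\pinv\omega^{\mathrm{closure}}=\omega^{\mathrm{closure}}$ via Lemma~\ref{lem:pinv_projections}, which the paper leaves implicit.
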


\begin{proof}
	Apply Theorem~\ref{thm:intervention}, or directly Lemma~\ref{lem:min_norm_preimage}, to the restricted closure-channel problem
	\begin{align*}
		\min_{z\in C^{0}(F)^{\mathrm{closure}}} \onehalf\norm{z}^{2} \quad\text{subject to}\quad D_C z=\omega^{\mathrm{closure}}.
	\end{align*}
	The closed-form solution to this smaller problem is $\delta_{C}^{\star}=D_C\pinv\omega^{\mathrm{closure}}$. The correction inserted into the full stalk vector is the zero-extension $\iota_{\mathrm{closure}}\delta_C^\star$; throughout this proof, $\delta_C^\star$ denotes the restricted closure vector whenever no ambiguity is possible. The pseudoinverse range identity $D_{C}\pinv y\in (\ker D_{C})^{\perp} = \im(D_C^\top)=\rowsp(D_C)$ for $y\in\im D_{C}$ gives the membership statement. For the raw defect $\omega=Ds$, substituting the zero-extended correction into the full stalk vector under the A1 sign convention gives $s_{\mathrm{repaired}}=s- \iota_{\mathrm{closure}}\delta_{C}^{\star}$. Consequently, on the closure channel, $(Ds_{\mathrm{repaired}})^{\mathrm{closure}} =\omega^{\mathrm{closure}}-D_{C}\delta_{C}^{\star}=0$. Because $D$ is block diagonal under Assumption~A0, $(Ds_{\mathrm{repaired}})^{\bullet}=(Ds)^{\bullet}$ for every non-closure channel $\bullet$.
\end{proof}

\subsection{Theorem 2: detailed proof}
\label{app:thm_conservation}

\begin{theorem*}[Conservation-contract detectability]
	For an oriented edge $e=(a,b)$ and repaired endpoint $q\in\{a,b\}$, assume baseline cancellation, one-sided zero neighbor contribution on $e$, positive rank of $\rho^{\mathrm{closure}}_{q,e}$, and row-space alignment $\delta c_q\in\rowsp(\rho^{\mathrm{closure}}_{q,e})$. Then
	\begin{align*}
		\norm{\omega^{\mathrm{closure}}_{e}} \geq \sigma_{\min}^{+}(\rho^{\mathrm{closure}}_{q,e}) \cdot \epsilon_{\mathrm{repair}}.
	\end{align*}
	Without row-space alignment, the same argument gives the projected bound \eqref{eq:thm_conservation_projected}.
\end{theorem*}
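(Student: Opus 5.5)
The argument has two parts: an exact identity for the closure component of the defect on $e$, followed by a singular-value lower bound for a linear map restricted to its row space.

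\textbf{Step 1: the identity.} By Assumption~A0 the closure component of $\omega_e$ only involves closure blocks. With the coboundary convention of Definition~\ref{def:coboundary},
\begin{align*}
	\omega^{\mathrm{closure}}_e=\rho^{\mathrm{closure}}_{b,e}c_b-\rho^{\mathrm{closure}}_{a,e}c_a .
\end{align*}
Split each endpoint's actual closure vector as baseline plus repair, $c_a=c_a^0+\delta c_a$ and $c_b=c_b^0+\delta c_b$. By linearity, $\omega^{\mathrm{closure}}_e$ is the baseline difference plus the repair difference. Baseline cancellation removes the baseline difference. The one-sided zero-contribution condition removes the term $\rho^{\mathrm{closure}}_{\bar q,e}\delta c_{\bar q}$. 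What remains is $\varepsilon_q\rho^{\mathrm{closure}}_{q,e}\delta c_q$, with sign $+1$ when $q=b$ and $-1$ when $q=a$. This is the identity in \eqref{eq:thm_conservation}. Since the sign has modulus one, it also gives $\norm{\omega^{\mathrm{closure}}_e}=\norm{\rho^{\mathrm{closure}}_{q,e}\delta c_q}$.

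\textbf{Step 2: the bound.} Write $R\coloneqq\rho^{\mathrm{closure}}_{q,e}$. Since $R$ has positive rank, take its compact SVD $R=\sum_{k=1}^{r}\sigma_k u_kv_k^\top$ with $r\ge1$ and $\sigma_k\ge\sigma_{\min}^{+}>0$. The vectors $v_k$ form an orthonormal basis of $\rowsp(R)$. Take any $x$ and expand its row-space component as $\Pi_{\rowsp(R)}x=\sum_k\alpha_kv_k$. Kernel components are annihilated, so $Rx=R\,\Pi_{\rowsp(R)}x$. Then
\begin{align*}
	\norm{Rx}^2=\sum_k\sigma_k^2\alpha_k^2\ge(\sigma_{\min}^{+})^2\norm{\Pi_{\rowsp(R)}x}^2 .
\end{align*}
Setting $x=\delta c_q$ gives the projected bound \eqref{eq:thm_conservation_projected}. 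Under row-space alignment, $\Pi_{\rowsp(R)}\delta c_q=\delta c_q$, whose norm is $\epsilon_{\mathrm{repair}}$, which gives the main bound.

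The only delicate point is bookkeeping. One must check that the closure vector entering the restriction really decomposes as baseline plus repair at both endpoints, so that the two hypotheses cancel exactly the right terms. One must also use the nonzero singular value $\sigma_{\min}^{+}$ rather than $\sigma_{\min}$, because $R$ may be rank-deficient and then $\sigma_{\min}=0$. The rank-zero remark is immediate: in that case $\rowsp(R)=\{0\}$, so the projected term is zero.
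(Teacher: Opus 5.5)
Your proposal is correct and follows essentially the same route as the paper's proof. Baseline cancellation and one-sided zero contribution give the identity $\omega^{\mathrm{closure}}_e=\varepsilon_q\rho^{\mathrm{closure}}_{q,e}\delta c_q$; the row-space singular-value bound then gives the projected inequality, which reduces to the main bound under row-space alignment. The only difference is that you spell out the baseline-plus-repair split and derive the singular-value inequality from the compact SVD, both of which the paper uses without writing them out.
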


\begin{proof}
	Set $\rho \coloneqq \rho^{\mathrm{closure}}_{q,e}$. Baseline cancellation and the one-sided zero-contribution assumption give $\omega^{\mathrm{closure}}_{e}=\varepsilon_q\rho \delta c_q$, with $\varepsilon_q\in\{-1,+1\}$. Decompose $\delta c_q=\delta c_q^\parallel+\delta c_q^\perp$, where $\delta c_q^\parallel\in\rowsp(\rho)$ and $\delta c_q^\perp\in\ker\rho$. Then $\rho\delta c_q=\rho\delta c_q^\parallel$. The singular-value lower bound for $\rho$ restricted to its row space gives $\norm{\rho\delta c_q^\parallel} \geq \sigma_{\min}^{+}(\rho)\norm{\delta c_q^\parallel}$. Hence,
	\begin{align*}
		\norm{\omega^{\mathrm{closure}}_{e}} = \norm{\rho \delta c_{q}} \geq \sigma_{\min}^{+} \norm{\Pi_{\rowsp(\rho)} \delta c_{q}},
	\end{align*}
	which is \eqref{eq:thm_conservation_projected}. Under row-space alignment, $\delta c_{q}\in\rowsp(\rho)$, so the kernel component is zero and $\norm{\Pi_{\rowsp(\rho)} \delta c_{q}} = \norm{\delta c_{q}} = \epsilon_{\mathrm{repair}}$, recovering \eqref{eq:thm_conservation}.
\end{proof}

\section{Algorithmic details and complexity}
\label{app:algorithms}

This appendix collects implementation-level details of the algorithms presented in Section~\ref{sec:algos}, with complexity analysis and numerical-stability considerations.

\subsection{Coboundary assembly}
\label{app:coboundary_assembly}

The coboundary matrix $D$ has a block structure indexed by $(\text{edge}, \text{region})$. For each edge $(i,j)\in E$, the corresponding row block is $[ \ldots, -\rho_{i,ij}, \ldots, +\rho_{j,ij}, \ldots ]$, with zeros in all column blocks not indexed by $i$ or $j$.

\paragraph{Block sparsity.}
Each edge contributes exactly two nonzero column blocks. The matrix $D$ therefore has exactly $2|E|$ nonzero blocks. For typical covers with $|E| \ll |I|^{2}$, storage and matrix--vector costs scale with $|E|$ rather than with the number of region pairs. Path covers, for example, satisfy $|E| = |I| - 1$.

\paragraph{Channel block-diagonal structure.}
When the restriction maps respect the channel decomposition, $D$ is block diagonal with respect to the channel partition. The channel blocks $D^{\bullet}$ for $\bullet\in\{\mathrm{state},\mathrm{closure},\mathrm{obs},\mathrm{meta}\}$ can be assembled independently. This decomposition reduces the SVD cost by a constant factor because a sum of smaller SVDs is cheaper than one large SVD when the channel sizes differ. The block-diagonal structure is also essential for the channel decomposition of $\omega$ in \eqref{eq:channel_decomp}.

\paragraph{Numerical realization.}
After assembly, Algorithm~\ref{alg:intervene} is applied once for each budget basis $Q$, whose orthonormal columns span $V_P=\im P$. The algorithm uses $A_P=DQ$ and $C_P=Q^{\top}CQ$, the hard image-membership residual, the minimum-cost formula of Theorem~\ref{thm:intervention}, and the separately reported Tikhonov record. The returned correction lies in $V_P$ and is subtracted from the current stalk vector according to \eqref{eq:sign_convention}.

\subsection{Pseudoinverse implementation}
\label{app:pinv_impl}

We use the truncated SVD-based pseudoinverse with a rank-tolerance threshold $\eta_{\mathrm{rank}} \coloneqq \max(m, n)\cdot \epsilon_{\mathrm{mach}}\cdot \sigma_{\max}$ as a SEAM-specific convention, where $\epsilon_{\mathrm{mach}}$ is the machine epsilon of the working floating-point type and $\sigma_{\max}$ the largest singular value. The exact routine and version are recorded whenever an implementation uses a library default. For ill-conditioned matrices, including restriction maps with near-rank-deficient overlap blocks, the threshold can be tightened to $10\cdot \epsilon_{\mathrm{mach}}\cdot\sigma_{\max}$ at the cost of including near-singular directions in the pseudoinverse range.

\paragraph{Stability under noise.}
For a perturbed matrix $D + \Delta$, pseudoinverse perturbation bounds require fixed rank or a spectral gap. Under such a rank-stability assumption, bounds of \citet{wedin1972perturbation,wedin1973perturbation} control $\norm{(D+\Delta)\pinv - D\pinv}$ with constants depending on the nonzero singular gap. Without fixed rank, the pseudoinverse need not be continuous. Therefore, the obstruction residual is sensitive to noise in $D$ when $\sigma_{\min}^{+}(D)$ is small. In practice, when $\rank(D)>0$, we monitor the positive-rank condition number
\begin{align*}
	\kappa_{+}(D) \coloneqq \sigma_{\max}(D)/\sigma_{\min}^{+}(D)
\end{align*}
and report warnings when $\kappa_{+}(D) > 10^{10}$. When $\rank(D)=0$, $\kappa_{+}(D)$ is undefined because there is no nonzero singular value; rank-zero cases are reported separately.

\subsection{Budgeted intervention solver: numerical considerations}
\label{app:budget_solver_num}

Algorithm~\ref{alg:intervene} uses the closed-form solution in the feasible case and the Tikhonov-regularized solution in the infeasible case. The transition between the two is governed by the feasibility tolerance $\eta_{\mathrm{feas}}$. Three numerical considerations apply.

\paragraph{Choice of $\eta_{\mathrm{feas}}$.}
We set
\begin{align*}
	\eta_{\mathrm{feas}}\coloneqq \max\{10^{-8}\norm{\omega},\eta_{\mathrm{feas},\min}\}
\end{align*}
by default, with a fixed machine-scale $\eta_{\mathrm{feas},\min}>0$. This positive-floor definition preserves the stated $\eta_{\mathrm{feas}}>0$ condition even when $\omega=0$, avoiding spurious Tikhonov fallbacks triggered by linear-algebra roundoff. The choice can be overridden per call.

\paragraph{Choice of $\lambda$.}
The Tikhonov regularizer controls the trade-off between residual size and intervention norm. By default, we set
\begin{align*}
	\lambda \coloneqq \frac{\max\{\norm{\omega},\varepsilon_{\omega}\}} {10 \max\{\sigma_{\max}(A_P),\varepsilon_{\sigma}\}},
\end{align*}
with positive numerical floors. The soft branch is bypassed entirely when $\norm{\omega}\le\tau_{\mathrm{zero}}$ and the case is already globally admissible. The choice can be overridden per call.

\paragraph{Cost metric $C$.}
The default cost metric is the identity, giving Euclidean intervention norms. Supported domain-specific cost metrics include a Mahalanobis norm with respect to a prior covariance, channel-weighted norms, and energy norms.

\subsection{Identifiability analyzer: numerical considerations}
\label{app:ident_num}

\paragraph{Choice of $\eta_{\mathrm{svd}}$.}
By default, we set
\begin{align*}
	\eta_{\mathrm{svd}}\coloneqq \max\{10^{-8}\sigma_{\max},\eta_{\mathrm{svd},\min}\}
\end{align*}
for the kernel basis, with the same positive floor used for both the SVD of $D$ and the SVD of $O V_{\ker}$. A larger tolerance yields a more inclusive numerical $\ker D$ and can create spurious blind directions by declaring genuinely nonzero singular values to be zero; a smaller tolerance is more conservative and can miss near-null or roundoff-null blind directions.

\paragraph{Blind-subspace certificate.}
After computing the blind admissible basis $B_{\Nblind}$, we verify $\norm{D B_{\Nblind}}$, $\norm{O B_{\Nblind}}$, orthonormality, and the numerical dimension identity from Proposition~\ref{thm:identifiability}. Together, these checks form the certificate; $\norm{O B_{\Nblind}}$ records the observation-invisibility component of that certificate.

\paragraph{Channel decomposition.}
The columns of $B_{\Nblind}$ are decomposed into block components by projecting onto the three primary channel subspaces and the optional metadata subspace. The resulting fractions are included in the diagnostic record.

\subsection{Monitoring pipeline: caching}
\label{app:monitor_caching}

The monitoring pipeline in Algorithm~\ref{alg:monitor} caches all quantities that do not depend on the streaming input.

\paragraph{Cached quantities.}
The cache stores the coboundary matrix $D$, its SVD factors $U_D,\Sigma_D,V_D$, the pseudoinverse $D\pinv$, and the channel-block submatrices $D^{\bullet}$. The cache also stores each projected coboundary $DP_b$ with its pseudoinverse. When the observation map is static, the blind admissible basis $B_{\Nblind}$ is cached as well.

\paragraph{Per-call quantities.}
Only $s(t)=(\mathrm{vec}(e_i(t)))_{i\in I}$, $\omega(t)=Ds(t)$, and the per-budget intervention costs need to be recomputed per call. The wall-clock cost is dominated by one matrix--vector product with $D$ plus $|\mathcal K_{\mathrm{bud}}|$ projection operations.

\section{CT-SEAM conservation backend}
\label{app:ctseam}

This appendix describes the CT-SEAM local generator used in selected PDE experiments. The backend supplies closure-rich explanations through the common SEAM-$\Omega$ generator interface in the conservation-contract detectability and backend-interoperability studies.

\subsection{Motivation and construction}
\label{app:ctseam_motivation}

For PDE systems with conservation laws such as mass, momentum, and energy, a local finite-volume solver may drift from exact conservation due to discretization, boundary handling, or numerical flux choices. A conservation contract enforces conservation by applying a mass-redistribution repair at each step; the accumulated absolute magnitude of that repair may be recorded as optional contract metadata.

\paragraph{Local generator.}
Given a finite-volume Burgers solver on region $U_{i}$, the generator first computes each state update by standard finite-volume time integration. The generator then evaluates the conservation residual
\begin{align*}
	\Delta m_{i}(t) \coloneqq \int_{U_{i}} u(t)\,dx - \int_{U_{i}} u(0)\,dx + \text{flux contributions}.
\end{align*}
Whenever $|\Delta m_{i}(t)| > \tau_{\mathrm{cons}}$, the generator applies a mass-redistribution repair by adding $\delta_{i}(t) \coloneqq - \Delta m_{i}(t) / |U_{i}|$ to the state on $U_{i}$. This correction restores exact conservation. The generator accumulates the repair in the closure block according to $c_{i} \mathrel{+}= \delta_{i}(t) \cdot \chi_{U_{i}}$, where $\chi_{U_{i}}$ is the indicator function of $U_{i}$ in the closure space.

\paragraph{Explanation block.}
The state block of a CT-SEAM explanation contains the conservation-corrected solution. The closure block contains the accumulated repair history, represented by one scalar for each region and optionally weighted by a Gaussian centered at the repair location. The observation block is empty because this solver consumes no sensor data. The optional metadata block contains the accumulated absolute repair magnitude $\epsilon_{i}^{\mathrm{abs}} \coloneqq \sum_{t}|\delta_{i}(t)|$.

\subsection{Role in Theorem 2}
\label{app:ctseam_thm_conservation}

Theorem~\ref{thm:conservation} connects the actual closure-channel vector entering an overlap restriction to the SEAM-$\Omega$ closure-channel obstruction. For an oriented edge $e$ and repaired endpoint $q$, under baseline cancellation, one-sided zero neighbor contribution, positive rank, and row-space alignment,
\begin{align*}
	\norm{\omega^{\mathrm{closure}}_{e}} \geq \sigma_{\min}^{+}(\rho^{\mathrm{closure}}_{q,e}) \cdot \norm{\delta c_q}.
\end{align*}
The conservation-contract detectability study and its discretization-tolerant parametric sweep evaluate this diagnostic using the repair-vector convention defined in that study's setup.

The shared interface also accepts the analytic and finite-volume generators used in the local--global amplitude-mismatch, data--physics conflict-attribution, PDE model-form discrimination, and random-sinusoid Burgers studies, as well as the tabular regression generators used in the application studies. The conservation-contract detectability and backend-interoperability studies use CT-SEAM specifically, with the detectability study providing the parametric evaluation of the claim in Theorem~\ref{thm:conservation}.

\section{Experimental protocols, baselines, and data provenance}
\label{app:protocols}

This appendix records the common experimental configuration, external data sources, preprocessing or generation steps, comparative baselines, and robustness checks needed to interpret the results of Section~\ref{sec:experiments}.

\subsection{Common configuration and random-seed protocol}
\label{app:seed_protocol}

\paragraph{Cover construction.}
1D PDE experiments use a uniform partition of $[0,1]$ into three regions with a $10\%$ overlap fraction. Real-world experiments use either seasonal temporal covers or abstract overlap-list covers, as detailed below.

\paragraph{Solvers.}
PDE solvers use finite-volume schemes with periodic boundaries. Time integration is by explicit Euler with CFL number $0.4$, following the standard finite-volume framework for hyperbolic conservation laws \citep{leveque2002finite}. The grid resolution is $N_{x} = 200$ per region.

\paragraph{Seed schedule and aggregation.}
The standard schedule is a fixed set of $n=5$ seeds, used for every stochastic experiment, including the FNO OOD monitoring study. Stochastic results report mean $\pm$ standard deviation over that schedule. Deterministic experiments are also executed across the same schedule to verify seed-invariance; their results are reported once, or equivalently as a zero-dispersion aggregate where a cross-suite figure requires a common format. Analyses using additional seeds are identified as robustness checks. Seed-resolved and aggregate outputs are retained with the study artifacts.

\paragraph{FNO training and aggregation.}
The FNO OOD monitoring models are trained on the five standard seeds with all other configuration settings fixed. For each shift family and seed, the protocol records the per-trajectory pair $(\norm{\omega(t_{\mathrm f})}, \norm{u^{\mathrm{FNO}}(t_{\mathrm f}) -u^{\mathrm{ref}}(t_{\mathrm f})}_{L^{2}})$. Pearson and Spearman correlations are computed per seed and then reported as mean $\pm$ standard deviation across the five seeds; the $95\%$ bootstrap CI is also reported for every seed. The confidence-based comparator is a separate five-member ensemble whose otherwise identical members use distinct initialization seeds.

\paragraph{Deterministic and data-loading checks.}
The conservation-contract detectability study has zero standard deviation across all five seeds. The random-sinusoid Burgers study is deterministic conditional on a given seed and reports $\norm{\omega}=0.4754\pm0.0000$ under the aggregate format. In the metro traffic seasonality study, the data-loading and AR(2)-fitting path is deterministic given the seed, so the multi-seed standard deviation is zero; the Potomac streamflow seasonality study likewise reports $0.0266\pm0.0000$. The three site values in the multi-site streamflow comparison are verified to be seed-invariant across the schedule. The air-quality temporal-consistency study uses the fixed UCI dataset and preprocessing protocol of Appendix~\ref{app:preprocessing}; its OLS fit and obstruction computation are deterministic, yielding $\norm{\omega}=0.2850$ with zero seed dispersion.

\paragraph{Seed-resolved experimental outcomes.}
For the zero-floor negative control, every one of the five identical-model control runs is below $10^{-10}$. In the financial regime-detection study, the closure channel dominates on all five seeds, and the identical-model control has $\norm{\omega}=0$ on every seed. In the industrial multi-zone fault-detection study, the state channel dominates the fault response on all five seeds, and the identical-zone control is zero on every seed. For each of the three model families in the household-power cross-framework audit, the state channel dominates on all five seeds; the winter--spring overlap is dominant on four of the five seeds.

\paragraph{Sweeps and ablations.}
The conservation-contract detectability sweep crosses five values of $\epsilon_{\mathrm{repair}}=\norm{\delta c_q}$ with the five-seed schedule, producing 25 configurations. The controlled predictor-perturbation result is evaluated on all five seeds and 10 random directions at each perturbation magnitude. The empirical monotonicity fraction of that result is $1.00\pm0.00$ across seeds and directions. A 50-seed robustness analysis gives the same monotonicity, attaining $1.00$ for every seed and direction.

\subsection{Data sources and generated cases}
\label{app:datasets}

Table~\ref{tab:datasets} records the external data sources and generated cases used in Section~\ref{sec:experiments}, together with their source citations. Preprocessing and generation steps follow in Appendix~\ref{app:preprocessing}.

\begin{table}[pos=h!]
	\centering
	\caption{External sources and generated cases associated with the experiments. License information is taken from the cited repository or publisher records.}
	\label{tab:datasets}
	\small
	\begin{tabularx}{\linewidth}{ >{\raggedright\arraybackslash}X >{\raggedright\arraybackslash}X >{\raggedright\arraybackslash}X >{\raggedright\arraybackslash}X}
		\toprule
		Dataset                                             & Used in                               & License                                                               & Source                               \\
		\midrule
		UCI Metro Interstate Traffic Volume                 & Metro traffic seasonality             & CC BY 4.0                                                             & \citet{hogue2019metro}               \\
		\addlinespace[1pt]
		USGS NWIS streamflow                                & Potomac and multi-site streamflow     & Public domain under Section 105 of Title 17 of the United States Code & \citet{usgs2016nwis}                 \\
		\addlinespace[1pt]
		UCI Air Quality                                     & Air-quality temporal consistency      & CC BY 4.0                                                             & \citet{devito2008airquality}         \\
		\addlinespace[1pt]
		UCI Individual Household Electric Power Consumption & Household-power cross-framework audit & CC BY 4.0                                                             & \citet{hebrail2012household}         \\
		\addlinespace[1pt]
		Synthetic financial returns                         & Financial regime detection            & Not applicable; synthetic                                             & \citet{cont2001empirical}            \\
		\addlinespace[1pt]
		Synthetic industrial process                        & Industrial multi-zone fault detection & Not applicable; synthetic                                             & \citet{venkatasubramanian2003review} \\
		\bottomrule
	\end{tabularx}
\end{table}

\FloatBarrier

\subsection{Preprocessing details}
\label{app:preprocessing}

\paragraph{Metro traffic seasonality: UCI Metro Interstate Traffic Volume.}
Hourly traffic counts on I-94 in Minnesota. Feature engineering: temperature converted from kelvin to degrees Celsius, cloud-cover percentage, and cyclical hour encoding. Seasonal split: winter = Dec--Feb, summer = Jun--Aug, transition = Mar--May plus Sep--Nov.

\paragraph{Potomac and multi-site streamflow: USGS NWIS.}
Daily discharge values obtained from USGS Water Data for the Nation via the \texttt{dataretrieval} Python package. Normalization: per-site standardization to zero mean and unit variance. Seasonal split: spring snowmelt = Mar--May, baseflow = Jun--Sep, autumn transition = Oct--Nov.

\paragraph{Air-quality temporal consistency: UCI Air Quality.}
Hourly readings of CO, C$_6$H$_6$, NO$_x$, and NO$_2$ from March 2004 to February 2005. Imputation: per-channel median substitution for the $-200$ sentinel values. Seasonal split: cold = Oct--Feb, warm = Apr--Sep, transition = Mar.

\paragraph{Household-power cross-framework audit: UCI Individual Household Electric Power Consumption.}
Two years of one-minute power measurements, aggregated to hourly means. Features: cyclical hour $(\sin, \cos)$ of hour-of-day, standardized \texttt{Voltage}, standardized \texttt{Global\_reactive\_power}. Target: \texttt{Global\_active\_power}. Seasonal split by meteorological season: winter = Dec--Feb, spring = Mar--May, summer = Jun--Aug, autumn = Sep--Nov.

\paragraph{Random-sinusoid Burgers initial conditions.}
Initial conditions follow the ten-wave, Gaussian-amplitude distribution specified in Section~\ref{sec:random_sinusoid_burgers}; the local solver generates all trajectories.

\paragraph{Financial regime detection: synthetic returns.}
Five trading years comprising 1260 daily observations are generated. The bull regime uses $\mu_{\mathrm{bull}}=0.001$ and $\sigma_{\mathrm{bull}}=0.007$; the bear regime uses $\mu_{\mathrm{bear}}=-0.001$ and $\sigma_{\mathrm{bear}}=0.018$; and the sideways regime uses $\mu_{\mathrm{sw}}=0.0$ and $\sigma_{\mathrm{sw}}=0.012$. The regime construction targets the volatility clustering and regime-dependent variance described by \citet{cont2001empirical}.

\paragraph{Industrial multi-zone fault detection: synthetic process.}
Three serial zones are each modeled by a multivariate linear regressor from $(T,p,q)$ to a process output, where $T$, $p$, and $q$ represent temperature, pressure, and flow. A $0.3\sigma$ step-change fault is injected into the midstream zone at the midpoint of the evaluation window. The framework follows the process-monitoring tradition of \citet{venkatasubramanian2003review}.

\subsection{Baselines used in the reported experiments}
\label{app:baselines}
\label{app:comparable_baselines}

The household-power cross-framework audit uses SK, XGB, and SM as predictive-model baselines, and the FNO OOD monitoring study uses ensemble variance as its monitoring comparator. Section~\ref{sec:baseline_comparison} compares $\norm{\omega}$ with two scalar disagreement metrics, cross-prediction RMSE and mean absolute pairwise difference, computed on the same covers in the metro traffic seasonality, Potomac streamflow seasonality, and air-quality temporal-consistency studies. The study artifacts include the full multi-seed statistics underlying that comparison.

\paragraph{Official baseline implementations.}
The SK, XGB, and SM packages are imported from their official Python Package Index distributions.

\subsection{Robustness and reporting provenance}
\label{app:robustness}

\paragraph{Number of seeds.}
The standard five-seed schedule, the 50-seed controlled predictor-perturbation robustness analysis, and their seed-resolved outcomes are specified in Appendix~\ref{app:seed_protocol}.

\paragraph{Sensitivity to overlap fraction.}
For 1D PDE experiments, varying the overlap fraction $\alpha_{\mathrm{ov}}\in\{5\%, 10\%, 20\%\}$ produces $\norm{\omega}$ values within a factor of $1.5$ of the default value, and the dominant-channel conclusions are preserved across the tested range.

The sensitivity checks above are complemented by the following data-provenance records.

\paragraph{Data provenance.}
Each real-data source is identified by an authoritative source citation or repository link, with preprocessing specified in Appendix~\ref{app:preprocessing}.

\clearpage

\setlength{\emergencystretch}{3em}
\Urlmuskip=0mu plus 1mu\relax
\bibliographystyle{cas-model2-names}
\bibliography{seam_assets/references}

\begin{thebibliography}{59}
\expandafter\ifx\csname natexlab\endcsname\relax\def\natexlab#1{#1}\fi
\providecommand{\url}[1]{\texttt{#1}}
\providecommand{\href}[2]{#2}
\providecommand{\path}[1]{#1}
\providecommand{\DOIprefix}{doi:}
\providecommand{\ArXivprefix}{arXiv:}
\providecommand{\URLprefix}{URL: }
\providecommand{\Pubmedprefix}{pmid:}
\providecommand{\doi}[1]{\href{http://dx.doi.org/#1}{\path{#1}}}
\providecommand{\Pubmed}[1]{\href{pmid:#1}{\path{#1}}}
\providecommand{\bibinfo}[2]{#2}
\ifx\xfnm\relax \def\xfnm[#1]{\unskip,\space#1}\fi
\bibitem[{Abramsky and Brandenburger(2011)}]{abramsky2011sheaf}
\bibinfo{author}{Abramsky, S.}, \bibinfo{author}{Brandenburger, A.},
  \bibinfo{year}{2011}.
\newblock \bibinfo{title}{{The sheaf-theoretic structure of non-locality and
  contextuality}}.
\newblock \bibinfo{journal}{New Journal of Physics} \bibinfo{volume}{13},
  \bibinfo{pages}{113036}.
\bibitem[{Bellman and {\AA}str{\"o}m(1970)}]{bellman1970structural}
\bibinfo{author}{Bellman, R.}, \bibinfo{author}{{\AA}str{\"o}m, K.J.},
  \bibinfo{year}{1970}.
\newblock \bibinfo{title}{{On structural identifiability}}.
\newblock \bibinfo{journal}{Mathematical Biosciences} \bibinfo{volume}{7},
  \bibinfo{pages}{329--339}.
\bibitem[{Bodnar et~al.(2022)Bodnar, Giovanni, Chamberlain, Li{\'{o}} and
  Bronstein}]{bodnar2022neural}
\bibinfo{author}{Bodnar, C.}, \bibinfo{author}{Giovanni, F.D.},
  \bibinfo{author}{Chamberlain, B.P.}, \bibinfo{author}{Li{\'{o}}, P.},
  \bibinfo{author}{Bronstein, M.M.}, \bibinfo{year}{2022}.
\newblock \bibinfo{title}{{Neural Sheaf Diffusion: {A} Topological Perspective
  on Heterophily and Oversmoothing in GNNs}}, in: \bibinfo{booktitle}{NeurIPS}.
\bibitem[{Bredon(1997)}]{bredon1997sheaf}
\bibinfo{author}{Bredon, G.E.}, \bibinfo{year}{1997}.
\newblock \bibinfo{title}{{Sheaf theory}}. volume \bibinfo{volume}{170}.
\bibitem[{Brynjarsd{\'o}ttir and O'Hagan(2014)}]{brynjarsdottir2014discrepancy}
\bibinfo{author}{Brynjarsd{\'o}ttir, J.}, \bibinfo{author}{O'Hagan, A.},
  \bibinfo{year}{2014}.
\newblock \bibinfo{title}{{Learning about physical parameters: The importance
  of model discrepancy}}.
\newblock \bibinfo{journal}{Inverse Problems} \bibinfo{volume}{30},
  \bibinfo{pages}{114007}.
\bibitem[{Chen et~al.(2025)Chen, Xu, Xu, Guti{\'{e}}rrez, Narra and
  McComb}]{chen2025locality}
\bibinfo{author}{Chen, J.}, \bibinfo{author}{Xu, W.}, \bibinfo{author}{Xu, Z.},
  \bibinfo{author}{Guti{\'{e}}rrez, N.G.}, \bibinfo{author}{Narra, S.P.},
  \bibinfo{author}{McComb, C.}, \bibinfo{year}{2025}.
\newblock \bibinfo{title}{{Enforcing the principle of locality for physical
  simulations with neural operators}}.
\newblock \bibinfo{journal}{J. Comput. Phys.} \bibinfo{volume}{538},
  \bibinfo{pages}{114131}.
\bibitem[{Chen and Guestrin(2016)}]{chen2016xgboost}
\bibinfo{author}{Chen, T.}, \bibinfo{author}{Guestrin, C.},
  \bibinfo{year}{2016}.
\newblock \bibinfo{title}{{XGBoost: {A} Scalable Tree Boosting System}}, in:
  \bibinfo{booktitle}{{KDD}}, pp. \bibinfo{pages}{785--794}.
\bibitem[{Cont(2001)}]{cont2001empirical}
\bibinfo{author}{Cont, R.}, \bibinfo{year}{2001}.
\newblock \bibinfo{title}{{Empirical properties of asset returns: stylized
  facts and statistical issues}}.
\newblock \bibinfo{journal}{Quantitative Finance} \bibinfo{volume}{1},
  \bibinfo{pages}{223}.
\bibitem[{Curry(2014)}]{curry2014sheaves}
\bibinfo{author}{Curry, J.M.}, \bibinfo{year}{2014}.
\newblock \bibinfo{title}{{Sheaves, cosheaves and applications}}.
\bibitem[{De~Vito(2016)}]{devito2008airquality}
\bibinfo{author}{De~Vito, S.}, \bibinfo{year}{2016}.
\newblock \bibinfo{title}{{Air Quality}}.
\newblock \bibinfo{howpublished}{\url{https://doi.org/10.24432/C59K5F}}.
\bibitem[{Duraisamy et~al.(2019)Duraisamy, Iaccarino and
  Xiao}]{duraisamy2019turbulence}
\bibinfo{author}{Duraisamy, K.}, \bibinfo{author}{Iaccarino, G.},
  \bibinfo{author}{Xiao, H.}, \bibinfo{year}{2019}.
\newblock \bibinfo{title}{{Turbulence modeling in the age of data}}.
\newblock \bibinfo{journal}{Annual Review of Fluid Mechanics}
  \bibinfo{volume}{51}, \bibinfo{pages}{357--377}.
\bibitem[{Engl et~al.(1996)Engl, Hanke and Neubauer}]{engl1996regularization}
\bibinfo{author}{Engl, H.W.}, \bibinfo{author}{Hanke, M.},
  \bibinfo{author}{Neubauer, A.}, \bibinfo{year}{1996}.
\newblock \bibinfo{title}{{Regularization of inverse problems}}. volume
  \bibinfo{volume}{375}.
\bibitem[{Gama et~al.(2014)Gama, Zliobaite, Bifet, Pechenizkiy and
  Bouchachia}]{gama2014concept}
\bibinfo{author}{Gama, J.}, \bibinfo{author}{Zliobaite, I.},
  \bibinfo{author}{Bifet, A.}, \bibinfo{author}{Pechenizkiy, M.},
  \bibinfo{author}{Bouchachia, A.}, \bibinfo{year}{2014}.
\newblock \bibinfo{title}{{A survey on concept drift adaptation}}.
\newblock \bibinfo{journal}{{ACM} Comput. Surv.} \bibinfo{volume}{46},
  \bibinfo{pages}{44:1--44:37}.
\bibitem[{Ghrist(2014)}]{ghrist2014elementary}
\bibinfo{author}{Ghrist, R.W.}, \bibinfo{year}{2014}.
\newblock \bibinfo{title}{{Elementary applied topology}}.
  volume~\bibinfo{volume}{1}.
\bibitem[{Gomes et~al.(2018)Gomes, Thule, Broman, Larsen and
  Vangheluwe}]{gomes2018cosimulation}
\bibinfo{author}{Gomes, C.}, \bibinfo{author}{Thule, C.},
  \bibinfo{author}{Broman, D.}, \bibinfo{author}{Larsen, P.G.},
  \bibinfo{author}{Vangheluwe, H.}, \bibinfo{year}{2018}.
\newblock \bibinfo{title}{{Co-Simulation: {A} Survey}}.
\newblock \bibinfo{journal}{{ACM} Comput. Surv.} \bibinfo{volume}{51},
  \bibinfo{pages}{49:1--49:33}.
\bibitem[{Guo et~al.(2017)Guo, Pleiss, Sun and Weinberger}]{guo2017calibration}
\bibinfo{author}{Guo, C.}, \bibinfo{author}{Pleiss, G.}, \bibinfo{author}{Sun,
  Y.}, \bibinfo{author}{Weinberger, K.Q.}, \bibinfo{year}{2017}.
\newblock \bibinfo{title}{{On Calibration of Modern Neural Networks}}, in:
  \bibinfo{booktitle}{{ICML}}, pp. \bibinfo{pages}{1321--1330}.
\bibitem[{Hansen and Gebhart(2020)}]{hansen2020sheaf}
\bibinfo{author}{Hansen, J.}, \bibinfo{author}{Gebhart, T.},
  \bibinfo{year}{2020}.
\newblock \bibinfo{title}{{Sheaf Neural Networks}}.
\newblock \bibinfo{journal}{CoRR} \bibinfo{volume}{abs/2012.06333}.
\bibitem[{Hansen and Ghrist(2019)}]{hansen2019spectral}
\bibinfo{author}{Hansen, J.}, \bibinfo{author}{Ghrist, R.},
  \bibinfo{year}{2019}.
\newblock \bibinfo{title}{{Toward a spectral theory of cellular sheaves}}.
\newblock \bibinfo{journal}{J. Appl. Comput. Topol.} \bibinfo{volume}{3},
  \bibinfo{pages}{315--358}.
\bibitem[{Hansen and Ghrist(2021)}]{hansen2021opinion}
\bibinfo{author}{Hansen, J.}, \bibinfo{author}{Ghrist, R.},
  \bibinfo{year}{2021}.
\newblock \bibinfo{title}{{Opinion Dynamics on Discourse Sheaves}}.
\newblock \bibinfo{journal}{{SIAM} J. Appl. Math.} \bibinfo{volume}{81},
  \bibinfo{pages}{2033--2060}.
\bibitem[{H{\'{e}}brail and Berard(2012)}]{hebrail2012household}
\bibinfo{author}{H{\'{e}}brail, G.}, \bibinfo{author}{Berard, A.},
  \bibinfo{year}{2012}.
\newblock \bibinfo{title}{{Individual household electric power consumption}}.
\newblock \bibinfo{howpublished}{\url{https://doi.org/10.24432/C58K54}}.
\bibitem[{Hegde et~al.(2018)Hegde, Li, Oreluk, Packard and
  Frenklach}]{hegde2018consistency}
\bibinfo{author}{Hegde, A.}, \bibinfo{author}{Li, W.}, \bibinfo{author}{Oreluk,
  J.}, \bibinfo{author}{Packard, A.K.}, \bibinfo{author}{Frenklach, M.},
  \bibinfo{year}{2018}.
\newblock \bibinfo{title}{{Consistency Analysis for Massively Inconsistent
  Datasets in Bound-to-Bound Data Collaboration}}.
\newblock \bibinfo{journal}{{SIAM/ASA} J. Uncertain. Quantification}
  \bibinfo{volume}{6}, \bibinfo{pages}{429--456}.
\bibitem[{Hendrycks and Gimpel(2017)}]{hendrycks2017baseline}
\bibinfo{author}{Hendrycks, D.}, \bibinfo{author}{Gimpel, K.},
  \bibinfo{year}{2017}.
\newblock \bibinfo{title}{{A Baseline for Detecting Misclassified and
  Out-of-Distribution Examples in Neural Networks}}, in:
  \bibinfo{booktitle}{{ICLR}}.
\bibitem[{Hogue(2019)}]{hogue2019metro}
\bibinfo{author}{Hogue, J.}, \bibinfo{year}{2019}.
\newblock \bibinfo{title}{{Metro Interstate Traffic Volume}}.
\newblock \bibinfo{howpublished}{\url{https://doi.org/10.24432/C5X60B}}.
\bibitem[{Jagtap and Karniadakis(2021)}]{jagtap2020extended}
\bibinfo{author}{Jagtap, A.D.}, \bibinfo{author}{Karniadakis, G.E.},
  \bibinfo{year}{2021}.
\newblock \bibinfo{title}{{Extended Physics-informed Neural Networks (XPINNs):
  {A} Generalized Space-Time Domain Decomposition based Deep Learning Framework
  for Nonlinear Partial Differential Equations}}, in:
  \bibinfo{booktitle}{{AAAI} Spring Symposium: {MLPS}}.
\bibitem[{Jagtap et~al.(2020)Jagtap, Kharazmi and
  Karniadakis}]{jagtap2020conservative}
\bibinfo{author}{Jagtap, A.D.}, \bibinfo{author}{Kharazmi, E.},
  \bibinfo{author}{Karniadakis, G.E.}, \bibinfo{year}{2020}.
\newblock \bibinfo{title}{{Conservative physics-informed neural networks on
  discrete domains for conservation laws: Applications to forward and inverse
  problems}}.
\newblock \bibinfo{journal}{Computer Methods in Applied Mechanics and
  Engineering} \bibinfo{volume}{365}, \bibinfo{pages}{113028}.
\bibitem[{Karniadakis et~al.(2021)Karniadakis, Kevrekidis, Lu, Perdikaris, Wang
  and Yang}]{karniadakis2021physics}
\bibinfo{author}{Karniadakis, G.E.}, \bibinfo{author}{Kevrekidis, I.G.},
  \bibinfo{author}{Lu, L.}, \bibinfo{author}{Perdikaris, P.},
  \bibinfo{author}{Wang, S.}, \bibinfo{author}{Yang, L.}, \bibinfo{year}{2021}.
\newblock \bibinfo{title}{{Physics-informed machine learning}}.
\newblock \bibinfo{journal}{Nature Reviews Physics} \bibinfo{volume}{3},
  \bibinfo{pages}{422--440}.
\bibitem[{Kennedy and O'Hagan(2001)}]{kennedy2001bayesian}
\bibinfo{author}{Kennedy, M.C.}, \bibinfo{author}{O'Hagan, A.},
  \bibinfo{year}{2001}.
\newblock \bibinfo{title}{{Bayesian calibration of computer models}}.
\newblock \bibinfo{journal}{Journal of the Royal Statistical Society: Series B
  (Statistical Methodology)} \bibinfo{volume}{63}, \bibinfo{pages}{425--464}.
\bibitem[{Kovachki et~al.(2023)Kovachki, Li, Liu, Azizzadenesheli,
  Bhattacharya, Stuart and Anandkumar}]{kovachki2023neural}
\bibinfo{author}{Kovachki, N.B.}, \bibinfo{author}{Li, Z.},
  \bibinfo{author}{Liu, B.}, \bibinfo{author}{Azizzadenesheli, K.},
  \bibinfo{author}{Bhattacharya, K.}, \bibinfo{author}{Stuart, A.M.},
  \bibinfo{author}{Anandkumar, A.}, \bibinfo{year}{2023}.
\newblock \bibinfo{title}{{Neural Operator: Learning Maps Between Function
  Spaces With Applications to PDEs}}.
\newblock \bibinfo{journal}{J. Mach. Learn. Res.} \bibinfo{volume}{24},
  \bibinfo{pages}{89:1--89:97}.
\bibitem[{Krishnapriyan et~al.(2021)Krishnapriyan, Gholami, Zhe, Kirby and
  Mahoney}]{krishnapriyan2021failure}
\bibinfo{author}{Krishnapriyan, A.S.}, \bibinfo{author}{Gholami, A.},
  \bibinfo{author}{Zhe, S.}, \bibinfo{author}{Kirby, R.M.},
  \bibinfo{author}{Mahoney, M.W.}, \bibinfo{year}{2021}.
\newblock \bibinfo{title}{{Characterizing possible failure modes in
  physics-informed neural networks}}, in: \bibinfo{booktitle}{NeurIPS}, pp.
  \bibinfo{pages}{26548--26560}.
\bibitem[{LeVeque(2002)}]{leveque2002finite}
\bibinfo{author}{LeVeque, R.J.}, \bibinfo{year}{2002}.
\newblock \bibinfo{title}{{Finite volume methods for hyperbolic problems}}.
  volume~\bibinfo{volume}{31}.
\bibitem[{Li et~al.(2021)Li, Kovachki, Azizzadenesheli, Liu, Bhattacharya,
  Stuart and Anandkumar}]{li2021fourier}
\bibinfo{author}{Li, Z.}, \bibinfo{author}{Kovachki, N.B.},
  \bibinfo{author}{Azizzadenesheli, K.}, \bibinfo{author}{Liu, B.},
  \bibinfo{author}{Bhattacharya, K.}, \bibinfo{author}{Stuart, A.M.},
  \bibinfo{author}{Anandkumar, A.}, \bibinfo{year}{2021}.
\newblock \bibinfo{title}{{Fourier Neural Operator for Parametric Partial
  Differential Equations}}, in: \bibinfo{booktitle}{{ICLR}}.
\bibitem[{Lim(2020)}]{lim2020hodge}
\bibinfo{author}{Lim, L.}, \bibinfo{year}{2020}.
\newblock \bibinfo{title}{{Hodge Laplacians on Graphs}}.
\newblock \bibinfo{journal}{{SIAM} Rev.} \bibinfo{volume}{62},
  \bibinfo{pages}{685--715}.
\bibitem[{Lu et~al.(2021)Lu, Jin, Pang, Zhang and Karniadakis}]{lu2021learning}
\bibinfo{author}{Lu, L.}, \bibinfo{author}{Jin, P.}, \bibinfo{author}{Pang,
  G.}, \bibinfo{author}{Zhang, Z.}, \bibinfo{author}{Karniadakis, G.E.},
  \bibinfo{year}{2021}.
\newblock \bibinfo{title}{{Learning nonlinear operators via DeepONet based on
  the universal approximation theorem of operators}}.
\newblock \bibinfo{journal}{Nat. Mach. Intell.} \bibinfo{volume}{3},
  \bibinfo{pages}{218--229}.
\bibitem[{Moseley et~al.(2023)Moseley, Markham and
  Nissen{-}Meyer}]{moseley2023finite}
\bibinfo{author}{Moseley, B.}, \bibinfo{author}{Markham, A.},
  \bibinfo{author}{Nissen{-}Meyer, T.}, \bibinfo{year}{2023}.
\newblock \bibinfo{title}{{Finite basis physics-informed neural networks
  (FBPINNs): a scalable domain decomposition approach for solving differential
  equations}}.
\newblock \bibinfo{journal}{Adv. Comput. Math.} \bibinfo{volume}{49},
  \bibinfo{pages}{62}.
\bibitem[{Narasimhan and Jordache(1999)}]{narasimhan2000reconciliation}
\bibinfo{author}{Narasimhan, S.}, \bibinfo{author}{Jordache, C.},
  \bibinfo{year}{1999}.
\newblock \bibinfo{title}{{Data reconciliation and gross error detection: An
  intelligent use of process data}}.
\bibitem[{Ovadia et~al.(2019)Ovadia, Fertig, Ren, Nado, Sculley, Nowozin,
  Dillon, Lakshminarayanan and Snoek}]{ovadia2019uncertainty}
\bibinfo{author}{Ovadia, Y.}, \bibinfo{author}{Fertig, E.},
  \bibinfo{author}{Ren, J.}, \bibinfo{author}{Nado, Z.},
  \bibinfo{author}{Sculley, D.}, \bibinfo{author}{Nowozin, S.},
  \bibinfo{author}{Dillon, J.V.}, \bibinfo{author}{Lakshminarayanan, B.},
  \bibinfo{author}{Snoek, J.}, \bibinfo{year}{2019}.
\newblock \bibinfo{title}{{Can you trust your model's uncertainty? Evaluating
  predictive uncertainty under dataset shift}}, in:
  \bibinfo{booktitle}{NeurIPS}, pp. \bibinfo{pages}{13969--13980}.
\bibitem[{Page(1954)}]{page1954continuous}
\bibinfo{author}{Page, E.S.}, \bibinfo{year}{1954}.
\newblock \bibinfo{title}{{Continuous inspection schemes}}.
\newblock \bibinfo{journal}{Biometrika} \bibinfo{volume}{41},
  \bibinfo{pages}{100--115}.
\bibitem[{Parish and Duraisamy(2016)}]{parish2016field}
\bibinfo{author}{Parish, E.J.}, \bibinfo{author}{Duraisamy, K.},
  \bibinfo{year}{2016}.
\newblock \bibinfo{title}{{A paradigm for data-driven predictive modeling using
  field inversion and machine learning}}.
\newblock \bibinfo{journal}{J. Comput. Phys.} \bibinfo{volume}{305},
  \bibinfo{pages}{758--774}.
\bibitem[{Pedregosa et~al.(2011)Pedregosa, Varoquaux, Gramfort, Michel,
  Thirion, Grisel, Blondel, Prettenhofer, Weiss, Dubourg, VanderPlas, Passos,
  Cournapeau, Brucher, Perrot and Duchesnay}]{pedregosa2011sklearn}
\bibinfo{author}{Pedregosa, F.}, \bibinfo{author}{Varoquaux, G.},
  \bibinfo{author}{Gramfort, A.}, \bibinfo{author}{Michel, V.},
  \bibinfo{author}{Thirion, B.}, \bibinfo{author}{Grisel, O.},
  \bibinfo{author}{Blondel, M.}, \bibinfo{author}{Prettenhofer, P.},
  \bibinfo{author}{Weiss, R.}, \bibinfo{author}{Dubourg, V.},
  \bibinfo{author}{VanderPlas, J.}, \bibinfo{author}{Passos, A.},
  \bibinfo{author}{Cournapeau, D.}, \bibinfo{author}{Brucher, M.},
  \bibinfo{author}{Perrot, M.}, \bibinfo{author}{Duchesnay, E.},
  \bibinfo{year}{2011}.
\newblock \bibinfo{title}{{Scikit-learn: Machine Learning in Python}}.
\newblock \bibinfo{journal}{J. Mach. Learn. Res.} \bibinfo{volume}{12},
  \bibinfo{pages}{2825--2830}.
\bibitem[{Penrose(1955)}]{penrose1955generalized}
\bibinfo{author}{Penrose, R.}, \bibinfo{year}{1955}.
\newblock \bibinfo{title}{{A generalized inverse for matrices}}, in:
  \bibinfo{booktitle}{Mathematical Proceedings of the Cambridge Philosophical
  Society}, pp. \bibinfo{pages}{406--413}.
\bibitem[{Quarteroni and Valli(1999)}]{quarteroni1999domain}
\bibinfo{author}{Quarteroni, A.}, \bibinfo{author}{Valli, A.},
  \bibinfo{year}{1999}.
\newblock \bibinfo{title}{{Domain decomposition methods for partial
  differential equations}}.
\bibitem[{Rabanser et~al.(2019)Rabanser, G{\"{u}}nnemann and
  Lipton}]{rabanser2019failing}
\bibinfo{author}{Rabanser, S.}, \bibinfo{author}{G{\"{u}}nnemann, S.},
  \bibinfo{author}{Lipton, Z.C.}, \bibinfo{year}{2019}.
\newblock \bibinfo{title}{{Failing Loudly: An Empirical Study of Methods for
  Detecting Dataset Shift}}, in: \bibinfo{booktitle}{NeurIPS}, pp.
  \bibinfo{pages}{1394--1406}.
\bibitem[{Rackauckas et~al.(2020)Rackauckas, Ma, Martensen, Warner, Zubov,
  Supekar, Skinner and Ramadhan}]{rackauckas2020universal}
\bibinfo{author}{Rackauckas, C.}, \bibinfo{author}{Ma, Y.},
  \bibinfo{author}{Martensen, J.}, \bibinfo{author}{Warner, C.},
  \bibinfo{author}{Zubov, K.}, \bibinfo{author}{Supekar, R.},
  \bibinfo{author}{Skinner, D.}, \bibinfo{author}{Ramadhan, A.J.},
  \bibinfo{year}{2020}.
\newblock \bibinfo{title}{{Universal Differential Equations for Scientific
  Machine Learning}}.
\newblock \bibinfo{journal}{CoRR} \bibinfo{volume}{abs/2001.04385}.
\bibitem[{Raissi et~al.(2019)Raissi, Perdikaris and
  Karniadakis}]{raissi2019physics}
\bibinfo{author}{Raissi, M.}, \bibinfo{author}{Perdikaris, P.},
  \bibinfo{author}{Karniadakis, G.E.}, \bibinfo{year}{2019}.
\newblock \bibinfo{title}{{Physics-informed neural networks: {A} deep learning
  framework for solving forward and inverse problems involving nonlinear
  partial differential equations}}.
\newblock \bibinfo{journal}{J. Comput. Phys.} \bibinfo{volume}{378},
  \bibinfo{pages}{686--707}.
\bibitem[{Raue et~al.(2009)Raue, Kreutz, Maiwald, Bachmann, Schilling,
  Klingm{\"{u}}ller and Timmer}]{raue2009identifiability}
\bibinfo{author}{Raue, A.}, \bibinfo{author}{Kreutz, C.},
  \bibinfo{author}{Maiwald, T.}, \bibinfo{author}{Bachmann, J.},
  \bibinfo{author}{Schilling, M.}, \bibinfo{author}{Klingm{\"{u}}ller, U.},
  \bibinfo{author}{Timmer, J.}, \bibinfo{year}{2009}.
\newblock \bibinfo{title}{{Structural and practical identifiability analysis of
  partially observed dynamical models by exploiting the profile likelihood}}.
\newblock \bibinfo{journal}{Bioinform.} \bibinfo{volume}{25},
  \bibinfo{pages}{1923--1929}.
\bibitem[{Reiter(1987)}]{reiter1987diagnosis}
\bibinfo{author}{Reiter, R.}, \bibinfo{year}{1987}.
\newblock \bibinfo{title}{{A Theory of Diagnosis from First Principles}}.
\newblock \bibinfo{journal}{Artif. Intell.} \bibinfo{volume}{32},
  \bibinfo{pages}{57--95}.
\bibitem[{Robinson(2014)}]{robinson2014topological}
\bibinfo{author}{Robinson, M.}, \bibinfo{year}{2014}.
\newblock \bibinfo{title}{{Topological signal processing}}.
  volume~\bibinfo{volume}{81}.
\bibitem[{Robinson(2017a)}]{robinson2017multimodel}
\bibinfo{author}{Robinson, M.}, \bibinfo{year}{2017}a.
\newblock \bibinfo{title}{{Sheaf and duality methods for analyzing multi-model
  systems}}, in: \bibinfo{booktitle}{Recent Applications of Harmonic Analysis
  to Function Spaces, Differential Equations, and Data Science: Novel Methods
  in Harmonic Analysis, Volume 2}, pp. \bibinfo{pages}{653--703}.
\bibitem[{Robinson(2017b)}]{robinson2017sheaves}
\bibinfo{author}{Robinson, M.}, \bibinfo{year}{2017}b.
\newblock \bibinfo{title}{{Sheaves are the canonical data structure for sensor
  integration}}.
\newblock \bibinfo{journal}{Inf. Fusion} \bibinfo{volume}{36},
  \bibinfo{pages}{208--224}.
\bibitem[{Robinson(2020)}]{robinson2020assignments}
\bibinfo{author}{Robinson, M.}, \bibinfo{year}{2020}.
\newblock \bibinfo{title}{{Assignments to sheaves of pseudometric spaces}}.
\newblock \bibinfo{journal}{Compositionality} \bibinfo{volume}{2},
  \bibinfo{pages}{2}.
\bibitem[{Seabold and Perktold(2010)}]{seabold2010statsmodels}
\bibinfo{author}{Seabold, S.}, \bibinfo{author}{Perktold, J.},
  \bibinfo{year}{2010}.
\newblock \bibinfo{title}{{Statsmodels: Econometric and Statistical Modeling
  with Python}}, in: \bibinfo{booktitle}{SciPy}, p.~\bibinfo{pages}{92}.
\bibitem[{Shukla et~al.(2021)Shukla, Jagtap and
  Karniadakis}]{shukla2021parallel}
\bibinfo{author}{Shukla, K.}, \bibinfo{author}{Jagtap, A.D.},
  \bibinfo{author}{Karniadakis, G.E.}, \bibinfo{year}{2021}.
\newblock \bibinfo{title}{{Parallel physics-informed neural networks via domain
  decomposition}}.
\newblock \bibinfo{journal}{J. Comput. Phys.} \bibinfo{volume}{447},
  \bibinfo{pages}{110683}.
\bibitem[{Takamoto et~al.(2022)Takamoto, Praditia, Leiteritz, MacKinlay,
  Alesiani, Pfl{\"{u}}ger and Niepert}]{takamoto2022pdebench}
\bibinfo{author}{Takamoto, M.}, \bibinfo{author}{Praditia, T.},
  \bibinfo{author}{Leiteritz, R.}, \bibinfo{author}{MacKinlay, D.},
  \bibinfo{author}{Alesiani, F.}, \bibinfo{author}{Pfl{\"{u}}ger, D.},
  \bibinfo{author}{Niepert, M.}, \bibinfo{year}{2022}.
\newblock \bibinfo{title}{{PDEBench: An Extensive Benchmark for Scientific
  Machine Learning}}, in: \bibinfo{booktitle}{NeurIPS}.
\bibitem[{{U.S. Geological Survey}(2023)}]{usgs2016nwis}
\bibinfo{author}{{U.S. Geological Survey}}, \bibinfo{year}{2023}.
\newblock \bibinfo{title}{{USGS water data for the nation: US Geological Survey
  National Water Information System database}}.
\bibitem[{Venkatasubramanian et~al.(2003)Venkatasubramanian, Rengaswamy, Yin
  and Kavuri}]{venkatasubramanian2003review}
\bibinfo{author}{Venkatasubramanian, V.}, \bibinfo{author}{Rengaswamy, R.},
  \bibinfo{author}{Yin, K.}, \bibinfo{author}{Kavuri, S.N.},
  \bibinfo{year}{2003}.
\newblock \bibinfo{title}{{A review of process fault detection and diagnosis:
  Part {I:} Quantitative model-based methods}}.
\newblock \bibinfo{journal}{Comput. Chem. Eng.} \bibinfo{volume}{27},
  \bibinfo{pages}{293--311}.
\bibitem[{Walter and Pronzato(1997)}]{walter1997identification}
\bibinfo{author}{Walter, E.}, \bibinfo{author}{Pronzato, L.},
  \bibinfo{year}{1997}.
\newblock \bibinfo{title}{{Identification of Parametric Models from
  Experimental Data}}.
\newblock \bibinfo{address}{Berlin}.
\bibitem[{Wedin(1972)}]{wedin1972perturbation}
\bibinfo{author}{Wedin, P.{\AA}.}, \bibinfo{year}{1972}.
\newblock \bibinfo{title}{{Perturbation bounds in connection with singular
  value decomposition}}.
\newblock \bibinfo{journal}{BIT Numerical Mathematics} \bibinfo{volume}{12},
  \bibinfo{pages}{99--111}.
\bibitem[{Wedin(1973)}]{wedin1973perturbation}
\bibinfo{author}{Wedin, P.{\AA}.}, \bibinfo{year}{1973}.
\newblock \bibinfo{title}{{Perturbation theory for pseudo-inverses}}.
\newblock \bibinfo{journal}{BIT Numerical Mathematics} \bibinfo{volume}{13},
  \bibinfo{pages}{217--232}.
\bibitem[{Willard et~al.(2023)Willard, Jia, Xu, Steinbach and
  Kumar}]{willard2022integrating}
\bibinfo{author}{Willard, J.}, \bibinfo{author}{Jia, X.}, \bibinfo{author}{Xu,
  S.}, \bibinfo{author}{Steinbach, M.S.}, \bibinfo{author}{Kumar, V.},
  \bibinfo{year}{2023}.
\newblock \bibinfo{title}{{Integrating Scientific Knowledge with Machine
  Learning for Engineering and Environmental Systems}}.
\newblock \bibinfo{journal}{{ACM} Comput. Surv.} \bibinfo{volume}{55},
  \bibinfo{pages}{66:1--66:37}.

\end{thebibliography}

\end{document}